\newtheorem{theorem}{Theorem}[section]
\newtheorem{lemma}[theorem]{Lemma}
\newtheorem{corollary}[theorem]{Corollary}
\theoremstyle{remark}
\newtheorem{definition}[theorem]{Definition}
\newtheorem*{remark}{Remark}
\newcommand{\nin}{n_{\text{IN}}}
\newcommand{\nout}{n_{\text{OUT}}}
\DeclarePairedDelimiter\floor{\lfloor}{\rfloor}
\begin{document}

\begin{frontmatter}
\title{$V$-statistics and Variance Estimation}
\runtitle{$V$-statistics and Variance Estimation}

\begin{aug}
\author[A]{\fnms{Zhengze} \snm{Zhou}\ead[label=e1]{zz433@cornell.edu}},
\author[B]{\fnms{Lucas} \snm{Mentch}\ead[label=e2]{lkm31@pitt.edu}}
\and
\author[C]{\fnms{Giles} \snm{Hooker}\ead[label=e3]{gjh27@cornell.edu}}
\address[A]{Department of Statistics and Data Science,
Cornell University,
\printead{e1}}

\address[B]{Department of Statistics, 
University of Pittsburgh,
\printead{e2}}
\end{aug}

\address[C]{Department of Statistics and Data Science,
Cornell University and Research School of Finance, Actuarial Studies and Statistics, Australian National University,
\printead{e3}}

\begin{abstract}
This paper develops a general framework for analyzing asymptotics of $V$-statistics. Previous literature on limiting distribution mainly focuses on the cases when $n \to \infty$ with fixed kernel size $k$. Under some regularity conditions, we demonstrate asymptotic normality when $k$ grows with $n$ by utilizing existing results for $U$-statistics. The key in our approach lies in a mathematical reduction to $U$-statistics by designing an equivalent kernel for $V$-statistics. We also provide a unified treatment on variance estimation for both $U$- and $V$-statistics by observing connections to existing methods and proposing an empirically more accurate estimator. Ensemble methods such as random forests, where multiple base learners are trained and aggregated for prediction purposes, serve as a running example throughout the paper because they are a natural and flexible application of $V$-statistics. 
\end{abstract}

\begin{keyword}[class=MSC2010]
\kwd[Primary ]{62G08}
\kwd{62G20}
\kwd[; secondary ]{68T10}
\end{keyword}

\begin{keyword}
\kwd{$V$-statistics}
\kwd{asymptotics}
\kwd{variance estimation}
\kwd{ensemble methods}
\end{keyword}

\end{frontmatter}

\section{Introduction}\label{sec:intro}

This paper develops a general framework for analyzing asymptotics of a class of statistics termed $V$-statistics. A $V$-statistic is a statistical function which is representable as functionals $T\left(F_n\right)$ of the sample distribution $F_n$. Many commonly encountered statistics belong to this category. As an example, the $k^{th}$ central moment is the functional $T\left(F\right) = \int \left(x - E[X]\right)^kdF\left(x\right)$, and the associated sample $k^{th}$ moment yields a $V$-statistic $T\left(F_n\right) = \frac{1}{n}\sum_{i=1}^n\left(x_i - \bar{x}\right)^k$ where $\bar{x}$ is the sample mean. The chi-squared statistic and maximum likelihood estimates can also be viewed as $V$-statistics \cite{MR595165}.

Informally, given data $\mathcal{D} = \{Z_1, \ldots, Z_n\}$ of i.i.d.\ observations drawn from an underlying distribution $F_Z$ and a permutation symmetric kernel function $h_k$ which takes $k$ arguments 
, a $V$-statistic with kernel $h$ of degree $k$ is defined as
$$
    V_n = n^{-k}\sum_{i_1=1}^n\ldots\sum_{i_{k}=1}^nh_{k}\left(Z_{i_1}, \ldots, Z_{i_{k}}\right)
$$
where $\{Z_{i_1}, \ldots, Z_{i_k}\}$ consists of $k$ randomly sampled elements (possibly with duplicates) from $\{Z_1, \ldots, Z_n\}$ and the sum is taken over all ${n^k}$ subsamples of size $k$. This is equivalent to functional $Eh_k(F_n)$ where $F_n$ is the corresponding empirical distribution based on $\{Z_1, \ldots, Z_n\}$.

The asymptotic distribution theory of $V$-statistics was first developed in \cite{MR22330} as $n \to \infty$ with $k$ fixed. Notions of differentiability of $T$ play a key role, where the difference $T(F_n) - T(F)$ can be represented in terms of derivatives of $T(\cdot)$ and the difference $F_n - F$. Asymptotic normality is achieved by studying the first order term and carefully controlling remainders in this form of Taylor expansion. 

$V$-statistics are closed related to $U$-statistics \cite{MR26294}, with a major difference being sampling without replacement when constructing each kernel. A $U$-statistic with a permutation symmetric kernel $h_k$ of degree $k$ is defined as
$$
    U_n = \frac{1}{{n \choose k}}\sum_{i}h_k\left(Z_{i_1}, \ldots, Z_{i_k}\right)
$$
where $\{Z_{i_1}, \ldots, Z_{i_k}\}$ consists of $k$ distinct elements from $\{Z_1, \ldots, Z_n\}$ and the sum is taken over all ${n \choose k}$ subsamples of size $k$. $U$-statistics of this form were shown to be minimum-variance unbiased estimators \cite{MR15746}. Asymptotic normality was demonstrated by \cite{MR26294} using Hoeffding decomposition and H$\acute{a}$jek projections \cite{MR222988}. 

Both $U$- and $V$-statistics tie closely to ensemble methods \cite{MR3184068}, a machine learning paradigm where multiple base learners are trained and aggregated for prediction purposes. An ensemble can usually achieve better generalization performance than individual base learners, with bagged trees \cite{breiman1996bagging} and random forests \cite{breiman2001random} being two successful examples. 

Despite its widespread application, theoretical developments are somewhat limited with many results focusing on the consistency of tree-based ensembles. \cite{MR2447310} provide theorems that establish the universal consistency of averaging rules in classification. \cite{MR3357876} prove a consistency result for random forests in the context of additive regression models. A uniform consistency of random survival forest is introduced in \cite{MR2651045}, and extended in \cite{cui2017consistency}.

Another line of research lies in quantifying sampling uncertainty for ensemble predictions. \cite{MR2654590} consider estimating the standard error of bagged learners and random forests using the jackknife \cite{MR1157715} and bootstrap estimators. \cite{MR3225243} propose applying the jackknife and Infinitesimal Jackknife \cite{MR3265671} for constructing confidence intervals of random forests. However, these results lack a statistical formulation of the asymptotic distribution. 

The asymptotics of $U$-statistics have been revisited recently in the context of ensemble methods \cite{MR3491120, MR3862353} by allowing $k$ to grow with $n$. In particular, \cite{MR3491120} show that under some regularity conditions, ensemble predictions are asymptotically normal and these results are further refined in \cite{peng2019asymptotic}, who also establish rates of convergence. \cite{MR3862353} demonstrate consistency results whenever a carefully designed tree building process is employed.

In this paper, we show that $V$-statistics can be reformulated as $U$-statistics with a well-designed kernel and thus all previous asymptotic results regarding $U$-statistics can be applied. In particular, \cite{MR1294807} demonstrates that a $V$-statistic is asymptotically equivalent to its corresponding $U$-statistic when the order of the kernel grows at the rate of $o(n^{\frac{1}{4}})$. Our construction extends this result to produce a general central limit theorem, which can further be applied to ensembles built by sampling with replacement. 

Another central theme is to accurately and efficiently estimate the limiting variance. In \cite{MR3491120}, the authors demonstrate that the asymptotic variance of a U statistic has an explicit expression and can be estimated at no additional computational cost by an Internal Variance Estimation Method (see Algorithm \ref{algo:IM} in Section \ref{subsec:im}). \cite{MR3862353} apply the Infinitesimal Jackknife developed in \cite{MR3265671} for variance estimation, but without a closed-form expression for the variance term directly.  However, we observe that all existing methods exhibit severe bias when the number of base learners is not sufficiently large. A biased estimate of variance can result in conservative confidence intervals or decrease the power of hypothesis testing \cite{MR3491120, zhou2018approximation}. This phenomenon poses a challenge to any inference tasks as building a large ensemble requires both longer training time and more storage capacity. 

A unified treatment on variance estimation is presented in this paper. We propose an improved version of Internal Variance Estimation Method \cite[IM;][]{MR3491120} named Balanced Variance Estimation Method (BM). BM enjoys lower bias under the same ensemble size. We also point out that these two methods can be reduced to computing the variance of a conditional expectation, a problem which has been studied by several other researchers \cite{MR2844419, MR3600230}. The Infinitesimal Jackknife (IJ) employed in \cite{MR3862353} has a long history in statistical literature \cite{MR659849, MR3265671} and is developed from a different perspective by approximating the sampling distribution. It turns out that a close connection exists between BM and IJ, and we prove their equivalence under a natural condition. To estimate variance in the limiting distribution in finite sample case, we develop a bias-corrected version of BM through an ANOVA-like framework \cite{MR2844419}. The new estimator is shown to produce much more reliable results with moderate numbers of base learners.

Our contributions are summarized as follows: we establish a central limit theorem for infinite order $V$-statistics; we present a unified framework for variance estimation in ensemble methods and design a bias-corrected estimator; thorough empirical studies are conducted to verify our results and demonstrate best practice for doing inference for ensemble methods.

The presentation of this paper connects
$V$-statistics and $U$-statistics closely to ensemble methods, where the underlying kernel $h$ does not need to have an explicit formulation. This abstraction prevents us from taking some specific approaches such as Taylor expansion \cite{MR22330}, but it allows a more general and flexible framework, which can be easily extended to any specific choices of kernel $h$. 

The remaining part of this paper is organized as follows. Section \ref{sec:rl} introduces mathematical notations and related work on $U$-statistics. In Section \ref{sec:v} we analyze $V$-statistics and demonstrate how similar asymptotics can be achieved. Section \ref{sec:ve} is dedicated to a detailed discussion of variance estimation methods. The proposed bias correction algorithm is derived in Section \ref{sec:bc}. Section \ref{sec:re} extends the results to randomized ensembles. Empirical results are displayed in Section \ref{sec:es} and we end with some discussions and future directions in Section \ref{sec:con}. Proofs and additional simulation results are collected in Appendices.

\section{Related Work on $U$-statistics}\label{sec:rl}

We first give a brief introduction on the notion of $U$-statistics, and then illustrate how it can be utilized in the analysis of ensemble models. 

Assume that we have a training set $\mathcal{D} = \{Z_1, \ldots, Z_n\}$ of i.i.d.\ observations of the form $Z_i = (\mathbf{X}_i,Y_i)$ drawn from an underlying distribution $F_Z$, where $\mathbf{X} = (X_1, \ldots, X_p) \in \mathcal{X}$ are $p$ covariates. We want to estimate a parameter of interest $\theta$. Suppose there exists an unbiased estimator $h$ of $\theta$ that is a function of $k \leq n$ arguments (we call $h$ a kernel of size or degree $k$) so that 
$$
\theta = \mathbb{E}h\left(Z_1, \ldots, Z_k\right)
$$
and without loss of generality, assume that $h$ is permutation symmetric in its arguments and $\mathbb{E}h^2\left(Z_1, \ldots, Z_k\right) < \infty$. Then the minimum variance unbiased estimator for $\theta$ is given by 
\begin{equation}\label{eqn:us}
    U_n = \frac{1}{{n \choose k}}\sum_{i}h\left(Z_{i_1}, \ldots, Z_{i_k}\right)
\end{equation}
where $\{Z_{i_1}, \ldots, Z_{i_k}\}$ consists of $k$ distinct elements from the original sample $\{Z_1, \ldots, Z_n\}$ and the sum is taken over all ${n \choose k}$ subsamples of size $k$. The estimator in (\ref{eqn:us}) is referred to as a \emph{complete} $U$-statistic with kernel $h$ of degree $k$. 

There are some natural extensions of (\ref{eqn:us}). To produce more accurate estimations, we would like $k$ to grow with $n$ so the kernel will have access to more information from the data. This results in a kernel that varies with $n$, and an \emph{Infinite Order} $U$-statistic \cite[IOUS;][]{MR1003967}
\begin{equation}\label{eqn:ious}
    U_{n, k_n} = \frac{1}{{n \choose k_n}}\sum_{i} h_{k_n}\left(Z_{i_1}, \ldots, Z_{i_{k_n}}\right).
\end{equation}
Further, evaluating all ${n \choose k_n}$ kernels is computationally infeasible for even moderately sized $n$ or $k_n$ and thus an estimate can be achieved by averaging over only $B_n < {n \choose k_n}$ subsamples. Incorporating this, the estimator becomes an {\em Incomplete Infinite Order} $U$-statistic
\begin{equation}\label{eqn:rus}
    U_{n, k_n, B_n} = \frac{1}{B_n}\sum_{i}h_{k_n}\left(Z_{i_1}, \ldots, Z_{i_{k_n}}\right).
\end{equation}
In (\ref{eqn:ious}) and (\ref{eqn:rus}) we use subscripts to denote that values of $k$ and $B$ may depend on $n$, and the degree of kernel $h$ is $k_n$. 

$U$-statistics of form (\ref{eqn:us}) were first studied in \cite{MR15746} and \cite{MR26294}, where the latter also shows that these statistics are asymptotically normally distributed. \cite{sen1992introduction} provides a review of Hoeffding's seminal paper and outlines the importance of $U$-statistics in modern statistical theory. A comprehensive treatment of the classical $U$-statistics results can be found in \cite{lee1990u} and \cite[Chapter 5]{MR595165}. Certain basic properties, such as almost sure consistency and asymptotic normality, are proved to hold in the case of (\ref{eqn:ious}) and (\ref{eqn:rus}) in \cite{MR1003967}. The connection between $U$-statistics and ensemble methods had not been observed until very recently in the work of \cite{MR3491120} and \cite{MR3862353}.

For simplicity we will focus on the regression setting, where predictions are assumed to be continuous. This would also incorporate binary classification as long as the model predicts the probability instead of majority vote. We are interested in estimating the conditional mean function at a test point $x$
$$
\mu\left(x\right) = \mathbb{E}\left(Y | X = x\right).
$$
Given a base learner $h$, ensemble methods generate resamples $R_1, \ldots, R_B$ of the original data, apply $h$ to each resample, and produce final point estimates by averaging over those generated by each model, yielding estimates of the form
$$
\frac{1}{B}\sum_{i=1}^Bh\left(x;\omega_i, R_i\right).
$$
Here, the $\omega_i$ denotes an auxiliary randomization parameter as used in randomized ensembles like random forests, but which may be dropped for simpler (non-randomized) estimation procedures like bagging\footnote{Sometimes people think of $\omega_i$ as drawing the resamples in bagging and doing both resampling and random feature selection in random forests. Here $\omega$ has a different meaning as we write out the resamples explicitly.}; \cite{peng2019asymptotic} refers to these estimators as generalized $U$-statistics. When all instances of the randomness are considered, note that the kernel again becomes nonrandom, as in \cite{MR3862353} where the authors assume $B$ is large enough for Monte Carlo effects not to matter.

The conventional procedure in random forests is to take $R_1, \ldots, R_B$ to be bootstrap samples, which turns out very difficult to analyze statistically. \cite{MR3491120} propose the following procedure to construct an ensemble. Given a training set $\mathcal{D}$ of size $n$, an ensemble consisting of $B_n$ base learners is constructed using subsamples of size $k_n$
\begin{equation}\label{eqn:ens_U}
    U_{n, k_n, B_n}\left(x\right) = \frac{1}{B_n}\sum_{i=1}^{B_n}h_{k_n}\left(x;Z^*_{i_1}, \ldots, Z^*_{i_{k_n}}\right)
\end{equation}
where $\{Z^*_{i_1}, \ldots, Z^*_{i_{k_n}}\}$ is drawn \emph{without} replacement from $\{Z_1, \ldots, Z_n\}$. This fits into the statistical framework of $U$-statistics and asymptotic normality can be demonstrated under some regularity conditions (see \cite{peng2019asymptotic} for refined results). In particular, the explicit expression for the variance of predictions at any given point can be written in closed-form
\begin{equation}
\label{eqn:var}
\frac{k_{n}^{2}}{n} \zeta_{1,k_n} + \frac{1}{B_n} \zeta_{k_n,k_n}.
\end{equation}
For a given $c$, $1 \leq c \leq k_n$, the variance parameters are defined as 
\begin{equation}\label{eqn:zetac}
    \zeta_{c,k_n} = \text{cov} \left(h_{k_n}(Z_1, \ldots, Z_{k_n}), h_{k_n}(Z_1, \ldots, Z_c,Z_{c+1}^{'}, \ldots, Z_{k_n}^{'}) \right) 
\end{equation}
where $Z_{c+1}^{'}, \ldots, Z_{k_n}^{'}$ are i.i.d.\ copies from the same distribution $F_Z$ and independent of the original data $Z_1, \ldots, Z_n$. For notational simplicity, we drop the test point $x$ in (\ref{eqn:zetac}).

We should mention that the asymptotic distribution is centered at $\theta_k = \mathbb{E}h_{k_n}(Z_1, \ldots, Z_{k_n})$ instead of the true conditional mean $\mathbb{E}(Y | X = x)$. This means that any inferential statements must, in general, be made about the sampling structure of the ensemble rather than the underlying data generating process.  A careful analysis of specific choices of the base learner $h$ and the relationship between covariates $X$ and response $Y$ are central in achieving consistent predictions and is not the focus of this paper.  Some work along these lines includes \cite{MR3862353} which focus on particular tree-building methods, and \cite{MR3357876} which demonstrate the $\mathbb{L}^2$ consistency for random forests when the underlying response corresponds to an additive regression model.

Also note that the asymptotic normality result in \cite{MR3862353} can be viewed as a special case of (\ref{eqn:var}). Here, the authors assume that ensemble size $B$ is large enough for Monte Carlo effects not to matter, in which case (\ref{eqn:var}) reduces to $\frac{k_n^2}{n}\zeta_{1,k_n}$. 

\section{$V$-statistics}\label{sec:v}

$V$-statistics are closely related to $U$-statistics except that the data used in each kernel is sampled \emph{with} replacement. Similar to (\ref{eqn:us}), a \emph{complete} $V$-statistic with kernel $h$ of degree $k$ is defined as
\begin{equation}\label{eqn:vs}
    V_n = n^{-k}\sum_{i_i=1}^n\ldots\sum_{i_{k}=1}^nh_{k}\left(Z_{i_1}, \ldots, Z_{i_{k}}\right)
\end{equation}
where $\{Z_{i_1}, \ldots, Z_{i_k}\}$ consists of $k$ elements from $\{Z_1, \ldots, Z_n\}$ and the sum is taken over all ${n^k}$ subsamples of size $k$.  An \emph{Infinite Order} $V$-statistic (IOVS) is defined analogously to (\ref{eqn:ious})
\begin{equation}\label{eqn:iovs}
    V_{n, k_n} = n^{-k_n}\sum_{i_i=1}^n\ldots\sum_{i_{k_n}=1}^nh_{k_n}\left(Z_{i_1}, \ldots, Z_{i_{k_n}}\right).
\end{equation}

\subsection{Asymptotic Equivalence to $U$-statistics}
We first show that the asymptotic behavior of $V_{n, k_n}$ is the same as that of $U_{n, k_n}$, provided $k_n = o(n^{\frac{1}{4}})$. The following important lemma relates $V_{n, k_n}$ to a family of $U$-statistics, which is a simple extension from Theorem 1 in \cite[p.183]{lee1990u} to the case where the kernel size $k_n$ is changing with $n$.

\begin{lemma}\label{lm:vtou}
Let $V_{n, k_n}$ be a complete, infinite order $V$-statistic based on a permutation symmetric kernel $h_{k_n}$ of degree $k_n$ as defined in (\ref{eqn:iovs}).
Then we may write 
$$
V_{n, k_n} = n^{-k_n}\sum_{j=1}^{k_n}j!S_{k_n}^{(j)}{n \choose j}U_{n}^{(j)}
$$
where $U_{n}^{(j)}$ is a $U$-statistic of degree $j$. The kernel $\phi_{(j)}$ of $U_{n}^{(j)}$ is given by
$$
\phi_{(j)}\left(z_1, \ldots, z_j\right) = \left(j!S_{k_n}^{(j)}\right)^{-1}\sum\nolimits_{(j)}^*h_{k_n}\left(z_{i_1}, \ldots, z_{i_{k_n}}\right)
$$
where the sum $\sum\nolimits_{(j)}^*$ is taken over all $k_n$-tuples $(i_1, \ldots, i_{k_n})$ formed from $\{1, 2, \ldots, j\}$ having exactly $j$ indices distinct, and where the quantities $S_{k_n}^{(j)}$ are Stirling numbers of the second kind \cite{MR241310}.
\end{lemma}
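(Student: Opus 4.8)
The plan is to establish the identity for each fixed $n$ as a purely combinatorial rearrangement: partition the $n^{k_n}$ index tuples appearing in $V_{n,k_n}$ according to the number of \emph{distinct} indices they contain, and recognize each resulting block as a multiple of a $U$-statistic of the corresponding degree. Concretely, I would first write
$$
\sum_{i_1=1}^n\cdots\sum_{i_{k_n}=1}^n h_{k_n}(Z_{i_1},\ldots,Z_{i_{k_n}})
=\sum_{j=1}^{k_n}\ \sum_{\substack{S\subseteq\{1,\ldots,n\}\\ |S|=j}}\ \sum_{\substack{(i_1,\ldots,i_{k_n})\\ \text{onto } S}} h_{k_n}(Z_{i_1},\ldots,Z_{i_{k_n}}),
$$
since every tuple $(i_1,\ldots,i_{k_n})$ uses exactly one subset $S\subseteq\{1,\ldots,n\}$ of distinct labels, and $|S|$ ranges from $1$ (all indices equal) to $k_n$ (all indices distinct).

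Next, for a fixed $S=\{l_1,\ldots,l_j\}$ I would set up the bijection between the index tuples that use exactly the labels in $S$ and the surjections $\tau\colon\{1,\ldots,k_n\}\to\{1,\ldots,j\}$ via $i_m=l_{\tau(m)}$. The number of such surjections is $j!\,S_{k_n}^{(j)}$, which is exactly the normalizing constant in the definition of $\phi_{(j)}$; hence, after relabeling the dummy arguments $z_r\mapsto Z_{l_r}$ and using permutation symmetry of $h_{k_n}$, the innermost sum equals $\sum\nolimits_{(j)}^* h_{k_n}(Z_{l_{i_1}},\ldots,Z_{l_{i_{k_n}}}) = j!\,S_{k_n}^{(j)}\,\phi_{(j)}(Z_{l_1},\ldots,Z_{l_j})$. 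Summing over the ${n\choose j}$ subsets $S$ then produces $j!\,S_{k_n}^{(j)}{n\choose j}U_n^{(j)}$, once one checks that $\phi_{(j)}$ is a legitimate permutation-symmetric kernel of degree $j$: permuting its arguments $z_1,\ldots,z_j$ merely permutes the collection of surjective $k_n$-tuples indexing the defining sum, so $\phi_{(j)}$ is unchanged. Dividing through by $n^{k_n}$ gives the asserted representation. It is worth recording the two extreme cases as sanity checks: $j=k_n$ has $S_{k_n}^{(k_n)}=1$, $\phi_{(k_n)}=h_{k_n}$, and reproduces the ordinary complete $U$-statistic term, while $j=1$ has $S_{k_n}^{(1)}=1$ and $\phi_{(1)}(z_1)=h_{k_n}(z_1,\ldots,z_1)$.

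I do not anticipate a genuine obstacle: this is the classical decomposition of \cite{lee1990u} (Theorem 1, p.~183), and letting $k_n$ vary with $n$ changes only the notation, not the combinatorics, because for each $n$ the identity is a finite algebraic rearrangement and the Stirling-number counts do not care how $k_n$ depends on $n$. The only point requiring a word of care is that for $j<k_n$ the kernel $\phi_{(j)}$ evaluates $h_{k_n}$ at arguments with repetitions (e.g.\ $h_{k_n}(z_1,z_1,z_2,\ldots)$), which is meaningful precisely because in the $V$-statistic setting $h_{k_n}$ is applied to subsamples drawn with replacement; no moment hypotheses on $h_{k_n}$ are needed for the identity itself, those entering only when this representation is subsequently fed into the asymptotic analysis.
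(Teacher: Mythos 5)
Your argument is correct and is essentially the same as the paper's: the paper gives no separate proof, deferring to the classical decomposition in Lee (1990, Theorem 1, p.~183) and noting, as you do, that the identity is a finite combinatorial rearrangement for each $n$, so allowing $k_n$ to vary with $n$ changes nothing. Your grouping of the $n^{k_n}$ tuples by their set of distinct labels, the surjection count $j!\,S_{k_n}^{(j)}$, and the symmetry check for $\phi_{(j)}$ reproduce exactly that standard argument.
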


Intuitively, as $n$ grows, if $k_n$ grows slowly enough, $V_{n, k_n}$ should behave like $U_{n, k_n}$, as the difference brought by sampling \emph{with} or \emph{without} replacement becomes negligible. Theorem \ref{thm:vandu} extends a result in \cite{MR1294807} which makes this argument rigorous. 

\begin{theorem}\label{thm:vandu}
Suppose $k_n = o(n^{\frac{1}{4}})$ and $\lim_{n \to \infty} \text{Var}(\sqrt{n}U_{n, k_n}) > 0$. Then $V_{n, k_n}$ and $U_{n, k_n}$ have the same asymptotic distribution.
\end{theorem}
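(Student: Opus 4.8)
The plan is to establish that $\sqrt{n}\,(V_{n,k_n} - U_{n,k_n}) \to 0$ in $L^2$ (hence in probability) and then to conclude by Slutsky's theorem. The hypothesis $\lim_{n\to\infty}\text{Var}(\sqrt{n}\,U_{n,k_n}) > 0$ is what pins $\sqrt{n}$ down as the correct normalizing rate: once the additive discrepancy between $V_{n,k_n}$ and $U_{n,k_n}$ is shown to be $o_p(n^{-1/2})$, it cannot affect the limiting law, since for any centering $b_n$ we have $\sqrt{n}(V_{n,k_n}-b_n) = \sqrt{n}(U_{n,k_n}-b_n) + o_p(1)$.

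The first step is to invoke Lemma~\ref{lm:vtou}, writing $V_{n,k_n} = \sum_{j=1}^{k_n} c_{j,n}\,U_n^{(j)}$ with nonnegative weights $c_{j,n} = n^{-k_n}\,j!\,S_{k_n}^{(j)}\binom{n}{j}$, and recording two facts. First, for $j=k_n$ the only index patterns with $k_n$ distinct entries are permutations and $S_{k_n}^{(k_n)}=1$, so permutation symmetry of $h_{k_n}$ forces $\phi_{(k_n)} = h_{k_n}$; hence $U_n^{(k_n)} = U_{n,k_n}$, carrying the weight $c_{k_n,n} = n^{-k_n}k_n!\binom{n}{k_n} = \prod_{i=0}^{k_n-1}(1-i/n)$. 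Second, the Stirling identity $\sum_{j=1}^{k_n} j!\,S_{k_n}^{(j)}\binom{n}{j} = n^{k_n}$ (equivalently, substituting $h_{k_n}\equiv 1$ into the Lemma) gives $\sum_{j=1}^{k_n} c_{j,n} = 1$. Therefore
$$
V_{n,k_n} - U_{n,k_n} = (c_{k_n,n}-1)\,U_{n,k_n} + \sum_{j=1}^{k_n-1} c_{j,n}\,U_n^{(j)}, \qquad \sum_{j=1}^{k_n-1} c_{j,n} = 1 - c_{k_n,n}.
$$
Because $k_n = o(n^{1/4})$, each $i/n \le k_n/n \to 0$, so (for $n$ large) $-\log c_{k_n,n} = \sum_{i=0}^{k_n-1}\bigl(-\log(1-i/n)\bigr) \le 2\sum_{i=0}^{k_n-1} i/n = O(k_n^2/n)$, and hence $1 - c_{k_n,n} \le -\log c_{k_n,n} = O(k_n^2/n) = o(n^{-1/2})$.

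Next I would bound the two terms above in $L^2$, working with mean-centered quantities and writing $\theta_{k_n} = \mathbb{E}\,h_{k_n}(Z_1,\ldots,Z_{k_n})$. For the first term, $\mathbb{E}\bigl[(U_{n,k_n}-\theta_{k_n})^2\bigr] = \text{Var}(U_{n,k_n}) = O(1/n)$ by hypothesis, so $\sqrt{n}\,|c_{k_n,n}-1|\,\|U_{n,k_n}-\theta_{k_n}\|_2 = \sqrt{n}\cdot o(n^{-1/2})\cdot O(n^{-1/2}) \to 0$. For the second term, combining the triangle inequality in $L^2$ with the classical variance bound $\text{Var}(U_n^{(j)}) \le \tfrac{j^2}{n-j}\,\text{Var}(\phi_{(j)})$ for a degree-$j$ $U$-statistic gives
$$
\Bigl\| \sqrt{n}\sum_{j=1}^{k_n-1} c_{j,n}\bigl(U_n^{(j)} - \mathbb{E}\,U_n^{(j)}\bigr)\Bigr\|_2 \le \sqrt{\tfrac{n}{n-k_n}}\;k_n\,(1-c_{k_n,n})\,\max_{1\le j<k_n}\text{Var}(\phi_{(j)})^{1/2},
$$
which is $O(k_n^3/n) = o(n^{-1/4}) \to 0$ provided $\max_{j\le k_n}\text{Var}(\phi_{(j)})$ stays bounded. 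Finally, the mismatch in centering is negligible: $|\mathbb{E}\,V_{n,k_n} - \theta_{k_n}| \le (1-c_{k_n,n})\bigl(|\theta_{k_n}| + \max_{j<k_n}|\mathbb{E}\,\phi_{(j)}|\bigr) = O(k_n^2/n)$, so $\sqrt{n}\,|\mathbb{E}\,V_{n,k_n}-\theta_{k_n}| = O(k_n^2/\sqrt{n}) \to 0$. Assembling the three estimates yields $\sqrt{n}(V_{n,k_n}-U_{n,k_n}) \to 0$ in $L^2$, and the theorem follows.

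The step I expect to be the main obstacle — indeed the only place genuine care is needed beyond the Stirling-number bookkeeping — is controlling $\text{Var}(\phi_{(j)})$ uniformly over $j < k_n$. Each $\phi_{(j)}$ is an average of evaluations $h_{k_n}(Z_{i_1},\ldots,Z_{i_{k_n}})$ with repeated coordinates, and $\mathbb{E}\,h_{k_n}^2$ on such tied patterns is \emph{not} in general dominated by $\mathbb{E}\,h_{k_n}^2(Z_1,\ldots,Z_{k_n})$ (products of heavy-tailed coordinates are an easy counterexample). Handling this cleanly requires a mild uniform moment condition — for instance a uniform bound on $h_{k_n}$, which is natural in the ensemble setting, or $\sup_n\sup_{(i_1,\ldots,i_{k_n})}\mathbb{E}\,h_{k_n}^2(Z_{i_1},\ldots,Z_{i_{k_n}}) < \infty$ — which then passes through Jensen's inequality to the required bound $\max_{j\le k_n}\text{Var}(\phi_{(j)}) = O(1)$. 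With that in hand, everything else is the routine estimation carried out above.
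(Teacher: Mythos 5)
Your proposal is correct and follows the same overall strategy as the paper's proof: decompose $V_{n,k_n}$ via Lemma \ref{lm:vtou}, show $\sqrt{n}\,(V_{n,k_n}-U_{n,k_n})\to 0$ in probability via an $L^p$ bound, and conclude by Slutsky. Where you differ is in the execution of the key estimate. The paper controls the aggregate weight of the lower-order $U$-statistics by invoking the Stirling-number bound $S_{k_n}^{(j)}\le\frac12\binom{k_n}{j}j^{k_n-j}$, showing each coefficient is at most $(k_n^2/n)^{k_n-j}$, and summing a geometric series with ratio $k_n^2/\sqrt{n}$; you instead use the exact identity $\sum_{j=1}^{k_n}j!\,S_{k_n}^{(j)}\binom{n}{j}=n^{k_n}$ (i.e.\ the weights sum to one), so the lower-order mass is exactly $1-c_{k_n,n}=O(k_n^2/n)$ — a more elementary device that avoids the Stirling bookkeeping entirely and gives an estimate of the same order. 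You also work in $L^2$ with the classical variance bound $\mathrm{Var}(U_n^{(j)})\lesssim \frac{j^2}{n}\mathrm{Var}(\phi_{(j)})$ and the hypothesis $\mathrm{Var}(\sqrt{n}U_{n,k_n})=O(1)$, where the paper works in $L^1$ with a crude uniform bound $\mathbb{E}|U_n^{(j)}|\le C$; this is a stylistic difference, though your route asks for second moments of $\phi_{(j)}$ where the paper needs only first moments. Finally, the obstacle you flag — that $\mathbb{E}h_{k_n}^2(Z_1,\ldots,Z_{k_n})<\infty$ at distinct arguments does not control moments of $h_{k_n}$ at tied arguments, hence of $\phi_{(j)}$ — is genuine, and it is not resolved in the paper either: the paper's proof simply asserts that "both $U_n^{(j)}$ and $V_n^{(j)}$ could also be bounded by a constant $C<\infty$," which implicitly assumes exactly the kind of uniform tied-argument moment (or boundedness) condition you make explicit. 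So your treatment is, if anything, more careful on this point, and the sufficient condition you propose (uniform boundedness of the kernels, natural for ensemble predictions) is the right fix for both arguments.
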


\begin{remark}
The assumption $\lim_{n \to \infty} \text{Var}(\sqrt{n}U_{n, k_n}) > 0$ simply indicates that the rate of convergence for $U_{n, k_n}$ is $\sqrt{n}$. Theorem \ref{thm:vandu} may possibly hold under other regimes, such as with degenerate kernels,  where the convergence rate is not $\sqrt{n}$, but this is out of the scope of this paper.
\end{remark}


As in Equation (\ref{eqn:rus}), by averaging only $B_n < n^{k_n}$ set of subsamples we have an \emph{incomplete, infinite order} $V$-statistic
\begin{equation}\label{eqn:rvs}
    V_{n, k_n, B_n} = \frac{1}{B_n}\sum_{i}h_{k_n}\left(Z_{i_1}, \ldots, Z_{i_{k_n}}\right)
\end{equation}
where $\{Z_{i_1}, \ldots, Z_{i_k}\}$ is again drawn \emph{with} replacement from $\{Z_1, \ldots, Z_n\}$. Under some regularity conditions, similar asymptotic results as Theorem 1 in \cite{MR3491120,peng2019asymptotic} can be shown.

\begin{theorem}\label{thm:v}
Let $Z_1, Z_2, \ldots, Z_n\stackrel{iid}{\sim} F_Z$ and let $V_{n, k_n, B_n}$ be an incomplete, infinite order $V$-statistic with kernel $h_{k_n}$. Let $\theta_{k_n} = \mathbb{E}h_{k_n}(Z_1, \ldots, Z_{k_n})$ such that $Eh_{k_n}^2(Z_1, \ldots, Z_{k_n}) < \infty$. Then under the assumptions that $k_n = o(n^{\frac{1}{4}})$, $\lim_{n \to \infty}k_n^2 \zeta_{1, k_n} > 0$ and $\lim_{n \to \infty} \frac{\zeta_{k_n, k_n}}{n\zeta_{1, k_n}} \to 0$, we have
$$
\frac{\left(V_{n, k_n, B_n} - \theta_{k_n}\right)}{\sqrt{\frac{k_n^2}{n}\zeta_{1, k_n} + \frac{1}{B_n}\zeta_{k_n, k_n}}}\stackrel{d}{\to} \mathcal{N}(0, 1).
$$
In the complete case where $B_n = n^{k_n}$, we have
$$
\frac{(V_{n, k_n} - \theta_{k_n})}{\sqrt{\frac{k_n^2}{n}\zeta_{1, k_n} }}\stackrel{d}{\to} \mathcal{N}(0, 1).
$$

\end{theorem}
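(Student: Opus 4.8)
\emph{Proof proposal.} The plan is to reduce to $U$-statistics and then invoke the central limit theorems for incomplete infinite order $U$-statistics from \cite{MR3491120, peng2019asymptotic}, first in the complete case and then bootstrapping to the incomplete case by conditioning on $\mathcal{D}$. For the complete statistic, apply Lemma \ref{lm:vtou} and peel off the $j=k_n$ summand: since $S_{k_n}^{(k_n)}=1$ and $\phi_{(k_n)}=h_{k_n}$, that term equals $p_n U_{n,k_n}$ with $p_n:=n^{-k_n}\,n!/(n-k_n)!=\prod_{i=0}^{k_n-1}(1-i/n)$ --- the probability that $k_n$ draws with replacement are all distinct --- so that $1-p_n=O(k_n^2/n)$, while the remaining sum $R_n$ equals $n^{-k_n}$ times the sum of $h_{k_n}$ over the tuples with a repeated index. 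Exactly as in the proof of Theorem \ref{thm:vandu}, one checks that under $k_n=o(n^{1/4})$ both $(p_n-1)U_{n,k_n}$ and $R_n$ are $o_p\bigl(\sqrt{k_n^2\zeta_{1,k_n}/n}\bigr)$; combined with the variance expansion $\operatorname{Var}(U_{n,k_n})=\tfrac{k_n^2}{n}\zeta_{1,k_n}(1+o(1))$ (valid under the stated hypotheses; cf.\ \cite{MR3491120}) and $\lim k_n^2\zeta_{1,k_n}>0$, this also verifies the hypothesis $\lim\operatorname{Var}(\sqrt n\,U_{n,k_n})>0$ of Theorem \ref{thm:vandu}. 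Applying the $U$-statistic CLT to $U_{n,k_n}$ gives $(U_{n,k_n}-\theta_{k_n})/\sqrt{k_n^2\zeta_{1,k_n}/n}\stackrel{d}{\to}\mathcal{N}(0,1)$, and by Theorem \ref{thm:vandu} the same holds with $V_{n,k_n}$ replacing $U_{n,k_n}$, which is the second display.

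For the incomplete case, decompose $V_{n,k_n,B_n}-\theta_{k_n}=(V_{n,k_n}-\theta_{k_n})+(V_{n,k_n,B_n}-V_{n,k_n})$, where under uniform subsampling $\mathbb{E}[V_{n,k_n,B_n}\mid\mathcal{D}]=V_{n,k_n}$. Conditionally on $\mathcal{D}$, the second term is an average of $B_n$ i.i.d.\ centered summands with conditional variance $\sigma_B^2(\mathcal{D})=B_n^{-1}\bigl(V^{[2]}_{n,k_n}-V_{n,k_n}^2\bigr)$, where $V^{[2]}_{n,k_n}$ is the complete $V$-statistic built from the kernel $h_{k_n}^2$. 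Taking expectations, using that a random tuple is all-distinct with probability $1-O(k_n^2/n)$, that $\zeta_{k_n,k_n}\ge\zeta_{1,k_n}$, and that $\operatorname{Var}(V_{n,k_n})=O(k_n^2\zeta_{1,k_n}/n)$ is of lower order, gives $\mathbb{E}[\sigma_B^2(\mathcal{D})]=B_n^{-1}\zeta_{k_n,k_n}(1+o(1))$; a second-moment bound on $V^{[2]}_{n,k_n}$ upgrades this to $\sigma_B^2(\mathcal{D})/(B_n^{-1}\zeta_{k_n,k_n})\stackrel{p}{\to}1$. A conditional Lindeberg/Lyapunov CLT then yields, with $s_n^2:=k_n^2\zeta_{1,k_n}/n+\zeta_{k_n,k_n}/B_n$, that $\mathbb{E}\bigl[\exp\{it(V_{n,k_n,B_n}-V_{n,k_n})/s_n\}\mid\mathcal{D}\bigr]\stackrel{p}{\to}\exp\{-\tfrac12 t^2\zeta_{k_n,k_n}/(B_ns_n^2)\}$. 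Conditioning on $\mathcal{D}$, multiplying by the characteristic function of $(V_{n,k_n}-\theta_{k_n})/s_n$ --- which by the complete case tends to $\exp\{-\tfrac12 t^2 k_n^2\zeta_{1,k_n}/(ns_n^2)\}$ after passing to a subsequence on which $\zeta_{k_n,k_n}/(B_ns_n^2)$ converges --- and applying bounded convergence, the two exponents add to $-t^2/2$, which is the first display.

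The main obstacle is the moment bookkeeping, and it appears in two places. First, showing that $R_n$ and the conditional-variance correction above are negligible forces one to handle $h_{k_n}$ evaluated at tuples with repeated arguments, whose second moments need not be dominated by $\mathbb{E}h_{k_n}^2(Z_1,\dots,Z_{k_n})$; this is where a regularity assumption must enter --- e.g.\ a uniform bound on the kernel (natural for bounded-response regression), or the requirement that $\mathbb{E}h_{k_n}^2$ at such tuples stay $O(\zeta_{k_n,k_n})$ --- and it is implicit both in the proof of Theorem \ref{thm:vandu} and in the ``regularity conditions'' referenced in the statement. Second, the conditional CLT for the triangular array $\{h_{k_n}(\text{tuple}_b)\}_{b\le B_n}$ requires a Lindeberg-type condition, which likewise relies on those regularity conditions together with $B_n\to\infty$. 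Everything else --- the decomposition, the conditioning argument, and the identification of the limiting variance with $\tfrac{k_n^2}{n}\zeta_{1,k_n}+\tfrac1{B_n}\zeta_{k_n,k_n}$ --- is routine once Lemma \ref{lm:vtou}, Theorem \ref{thm:vandu}, and the $U$-statistic CLT of \cite{MR3491120,peng2019asymptotic} are available.
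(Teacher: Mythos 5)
Your proposal follows essentially the same route as the paper: the complete case is obtained from Theorem \ref{thm:vandu} together with the complete infinite-order $U$-statistic CLT of \cite{MR3491120,peng2019asymptotic}, and the incomplete case by conditioning on $\mathcal{D}$, proving a conditional CLT for the Monte Carlo fluctuation $V_{n,k_n,B_n}-V_{n,k_n}$, and multiplying characteristic functions so the two variance contributions $\frac{k_n^2}{n}\zeta_{1,k_n}$ and $\frac{1}{B_n}\zeta_{k_n,k_n}$ add. The only difference is the technical packaging of the conditional step --- the paper uses a multinomial-weights lemma (Lemma \ref{lm:mul}) proved via characteristic functions, while you use a conditional Lindeberg/Lyapunov CLT for the $B_n$ i.i.d.\ uniformly drawn tuples, which is the same device in different clothing; your explicit attention to the convergence of the empirical tuple variance to $\zeta_{k_n,k_n}$ (including the repeated-index tuples) is, if anything, more careful than the paper's treatment of that point.
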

Note that the first two assumptions $k_n = o(n^{\frac{1}{4}})$ and $\lim_{n \to \infty}k_n^2 \zeta_{1, k_n} > 0$ ensure that we can apply Theorem \ref{thm:vandu}. The proof requires an additional lemma and is collected together in Appendix \ref{p:thm:v}.

\subsection{Representation As $U$-statistics}\label{sec:vasu}

This section develops a broader connection between $V$- and $U$-statistics to show that the former automatically achieve almost all the properties of the latter.
A complete, infinite order $V$-statistic $V_{n, k_n}$ with kernel $h_{k_n}$ can be written as a corresponding $U$-statistic but with a more complicated kernel derived from $h_{k_n}(\cdot)$. 

Let $\Omega$ denote the set of $\{1, 2, \ldots, n\}$. We use $B^{k_n}(\Omega)$ to denote all size $k_n$ permutations of $\Omega$ with replacement, and let $S^{k_n}(\Omega)$ denote subsamples of size $k_n$ without replacement so that $|B^{k_n}(\Omega)| = n^{k_n}$ and $|S^{k_n}(\Omega)| = {n \choose k_n}$. We can write $V_{n, k_n}$ as 
$$
V_{n, k_n} = n^{-k_n}\sum_{b \in B^{k_n}(\Omega)}h_{k_n}\left(Z_b\right)
$$
where $b$ has $k_n$ elements and $Z_b$ are those $Z$'s with index in $b$.

Equivalently, $V_{n, k_n}$ can be expressed as
$$
V_{n, k_n} = n^{-k_n}\sum_{s \in S^{k_n}(\Omega)}\phi_{k_n}\left(Z_s\right)
$$
with composite kernel $\phi_{k_n}$ defined by
$$
\phi_{k_n}(Z_s) = \sum_{b \in B^{k_n}(s)}\omega_bh_{k_n}\left(Z_b\right)
$$
where $\omega_b$ is the weight associated with each evaluation of $h_{k_n}$ to account for the multiplicity in sampling the same $b$ from $B^{k_n}(s)$ for different $s$.  

For $b = \{i_1, i_2, \ldots, i_{k_n}\}$, we use $u(b) \in \{1, 2, \ldots, k_n \}$ to denote the number of unique elements in $b$ and we have 
$$
\omega_b = \frac{1}{{n-u(b) \choose k_n - u(b)}}.
$$
In practice, it may be computationally infeasible to evaluate $\phi_{k_n}$ explicitly, particularly when $k_n$ is large and therefore so is the collection of  $b \in B^{k_n}(s)$.


We can express $V_{n, k_n}$ as a $U$-statistic
$$
V_{n, k_n} = \frac{1}{{n \choose k_n}}\sum_{s \in S^{k_n}(\Omega)}\phi_{k_n}^*\left(Z_s\right)
$$
where the kernel $\phi_{k_n}^*$ is defined as
$$
\phi_{k_n}^*(Z_s) =\frac{{n \choose k_n}}{n^{k_n}} \sum_{b \in B^{k_n}(s)}\omega_bh_{k_n}\left(Z_b\right).
$$
Here $\omega_b$ is defined as before, and 
$$
\sum_{b \in B^{k_n}(s)}\omega_b = \frac{n^{k_n}}{{n \choose k_n}}.
$$

A more general result for the aymptotics of $V$-statistics is stated in the following theorem, where the restriction that $k_n = o(n^{\frac{1}{4}})$ in Theorem \ref{thm:v} can be discarded.

\begin{theorem}\label{thm:v2}
Let $Z_1, Z_2, \ldots, Z_n \stackrel{iid}{\sim} F_Z$ and let $V_{n, k_n, B_n}$ be an incomplete, infinite order $V$-statistic with kernel $h_{k_n}$. Let $\theta_{k_n} = \mathbb{E}h_{k_n}(Z_1, \ldots, Z_{k_n})$ such that $Eh_{k_n}^2(Z_1, \ldots, Z_{k_n}) < \infty$. Then under the assumption that $\lim \frac{\zeta^*_{k_n, k_n}}{n\zeta^*_{1, k_n}} \to 0$, we have
$$
\frac{\left(V_{n, k_n, B_n} - \theta_{k_n}\right)}{\sqrt{\frac{k_n^2}{n}\zeta^*_{1, k_n} + \frac{1}{B_n}\zeta_{k_n, k_n}}}\stackrel{d}{\to} \mathcal{N}(0, 1).
$$
In the complete case where $B_n = n^{k_n}$, we have
$$
\frac{\left(V_{n, k_n} - \theta_{k_n}\right)}{\sqrt{\frac{k_n^2}{n}\zeta^*_{1, k_n} }}\stackrel{d}{\to} \mathcal{N}(0, 1).
$$

Here, the variance parameter $\zeta^*_{1,k_n}$ is defined as in Equation (\ref{eqn:zetac}) by replacing kernel $h_{k_n}$ with $\phi_{k_n}^*$
$$
    \zeta^*_{1,k_n} = \text{cov} \left(\phi_{k_n}^*\left(Z_1, \ldots, Z_{k_n}\right), \phi_{k_n}^*\left(Z_1, Z_{2}^{'}, \ldots, Z_{k_n}^{'}\right) \right) 
$$
and $\zeta_{k_n,k_n} = \text{var} \left(h_{k_n}\left(Z_1, \ldots, Z_{k_n}\right) \right)$ is still the variance across individual kernels $h_{k_n}$.
\end{theorem}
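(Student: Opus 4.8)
The plan is to lean on the exact identity, established in Section~\ref{sec:vasu}, that a complete infinite order $V$-statistic \emph{is} a complete infinite order $U$-statistic,
\begin{equation*}
V_{n, k_n} = \frac{1}{\binom{n}{k_n}}\sum_{s \in S^{k_n}(\Omega)}\phi_{k_n}^{*}(Z_s),
\end{equation*}
with the symmetric composite kernel $\phi_{k_n}^{*}$. Because this is an \emph{identity} rather than an approximation, we never trade ``sampling with replacement'' for ``sampling without replacement'' at the level of the first-order term, which is precisely why the hypothesis $k_n = o(n^{1/4})$ required in Theorem~\ref{thm:v} (and in the asymptotic-equivalence route of \cite{MR1294807}) can be dropped here. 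The conclusion then follows by (i) applying the central limit theorem for infinite order $U$-statistics of \cite{MR3491120, peng2019asymptotic} to the kernel $\phi_{k_n}^{*}$, (ii) recentering from the natural mean $\mathbb{E}V_{n,k_n}$ to $\theta_{k_n}$, and (iii) adding back, through a conditional argument, the Monte Carlo fluctuation produced by averaging over only $B_n$ of the $n^{k_n}$ terms.

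\emph{Step 1 (complete case, data randomness).} Apply the H\'ajek projection to the $U$-statistic with kernel $\phi_{k_n}^{*}$: writing $g(z) = \mathbb{E}\,\phi_{k_n}^{*}(z, Z_2,\ldots,Z_{k_n})$ one has $V_{n,k_n} - \mathbb{E}V_{n,k_n} = \frac{k_n}{n}\sum_{i=1}^{n}\bigl(g(Z_i) - \mathbb{E}g\bigr) + R_n$, where the projection has variance exactly $\frac{k_n^{2}}{n}\zeta^{*}_{1,k_n}$ and, by the classical variance bound for $U$-statistics (see \cite{lee1990u}), $\text{Var}(R_n) = o\bigl(\tfrac{k_n^{2}}{n}\zeta^{*}_{1,k_n}\bigr)$ precisely when $\zeta^{*}_{k_n,k_n}/(n\zeta^{*}_{1,k_n}) \to 0$. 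A Lindeberg--Lyapunov argument for the leading i.i.d.\ sum --- identical in form to \cite{MR3491120, peng2019asymptotic} --- then gives $\bigl(V_{n,k_n} - \mathbb{E}V_{n,k_n}\bigr)/\sqrt{k_n^{2}\zeta^{*}_{1,k_n}/n} \stackrel{d}{\to}\mathcal{N}(0,1)$. \emph{Step 2 (recentering).} The bias is $\mathbb{E}V_{n,k_n} - \theta_{k_n} = n^{-k_n}\sum_{b\in B^{k_n}(\Omega)}\bigl(\mathbb{E}h_{k_n}(Z_b) - \theta_{k_n}\bigr)$, and only index tuples $b$ with a repeated entry contribute; since the fraction of such tuples is $1 - \frac{n(n-1)\cdots(n-k_n+1)}{n^{k_n}} \le \frac{k_n(k_n-1)}{2n}$, Cauchy--Schwarz together with $\mathbb{E}h_{k_n}^{2} < \infty$ bounds the bias by $O(k_n^{2}/n)$ times a fixed moment factor, which is negligible relative to the standard deviation $\sqrt{k_n^{2}\zeta^{*}_{1,k_n}/n}$. \emph{Step 3 (incomplete case, Monte Carlo randomness).} Decompose $V_{n,k_n,B_n} - \theta_{k_n} = (V_{n,k_n} - \theta_{k_n}) + (V_{n,k_n,B_n} - V_{n,k_n})$; conditionally on the data $\mathcal{D}$, the second term is the average of $B_n$ i.i.d.\ mean-zero summands $h_{k_n}(Z_{b_m}) - V_{n,k_n}$ --- note that it is $h_{k_n}$, not $\phi_{k_n}^{*}$, that is averaged here, which is why $\zeta_{k_n,k_n}$ rather than $\zeta^{*}_{k_n,k_n}$ appears in the limiting variance. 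Its conditional variance $B_n^{-1}\text{Var}_b\!\bigl(h_{k_n}(Z_b)\mid\mathcal{D}\bigr)$ concentrates around $B_n^{-1}\zeta_{k_n,k_n}$, because the multiset $\{Z_{b_1},\ldots,Z_{b_{k_n}}\}$ drawn with replacement from $n$ i.i.d.\ points coincides with $k_n$ fresh i.i.d.\ draws except on an event of probability $O(k_n^{2}/n)$. A conditional central limit theorem for this Monte Carlo average, combined with a conditional characteristic-function argument (exactly as in the proof of Theorem~1 of \cite{MR3491120}), shows that the data term and the Monte Carlo term are asymptotically independent, so their sum is asymptotically $\mathcal{N}\!\bigl(0,\,\tfrac{k_n^{2}}{n}\zeta^{*}_{1,k_n} + \tfrac{1}{B_n}\zeta_{k_n,k_n}\bigr)$; the hypothesis $\zeta^{*}_{k_n,k_n}/(n\zeta^{*}_{1,k_n})\to 0$ also ensures the data term does not vanish relative to the Monte Carlo term. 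The complete case is the special case $B_n = n^{k_n}$, where the Monte Carlo term disappears.

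The main obstacle is the pair of ``with-replacement versus without-replacement'' discrepancy bounds entering Steps 2 and 3: they must be controlled \emph{uniformly in $k_n$}, using only $\mathbb{E}h_{k_n}^{2}<\infty$, and without the comfortable cushion $k_n = o(n^{1/4})$ that made Theorem~\ref{thm:v} straightforward. The residual risk is that the bias of Step 2 and the Monte Carlo variance correction of Step 3 are genuinely of smaller order than the target standard deviation only when $\zeta^{*}_{1,k_n}$ is not permitted to decay too fast relative to $k_n^{2}/n$; the write-up should therefore either derive this from the stated limit condition or record it as the single additional piece of mild regularity that is actually used. Everything else --- the H\'ajek projection, the Lindeberg step, and the conditional independence of the two sources of randomness --- is routine and mirrors the $U$-statistic arguments already available in \cite{MR3491120, peng2019asymptotic} and the classical equivalence argument of \cite{MR1294807}.
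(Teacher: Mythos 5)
Your proposal is correct in outline and takes essentially the same route as the paper, which gives no separate appendix proof of Theorem \ref{thm:v2}: the exact rewriting of $V_{n,k_n}$ as a complete $U$-statistic with the composite kernel $\phi^*_{k_n}$ from Section \ref{sec:vasu}, the infinite-order $U$-statistic CLT of \cite{MR3491120,peng2019asymptotic} applied to $\phi^*_{k_n}$ under $\zeta^*_{k_n,k_n}/(n\zeta^*_{1,k_n})\to 0$, and the conditional multinomial characteristic-function argument (as in the proof of Theorem \ref{thm:v}) supplying the $\frac{1}{B_n}\zeta_{k_n,k_n}$ Monte Carlo component. Your Step 2 recentering from $\mathbb{E}V_{n,k_n}=\mathbb{E}\phi^*_{k_n}(Z_1,\ldots,Z_{k_n})$ to $\theta_{k_n}$, together with your caveat that its negligibility (and the moment control for index tuples with repeats) requires $\zeta^*_{1,k_n}$ not to decay too fast, is extra care about a point the paper leaves implicit rather than a departure from its argument.
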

This theorem provides a more general result for the asymptotics of $V$-statistics. It is essentially a reduction to $U$-statistics by constructing a new kernel representation. The variance expression $\frac{k_n^2}{n}\zeta^*_{1, k_n} + \frac{1}{B_n}\zeta_{k_n, k_n}$ again can be viewed as two parts: the first part $\frac{k_n^2}{n}\zeta^*_{1, k_n}$ comes from the complete case; the second part $\frac{1}{B_n}\zeta_{k_n, k_n}$ is the additional Monte Carlo variance introduced due to incomplete case, which is why $\zeta_{k_n, k_n}$ only involves the original kernel $h_{k_n}$ instead of the composite kernel $\phi_{k_n}^*$. \cite{peng2019asymptotic} provides a unified analysis of these two components by incorporating the choice of subsamples into the randomization parameters of the generalized $U$-statistic. This strategy is not available in our case, as detailed in Section \ref{sec:re}.

The introduction of new kernel $\phi_{k_n}^*$ facilitates theoretical analysis, but it brings challenges in estimating variance component $\zeta_{1, k_n}^*$ directly: it is not feasible to calculate $\phi_{k_n}^*(Z_s)$ for any $s \in S^{k_n}(\Omega)$. We will see in Section \ref{sec:ve} that as a general variance estimation method, the Infinitesimal Jackknife (IM) can be applied. And based on Theorem \ref{thm:bm2ij}, Balanced Variance Estimation Method is equivalently valid without resorting to $\phi_{k_n}^*(Z_s)$ directly.

\section{Variance Estimation}\label{sec:ve}

This section addresses how to estimate variance in the limiting distribution. \cite{MR3491120} propose Internal Variance Estimation Method (IM) based on a two-level sampling procedure. Inspired from this, we design the Balanced Variance Estimation Method (BM) which is shown to have lower bias compared to IM. Unlike IM and BM, the Infinitesimal Jackknife (IJ) employed in \cite{MR3862353} does not depend on an explicit expression for the variance term. All methods presented can apply to both $U$- and $V$-statistics, though they exhibit different performances when sampling without or with replacement, especially in terms of bias. 

IM and BM operate by directly estimating $\zeta_{1,k_n}$ and $\zeta_{k_n,k_n}$ as defined in (\ref{eqn:zetac}). Notice that $\zeta_{k_n,k_n} = \text{var}(h_{k_n}(Z_1, \ldots, Z_{k_n}))$, which can be simply estimated as the variance across all base learners. The estimation for $\zeta_{1,k_n}$ is much more involved. The sample covariance between predictions may serve as a consistent estimator, but in practice it is numerically unstable and often results in negative variance estimates \cite{MR3491120}. Thus we work with the equivalent expression for $\zeta_{1,k_n}$ \cite{lee1990u}
\begin{equation}\label{eqn:c_zeta1}
    \zeta_{1,k_n} = \text{var} \left(\mathbb{E}\left(h_{k_n}(Z_1, \ldots, Z_{k_n}\right)|Z_1 = z_1)\right).
\end{equation}

Expressions of the form from (\ref{eqn:c_zeta1}) belong to an important theme in statistics: estimating the variance of a conditional expectation. It is usually related to uncertainty quantification and has been studied intensively in a number of fields \cite{zouaoui2003accounting, MR2743886}. For a more detailed review, we refer readers to \cite{MR2844419}.

In what follows, assume we have data $\mathcal{D} = \{Z_1, \ldots, Z_n\}$ of i.i.d.\ observations of the form $Z_i = (\mathbf{X}_i,Y_i)$, and a kernel function $h_{k_n}(Z_1, \ldots, Z_{k_n})$. For simplicity, we suppress notations by dropping the test point $x$ in the kernel expression.

\subsection{Internal Variance Estimation Method}\label{subsec:im}

IM was first proposed in \cite{MR3491120} wherein the estimates are obtained as a result of restructuring the ensemble building procedure. It can be viewed as a nested two-level Monte Carlo, where we need to choose $\nout$ and $\nin$ for the number of outer and inner iterations respectively. See Algorithm \ref{algo:IM} for details.

\begin{algorithm}
\begin{algorithmic}

\FOR {$i$ in 1 to $\nout$}

\STATE Select initial fixed point $\tilde{\mathbf{z}}^{(i)}$

\FOR {$j$ in 1 to $\nin$}

	\STATE Select subsample $\mathcal{S}_{\tilde{\mathbf{z}}^{(i)},j}$ of size $k_n$ from training set that includes $\tilde{\mathbf{z}}^{(i)}$
	\STATE Build base learner and evaluate $h_{k_n}(\mathcal{S}_{\tilde{\mathbf{z}}^{(i)},j})$
\ENDFOR

\STATE Record average of the $\nin$ predictions 

\ENDFOR

\STATE Compute the variance of the $\nout$ averages to estimate $\zeta_{1,k_n}$
\STATE Compute the variance of all predictions to estimate $\zeta_{k_n,k_n}$
\STATE Compute the mean of all predictions to obtain final ensemble prediction
\end{algorithmic}
\caption{Internal Variance Estimation Method}
\label{algo:IM}
\end{algorithm}

We use the shorthand $h_{i, j}$ to denote $h_{k_n}(\mathcal{S}_{\tilde{\mathbf{z}}^{(i)},j})$. The average across inner level is calculated as $\bar{h}_i = \frac{1}{\nin}\sum_{j=1}^{\nin}h_{i,j}$. Further we use $\bar{h} = \sum_{i=1}^{\nout}\bar{h}_i$ to denote the average across outer level $i$. Then the estimates for $\zeta_{1,k_n}$ and $\zeta_{k_n,k_n}$ can be expressed as
$$
\hat{\zeta}_{1,k_n}^{\text{IM}} = \frac{1}{\nout - 1}\sum_{i=1}^{\nout}\left(\bar{h}_i - \bar{h}\right)^2
$$
and
$$
\hat{\zeta}_{k_n,k_n}^{\text{IM}} = \frac{1}{\nin \times \nout - 1} \sum_{i=1}^{\nout} \sum_{j=1}^{\nin} \left(h_{i,j} - \bar{h}\right)^2.
$$

\subsection{Balanced Variance Estimation Method}\label{sec:BM}

As show in Figure \ref{fig:ve}, the estimator for $\hat{\zeta}_{1,k_n}^{\text{IM}}$ given by IM is severely biased upwards when $\nin$ and $\nout$ are not sufficiently large ($B_n = \nin \times \nout$). IM is not optimal in the sense that it does not utilize all the information in the ensemble. In particular, $h_{i, j}$ is only used once in the outer iteration $i$ when conditioned on $\tilde{\mathbf{z}}^{(i)}$. Ideally we could also utilize $h_{i, j}$ by conditioning on the remaining $k_n - 1$ inputs. Further, we need to choose two hyperparameters $\nout$ and $\nin$ instead of fixing the number of base learners $B_n$. It is not clear what combination will yield optimal performance under the same computational budget this trade-off will likely differ depending on whether we wish to optimize predictive performance or variance estimation.

\begin{algorithm}
\begin{algorithmic}

\FOR {$b$ in 1 to $B_n$}

\STATE Select subsample $\mathcal{S}_b$ of size $k_n$ from training set $\mathcal{D}$ of size $n$.
\STATE Build base learner and evaluate $h_{k_n}(\mathcal{S}_b)$

\ENDFOR

\FOR {$i$ in 1 to $n$}

\STATE Calculate $m_i$ as the average of $h_{k_n}(\mathcal{S}_b)$ where the $i^{th}$ training sample appears in $\mathcal{S}_b$, weighted by the number of appearance.

\ENDFOR

\STATE Compute the variance of $m_i$ to estimate $\zeta_{1,k_n}$
\STATE Compute the variance of all predictions to estimate $\zeta_{k_n,k_n}$
\STATE Compute the mean of all predictions to obtain final ensemble prediction
\end{algorithmic}
\caption{Balanced Variance Estimation Method}
\label{algo:BM}
\end{algorithm}

To address these issues, we design the Balanced Variance Estimation Method (Algorithm \ref{algo:BM}). In the following, we use $h_b$ to represent $h_{k_n}(\mathcal{S}_b)$ if there is no ambiguity. Let $N_{i, b}$ denote the number of times $i^{th}$ training sample appears in subsample $\mathcal{S}_b$. Summing over $b$ gives $N_i = \sum_{b=1}^{B_n} N_{i,b}$ and the averaged version $\bar{N}_i = \frac{N_i}{B_n}$. For $1 \leq i \leq n$, define
$$
m_i = \sum_{b=1}^{B_n}\omega_{i, b}h_b
$$
where $\omega_{i, b} = \frac{N_{i, b}}{N_i}$. Further define the average of $m_i$ as $\bar{m} = \frac{1}{n}\sum_{i=1}^nm_i$ and the overall average of $h_b$ as $\bar{h} = \sum_{b=1}^{B_n}h_b$.
The estimates for $\zeta_{1,k_n}$ and $\zeta_{k_n,k_n}$ can be written as
$$
\hat{\zeta}_{1,k_n}^{\text{BM}} = \frac{1}{n - 1}\sum_{i=1}^{n}\left(m_i - \bar{m}\right)^2,
$$
and
$$
\hat{\zeta}_{k_n,k_n}^{\text{BM}} = \frac{1}{B_n - 1} \sum_{b=1}^{B_n}\left(h_b - \bar{h}\right)^2.
$$

\subsection{Infinitesimal Jackknife}\label{sec:IJ}

The Infinitesimal Jackknife (IJ) was first studied by \cite{jaeckel1972infinitesimal} as an extension for the jackknife to estimate variance. The basic idea of the jackknife is to omit one observation and
recompute the estimate using the remaining samples. Alternatively, if we assign a weight to each observation, omitting one is equivalent to setting the corresponding weight to zero. More generally, we can give each observation a weight slightly less than one every time. IJ is the limiting case as this deficiency in the weight approaches zero. \cite{MR659849} provided a more detailed treatment of these resampling plans. More recently, IJ was found to be a powerful tool for estimating standard errors in bagging \cite{MR3265671}. \cite{MR3225243} and \cite{MR3862353} applied IJ in the context of random forests. 

In our setting, the Infinitesimal Jackknife estimate of variance can be expressed as
$$
\hat{V}_{\text{IJ}} = \sum_{i=1}^n \text{cov}^2\left(N_{i, b}, h_b\right)
$$
where $\text{cov}(N_{i, b}, h_b) = \frac{\sum_{b=1}^{B_n}(N_{i, b} - \bar{N}_i)(h_b - \bar{h})}{B_n}$ and $\bar{N}_i$ is defined in Section \ref{sec:BM}.

IJ does not rely on an explicit expression of the variance term and is targeted at estimating the limiting variance assuming $B_n$ is sufficiently large. That is, $\hat{V}_{\text{IJ}}$ is essentially estimating $\frac{k_n^2}{n}\zeta_{1,k_n}$. A direct connection exists between BM and IJ, which we will show below. 

\begin{definition}{Balanced Subsample Structure}

We call a subsample structure balanced if $B_n \times k_n$ is a multiple of $n$, and each training sample appears exactly $r_n = \frac{B_n \times k_n}{n}$ times. 

\end{definition}

For $U$-statistics, this structure implies that each training observation appears in exactly $r_n$ base learners. For $V$-statistics, each sample is required to occur $r_n$ times but may be used in fewer than $r_n$ base learners since the sampling is done with replacement. 

\begin{theorem}\label{thm:bm2ij}

If we have balanced subsample structure, the Balanced Variance Estimation Method and the Infinitesimal Jackknife estimator satisfy
$$
\frac{k_n^2}{n}\hat{\zeta}_{1,k_n}^{\text{BM}} = \frac{n}{n - 1}\hat{V}_{\text{IJ}}.
$$
\end{theorem}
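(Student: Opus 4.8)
The plan is to reduce both sides to the same weighted sum of squared, centered quantities and then check that the scalar prefactors agree once the balanced-structure identities are substituted. Throughout, write $\bar h = B_n^{-1}\sum_{b=1}^{B_n} h_b$ for the ensemble mean. The balanced assumption gives two facts that drive everything: $N_i = r_n = B_n k_n / n$ for every $i$, hence $\bar N_i = k_n/n$; and, since each subsample $\mathcal{S}_b$ has exactly $k_n$ elements counted with multiplicity, $\sum_{i=1}^n N_{i,b} = k_n$ for every $b$.

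First I would simplify the IJ summands. Because $\sum_b (h_b - \bar h) = 0$, the constant $\bar N_i$ drops out of the covariance: $\operatorname{cov}(N_{i,b},h_b) = B_n^{-1}\sum_b N_{i,b}(h_b - \bar h)$. Now note that under balance $\omega_{i,b} = N_{i,b}/N_i = N_{i,b}/r_n$, so $m_i = r_n^{-1}\sum_b N_{i,b} h_b$, i.e. $\sum_b N_{i,b} h_b = r_n m_i$, while $\sum_b N_{i,b} = N_i = r_n$. Substituting,
$$
\operatorname{cov}(N_{i,b},h_b) \;=\; \frac{1}{B_n}\bigl(r_n m_i - r_n \bar h\bigr) \;=\; \frac{r_n}{B_n}\,(m_i - \bar h).
$$
Squaring and summing over $i$ gives $\hat V_{\text{IJ}} = (r_n/B_n)^2 \sum_{i=1}^n (m_i - \bar h)^2$.

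Next I would show the centering terms match, namely $\bar m = \bar h$. Using $m_i = r_n^{-1}\sum_b N_{i,b}h_b$ and $\sum_i N_{i,b} = k_n$,
$$
\bar m \;=\; \frac{1}{n}\sum_{i=1}^n m_i \;=\; \frac{1}{n r_n}\sum_{b=1}^{B_n} h_b \sum_{i=1}^n N_{i,b} \;=\; \frac{k_n}{n r_n}\sum_{b=1}^{B_n} h_b \;=\; \frac{k_n B_n}{n r_n}\,\bar h \;=\; \bar h,
$$
the last equality because $r_n = B_n k_n/n$. Hence $\hat V_{\text{IJ}} = (r_n/B_n)^2 \sum_i (m_i - \bar m)^2 = (r_n/B_n)^2 (n-1)\,\hat\zeta^{\text{BM}}_{1,k_n}$. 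Finally, $(r_n/B_n)^2 = (k_n/n)^2$, so $\hat V_{\text{IJ}} = \frac{k_n^2}{n^2}(n-1)\,\hat\zeta^{\text{BM}}_{1,k_n}$, which rearranges to the claimed identity $\frac{k_n^2}{n}\hat\zeta^{\text{BM}}_{1,k_n} = \frac{n}{n-1}\hat V_{\text{IJ}}$.

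There is no deep obstacle here; the whole argument is bookkeeping. The one point that requires care — and the step I would flag — is the reduction $\operatorname{cov}(N_{i,b},h_b) = \tfrac{r_n}{B_n}(m_i-\bar h)$ together with the verification $\bar m = \bar h$: these are exactly where the balanced-structure hypotheses $N_i \equiv r_n$ and $\sum_i N_{i,b} = k_n$ are used, and they are what forces the two centering schemes (centering $m_i$ at its own mean versus the implicit centering in the covariance) to coincide. One should also be mildly careful that, for $V$-statistics, $N_{i,b}$ may exceed $1$, but nothing in the computation above used $N_{i,b}\in\{0,1\}$, so the argument covers the $V$-statistic case verbatim.
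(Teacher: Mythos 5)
Your proposal is correct and follows essentially the same route as the paper's proof: both simplify $\operatorname{cov}(N_{i,b},h_b)$ by using $\sum_b (h_b-\bar h)=0$ to drop $\bar N_i$, then invoke the balanced identities $N_i=r_n$ and $r_n=B_nk_n/n$ to match the prefactors, with your reduction of the covariance to $\tfrac{r_n}{B_n}(m_i-\bar h)$ being just a reorganization of the paper's expression $\tfrac{1}{B_n}\sum_b N_{i,b}(h_b-\bar h)$. A minor plus: you explicitly verify $\bar m=\bar h$ via $\sum_i N_{i,b}=k_n$, a fact the paper asserts without derivation.
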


\begin{remark}
The scaling factor $\frac{n}{n - 1}$ is a result of how we calculate the empirical variance. If instead we define $\hat{\zeta}_{1,k_n}^{\text{BM}} = \frac{1}{n}\sum_{i=1}^{n}(m_i - \bar{m})^2$, then the two estimators are equal: $\frac{k_n^2}{n}\hat{\zeta}_{1,k_n}^{\text{BM}} = \hat{V}_{\text{IJ}}$.
\end{remark}

\section{Bias Corrections for Variance Estimates}\label{sec:bc}  

\subsection{Bias in Variance Estimation}

As briefly mentioned in Section \ref{sec:intro}, all existing variance estimation methods exhibit severe bias when the number of base learners is not sufficiently large. We now conduct a simple simulation to demonstrate the extent of this bias. Suppose $X \sim 20 \times \text{unif}(0, 1)$ and $Y = 2X + \mathcal{N}(0, 1)$. An ensemble of decision trees is built to predict $Y$ from $X$, and we calculate the variance of the prediction at $x = 10$ using IM, BM and IJ. In our simulation, we fix the number of training observations $n = 500$ and kernel size $k_n = 100$. The number of base learners $B_n$ is varied among $100, 1000$ and $10000$. 

Figure \ref{fig:ve} shows the result for both $U$- and $V$-statistics. Notice that although larger $B_n$ indicates lower variance (see Equation (\ref{eqn:var})), the influence is almost negligible as the dominating part is $\frac{k_n^2}{n}\zeta_{1, k_n}$ in our case (compare Figure \ref{fig:zeta1} with \ref{fig:zetan} in Appendix \ref{app:add}). In order to provide a fair comparison, the variance shown in the figure is for $\frac{k_n^2}{n}\zeta_{1, k_n} + \frac{1}{1000}\zeta_{k_n, k_n}$. Different values of $B_n$ only have effect on the estimation for $\zeta_{1, k_n}$ and $\zeta_{k_n, k_n}$. 

We can easily observe that all three methods (IM, BM, IJ) badly overestimate the variance (notice the log scale on y-axis). The bias mainly arises from an overestimation of $\zeta_{1, k_n}$ (see Figure \ref{fig:zeta1} and \ref{fig:zetan}). However, BM and IJ are better than IM since they utilize more information. The plot also corroborates Theorem \ref{thm:bm2ij}: BM and IJ are exactly the same up to a scaling factor. 

\begin{figure}
\centering
\begin{subfigure}{\textwidth}
\includegraphics[width=\textwidth]{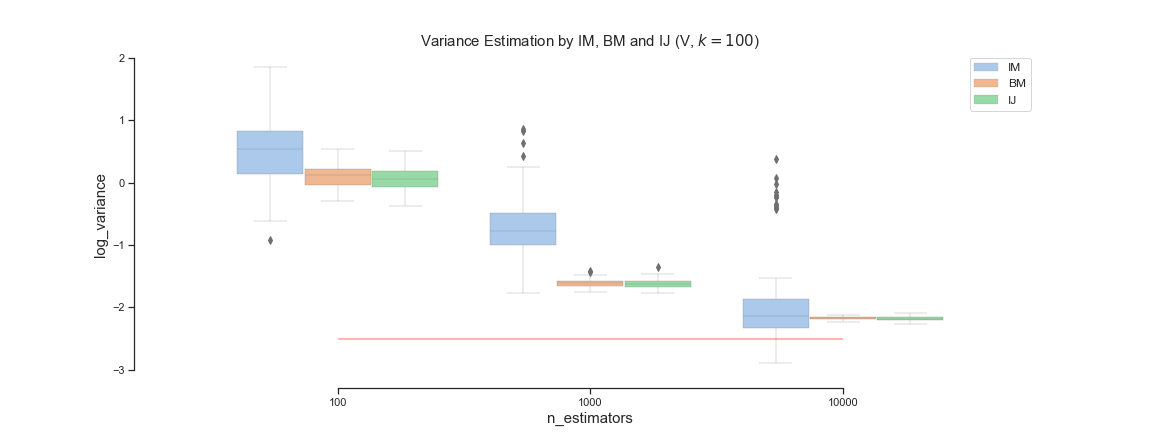}
\caption{Subsampling with replacement ($V$-statistics).}
\label{fig:ve_v}
\end{subfigure}
\begin{subfigure}{\textwidth}
\includegraphics[width=\textwidth]{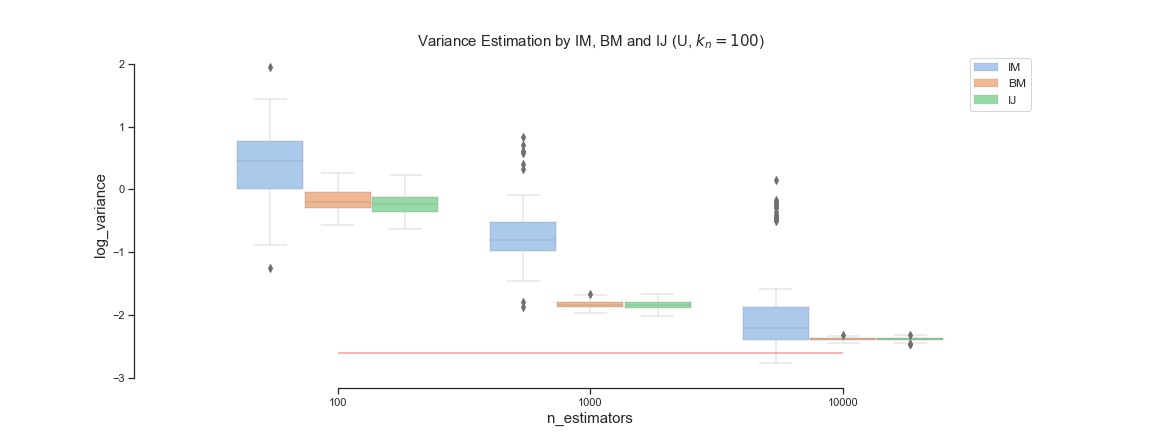}
\caption{Subsampling with replacement ($U$-statistics).}
\label{fig:ve_u}
\end{subfigure}

 \caption{Variance estimation by three different methods: Internal Variance Estimation Method (IM), Balanced Variance Estimation Method (BM) and Infinitesimal Jackknife (IJ). The red line denotes true (log) variance obtained by generating data, training the ensemble 100 times and calculating the empirical variance of predictions. }
\label{fig:ve}
\end{figure}

In Appendix \ref{app:add}, we include additional simulation results on the effect of different kernel size $k_n$. It is worth noting that the pattern of bias is consistent among $V$-statistics: an overestimation of $\zeta_{1, k_n}$ leads to severe bias which diminishes as the number of base learners increases. This effect exists in $U$-statistics as well, as the estimated variance decreases as $B_n$ increases. However for $U$-statistics, the variance estimates tend to underestimate when large kernels are used (Figure \ref{fig:ve_u_250} and \ref{fig:ve_u_400}). This is caused by the fact that the sampling scheme with $U$-statistics is not equivalent to sampling from the empirical distribution, especially when the kernel size is large. Both perspectives on variance estimation, either on estimating the variance of conditional expectation or resorting to Infinitesimal Jackknife, are based on the idea of using the empirical distribution of the data to approximate the true underlying distribution. $U$-statistics, which operate by sampling without replacement, are not equivalent to sampling from empirical distribution, thus resulting in the underestimation phenomenon. In \cite{MR3862353}, the authors use a correction factor $\frac{n(n-1)}{(n - k_n)^2}$ as an empirical adjustment for this effect. Figure \ref{fig:u_wager} in Appendix \ref{app:u_cor} shows the result when this correction is applied (denoted by corrected-IJ). Empirically the correction mitigates the underestimation bias, and exhibits a similar pattern as $V$-statistics: a bias due to overestimation of $\zeta_{1, k_n}$. 

\subsection{A Bias-corrected Estimator}\label{sec:bce}

In this section, we present a bias-corrected estimator for $\zeta_{1, k_n}$ under the framework of $V$-statistics. We use an ANOVA-like estimation of variance components similar to \cite{MR2844419}. Derivations are collected in Appendix \ref{app:bc_dev}.

Following notation form Section \ref{sec:BM}, define
$$
SS_{\tau} = \sum_{i=1}^{n}N_i\left(m_i - \bar{m}\right)^2
$$
and 
$$
SS_\epsilon = \sum_{i=1}^n\sum_{b=1}^{B_n} N_{i, b}\left(h_b - m_i\right)^2.
$$

A bias-corrected estimate is given by
$$
\hat{\zeta}_{1,k_n} = \frac{SS_{\tau} - \left(n-1\right)\hat{\sigma}_{\epsilon}^2}{C - \sum_{i=1}^nN_i^2/C}
$$
where $C = \sum_{i=1}^nN_i = B_n k_n$ and $\hat{\sigma}_{\epsilon}^2 = \frac{SS_{\epsilon}}{C - n}$.

As a special case for the \emph{Balanced Subsample Structure}, we have $N_1 = N_2 = \ldots = N_i = r_n$, then 
$$
\hat{\sigma}_{\epsilon}^2 = \frac{SS_{\epsilon}}{C - n} = \frac{1}{n\left(r_n - 1\right)}\sum_{i=1}^n\sum_{b=1}^{B_n} N_{i, b}\left(h_b - m_i\right)^2
$$
and 
\begin{equation}\label{eqn:v_cor}
    \hat{\zeta}_{1,k_n} = \frac{1}{n-1}\sum_{i=1}^{n}\left(m_i - \bar{m}\right)^2 - \frac{1}{r_n}\hat{\sigma}_{\epsilon}^2. 
\end{equation}

The calculation for $\hat{\zeta}_{1,k_n}$ in (\ref{eqn:v_cor}) may seem complicated at first. In Appendix \ref{app:simpler}, we show that under Balanced Subsample Structure $\hat{\zeta}_{1,k_n} \approx \frac{1}{n - 1}\sum_{i=1}^{n}(m_i - \bar{m})^2 - \frac{1}{B_n}\frac{n}{k_n}\hat{\zeta}_{k_n,k_n}^{\text{BM}}$, which is simply the original BM estimator minus a correction term calculated from $\hat{\zeta}_{k_n,k_n}^{\text{BM}}$. It indicates that one can actually calculate the bias-corrected estimator without any extra computation efforts. In this case, the estimate for the limiting variance $\frac{k_{n}^{2}}{n} \zeta_{1,k_n} + \frac{1}{B_n} \zeta_{k_n,k_n}$ is $\frac{k_n^2}{n(n - 1)}\sum_{i=1}^{n}(m_i - \bar{m})^2 - \frac{k_n - 1}{B_n}\hat{\zeta}_{k_n,k_n}^{\text{BM}}$, where it's clear that the incomplete part is negligible to the bias correction term we need to apply.


Figure \ref{fig:vi_cor} show the results for this bias-corrected estimator (we call it corrected-V in the remaining part of this paper) compared with BM and IJ under the framework of $V$-statistics. Here we no longer display IM since it's systematically worse. We can see that even with only 100 base learners, the bias-corrected estimator achieves relatively accurate estimation of the variance. The bias-corrected term may introduce some instability when $B_n$ is very small, but for a moderate size $B_n$ it has much lower bias compared to BM and IJ. 

\begin{figure}
    \centering
    \includegraphics[width=\textwidth]{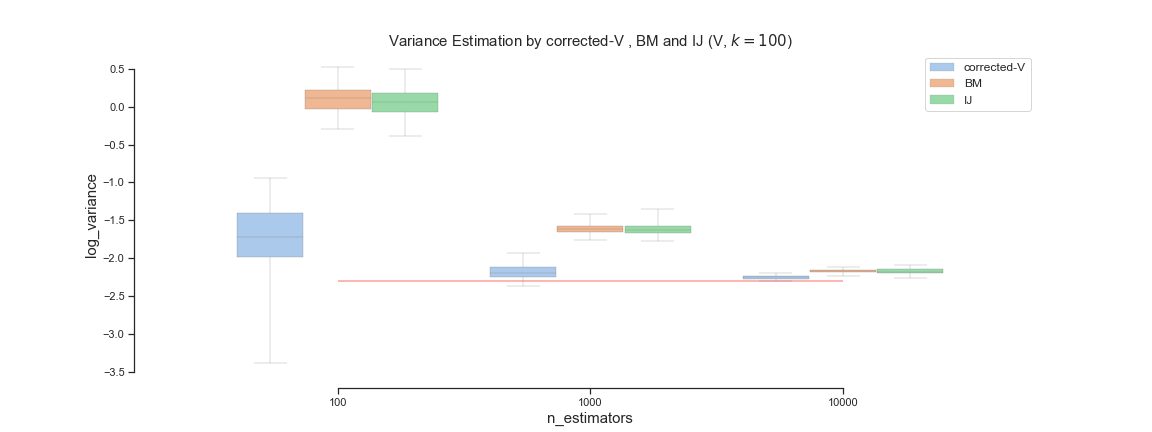}
    \caption{Variance estimation by three different methods: corrected-V, BM and IJ. The red line denotes true (log) variance obtained by generating data, training the ensemble 100 times and calculating the empirical variance of predictions.}
    \label{fig:vi_cor}
\end{figure}

It is worth pointing out that this bias correction method does not work for $U$-statistics, for the same reason mentioned before: the sampling schema is not equivalent to sampling from empirical distribution. Figure \ref{fig:ui_cor} in Appendix \ref{app:u_cor} shows this result under the framework of $U$-statistics. The blue box plot shows that the bias-corrected estimator over-corrects the variance. In \cite{MR3909963}, the authors developed a method called the bootstrap of little bags to estimate variance based on the work of \cite{MR2654590}. They also encountered the challenge of negative variance when $B_n$ is small. In their software, an improper uniform prior over $[0, \infty)$ was employed to help mitigate this issue. We conjecture that the phenomenon also stems from the mechanism of sampling without replacement. In Appendix \ref{app:u_cor}, we present an empirically accurate correction to $U$-statistics as well.

\section{Randomized Ensembles}\label{sec:re}

As briefly mentioned before, randomized ensembles are widely used in practice. A general principle to achieve good performance in ensembles is to make individual learners both accurate and diverse \cite{MR3184068}. To increase diversity, randomization is added to each base learner. For example in random forests \cite{breiman2001random}, each split is chosen from a randomly select subset of all possible features. 

Similar to \cite{peng2019asymptotic}, we define the notion of a \emph{generalized} complete $V$-statistic
\begin{equation}\label{eqn:crv}
    V_{n, k_n, \omega} = n^{-k_n}\sum_{i_1=1}^n\ldots\sum_{i_{k_n}=1}^nh_{k_n}\left(Z_{i_1}, \ldots, Z_{i_{k_n}}; \omega\right).
\end{equation}
Note that for each kernel $h_{k_n}$ we consider an i.i.d.\ sample of random $\omega_i$ but the subscript is dropped for notational convenience.

Similarly define the generalized incomplete statistic by
\begin{equation}\label{eqn:rv}
    V_{n, k_n, B_n, \omega} = \frac{1}{B_n}\sum_{i}h_{k_n}\left(Z_{i_1}, \ldots, Z_{i_{k_n}}; \omega\right).    
\end{equation}

Following the same idea developed in \cite{MR3491120} and \cite{MR3862353}, consider the expected version of (\ref{eqn:crv})
\begin{equation}\label{eqn:erv}
    V^*_{n, k_n, \omega} = \mathbb{E}_\omega V_{n, k_n, \omega} = n^{-k_n}\sum_{i_1=1}^n\ldots\sum_{i_{k_n}=1}^nE_{\omega}h_{k_n}\left(Z_{i_1}, \ldots, Z_{i_{k_n}}; \omega\right)
\end{equation}
where the expectation is taken over the randomization parameter $\omega$. In this case, $V^*_{n, k_n, \omega}$ can be viewed as a non-randomized $V$-statistic with kernel $h_{k_n}^E = \mathbb{E}_\omega h_{k_n}$ where Theorem \ref{thm:v2} applies. We state this result formally in the following corollary.

\begin{corollary}\label{cor:cv}
Let $Z_1, Z_2, \ldots, Z_n \stackrel{iid}{\sim} F_Z$ and let $V_{n, k_n, \omega}$ be a generalized complete $V$-statistic defined in (\ref{eqn:crv}) and the corresponding expected version $V^*_{n, k_n, \omega}$ in (\ref{eqn:erv}). Under the same conditions as Theorem \ref{thm:v2}, we have
$$
\frac{\left(V^*_{n, k_n, \omega} - \theta_{k_n}\right)}{\sqrt{\frac{k_n^2}{n}\zeta^*_{1, k_n} }}\stackrel{d}{\to} \mathcal{N}(0, 1),
$$
where all parameters $\theta_{k_n}, \zeta^*_{1, k_n}, \zeta^*_{k_n, k_n}$ are defined using new non-randomized kernel $h_{k_n}^E$ instead of $h_{k_n}$.
\end{corollary}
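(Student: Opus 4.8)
The plan is to recognize that $V^*_{n, k_n, \omega}$ is simply a non-randomized complete infinite order $V$-statistic in disguise, and then to invoke Theorem \ref{thm:v2} directly. By linearity of expectation, together with the fact that each randomization draw $\omega_i$ shares the common marginal law of $\omega$,
$$
V^*_{n, k_n, \omega} = n^{-k_n}\sum_{i_1=1}^n\cdots\sum_{i_{k_n}=1}^n \mathbb{E}_\omega h_{k_n}\left(Z_{i_1}, \ldots, Z_{i_{k_n}}; \omega\right) = n^{-k_n}\sum_{i_1=1}^n\cdots\sum_{i_{k_n}=1}^n h_{k_n}^E\left(Z_{i_1}, \ldots, Z_{i_{k_n}}\right),
$$
which is exactly an IOVS of the form (\ref{eqn:iovs}) built on the deterministic kernel $h_{k_n}^E = \mathbb{E}_\omega h_{k_n}$. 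So the corollary follows from Theorem \ref{thm:v2} in the complete case ($B_n = n^{k_n}$) the moment its hypotheses are verified for $h_{k_n}^E$.

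I would then check the three structural requirements in turn. First, $h_{k_n}^E$ is permutation symmetric in its $k_n$ data arguments, since $\omega$ is an auxiliary parameter attached to the kernel rather than to the $Z$'s, so averaging over it commutes with permuting the arguments. Second, square-integrability transfers: by the conditional Jensen inequality, $\bigl(h_{k_n}^E(Z_1, \ldots, Z_{k_n})\bigr)^2 = \bigl(\mathbb{E}_\omega h_{k_n}(Z_1, \ldots, Z_{k_n}; \omega)\bigr)^2 \le \mathbb{E}_\omega h_{k_n}^2(Z_1, \ldots, Z_{k_n}; \omega)$, so taking expectations over the data gives $\mathbb{E}\bigl(h_{k_n}^E\bigr)^2 \le \mathbb{E} h_{k_n}^2 < \infty$. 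Third, the centering constant is unchanged: by Fubini/the tower property, $\mathbb{E} h_{k_n}^E(Z_1, \ldots, Z_{k_n}) = \mathbb{E} h_{k_n}(Z_1, \ldots, Z_{k_n}; \omega) = \theta_{k_n}$. The remaining hypothesis, $\lim \zeta^*_{k_n, k_n}/(n \zeta^*_{1, k_n}) \to 0$ with $\zeta^*_{1, k_n}$ and $\zeta^*_{k_n, k_n}$ now computed from $h_{k_n}^E$, is precisely what ``under the same conditions as Theorem \ref{thm:v2}'' means here, so there is nothing further to prove.

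With these checks in place, Theorem \ref{thm:v2} applied to the kernel $h_{k_n}^E$ in the complete case yields $\left(V^*_{n, k_n, \omega} - \theta_{k_n}\right)\big/\sqrt{k_n^2 \zeta^*_{1, k_n}/n} \stackrel{d}{\to} \mathcal{N}(0, 1)$, which is the claim. I expect the only genuine (and still minor) obstacle to be confirming that the equivalent-$U$-statistic construction of Section \ref{sec:vasu} underpinning Theorem \ref{thm:v2} — the composite kernel $\phi_{k_n}^*$ with its Stirling-number weights — applies verbatim after replacing $h_{k_n}$ by $h_{k_n}^E$; this is immediate because that construction is a purely combinatorial reweighting of kernel evaluations that never references $\omega$, hence is inherited by any fixed kernel, in particular $h_{k_n}^E$. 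I would also remark that the nondegeneracy requirement implicit in ``same conditions'' (namely $\zeta^*_{1, k_n} > 0$) is a nontrivial constraint on $h_{k_n}^E$, since averaging over $\omega$ can in principle shrink the first-order projection; verifying it for specific randomized ensembles is a case-by-case matter, as noted elsewhere in the paper.
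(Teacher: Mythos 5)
Your proposal is correct and matches the paper's own argument: the paper likewise proves Corollary \ref{cor:cv} simply by viewing $V^*_{n, k_n, \omega}$ as a non-randomized $V$-statistic with kernel $h_{k_n}^E = \mathbb{E}_\omega h_{k_n}$ and invoking Theorem \ref{thm:v2} in the complete case. Your additional checks (permutation symmetry, square-integrability via Jensen, and the centering via the tower property) are routine verifications the paper leaves implicit.
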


Given this, in order to retain the asymptotic normality of the corresponding randomized case (\ref{eqn:rv}), there are two steps: first we show that $\frac{V_{n, k_n, \omega} - V^*_{n, k_n, \omega}}{\text{Var}(V^*_{n, k_n, \omega})} \stackrel{P}{\to} 0$ and thus $V_{n, k_n, \omega}$ has the same asymptotic distribution as $V^*_{n, k_n, \omega}$. Then the asymptotics of $V_{n, k_n, B_n, \omega}$ can be derived from that of $V_{n, k_n, \omega}$. 

\begin{theorem}\label{thm:rv}
Let $V_{n, k_n, B_n, \omega}$ be a generalized incomplete $V$-statistic of the form defined in (\ref{eqn:rv}). Further assume the corresponding statistic $V^*_{n, k_n, \omega}$ in (\ref{eqn:erv}) satisfies Corollary \ref{cor:cv} and $\lim_{n \to \infty}k
_n^2 \zeta^*_{1, k_n} > 0$. Then as long as
$$
\lim_{n \to \infty}\mathbb{E} \left( h_{k_n} \left( Z_{i_1}, \ldots, Z_{i_{k_n}}; \omega \right) - \mathbb{E}_\omega h_{k_n}\left(Z_{i_1}, \ldots, Z_{i_{k_n}}; \omega \right) \right) \neq \infty,
$$
we have
$$
\frac{\left(V_{n, k_n, B_n, \omega} - \theta_{k_n}\right)}{\sqrt{\frac{k_n^2}{n}\zeta^*_{1, k_n} + \frac{1}{B_n}\zeta_{k_n, k_n}}}\stackrel{d}{\to} \mathcal{N}(0, 1).
$$
Here, $\zeta_{k_n,k_n} = \text{var} \left(h_{k_n}\left(Z_1, \ldots, Z_{k_n}, \omega\right) \right)$ is the variance across individual randomized kernels, and all parameters $\theta_{k_n}, \zeta^*_{1, k_n}, \zeta^*_{k_n, k_n}$ are defined using new kernel $h_{k_n}^E$ instead of $h_{k_n}$ as in Corollary \ref{cor:cv}.
\end{theorem}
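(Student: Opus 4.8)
The plan is to follow the two-step reduction indicated just above the statement. \emph{Step one} replaces the randomized complete statistic $V_{n, k_n, \omega}$ by its de-randomized version $V^*_{n, k_n, \omega}$ of (\ref{eqn:erv}), to which Corollary \ref{cor:cv} already supplies a central limit theorem with variance $\tfrac{k_n^2}{n}\zeta^*_{1,k_n}$; \emph{step two} passes from the complete to the incomplete estimator by conditioning on the data and handling the $B_n$ Monte Carlo draws as an i.i.d.\ average that injects an extra, asymptotically independent, Gaussian component of variance $\tfrac{1}{B_n}\zeta_{k_n,k_n}$.

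For step one, I would condition on $\mathcal{D}=\{Z_1,\dots,Z_n\}$ and use the convention that each of the $n^{k_n}$ kernel evaluations in (\ref{eqn:crv}) carries its own i.i.d.\ copy $\omega_b$. Then $V_{n, k_n, \omega}-V^*_{n, k_n, \omega}=n^{-k_n}\sum_{b\in B^{k_n}(\Omega)}\bigl(h_{k_n}(Z_b;\omega_b)-\mathbb{E}_\omega h_{k_n}(Z_b;\omega)\bigr)$ is, given $\mathcal{D}$, an average of $n^{k_n}$ conditionally independent, conditionally mean-zero terms, so $\mathrm{Var}(V_{n, k_n, \omega}-V^*_{n, k_n, \omega}\mid\mathcal{D})=n^{-2k_n}\sum_b\mathrm{Var}_\omega(h_{k_n}(Z_b;\omega)\mid\mathcal{D})$. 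Taking expectations and reading the displayed hypothesis as the finiteness condition $\mathbb{E}\bigl[(h_{k_n}-\mathbb{E}_\omega h_{k_n})^2\bigr]<\infty$ yields $\mathrm{Var}(V_{n, k_n, \omega}-V^*_{n, k_n, \omega})=O(n^{-k_n})$, hence $V_{n, k_n, \omega}-V^*_{n, k_n, \omega}=O_P(n^{-k_n/2})=o_P(n^{-1/2})$ for $k_n\ge 2$. Since the deterministic normalizer $\sqrt{k_n^2\zeta^*_{1,k_n}/n}$ appearing in Corollary \ref{cor:cv} has exact order $n^{-1/2}$ (using $\lim k_n^2\zeta^*_{1,k_n}>0$), Slutsky's theorem gives that $V_{n, k_n, \omega}$ inherits the same limiting law as $V^*_{n, k_n, \omega}$.

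For step two, I would again condition on $\mathcal{D}$. For a $V$-statistic, drawing a size-$k_n$ subsample with replacement is drawing a tuple $b$ uniformly from $B^{k_n}(\Omega)$, so given $\mathcal{D}$ the incomplete estimator is an average of $B_n$ i.i.d.\ terms with conditional mean $\mathbb{E}[h_{k_n}(Z_b;\omega)\mid\mathcal{D}]=V^*_{n, k_n, \omega}$ (this is exactly (\ref{eqn:erv})) and conditional variance $\sigma^2_{\mathcal{D}}/B_n$, where $\sigma^2_{\mathcal{D}}=\mathrm{Var}(h_{k_n}(Z_b;\omega)\mid\mathcal{D})$. Hence
$$
V_{n, k_n, B_n, \omega}-\theta_{k_n}=\bigl(V^*_{n, k_n, \omega}-\theta_{k_n}\bigr)+B_n^{-1/2}\,\xi_n,
$$
with $\xi_n$ conditionally mean-zero given $\mathcal{D}$ and conditional variance $\sigma^2_{\mathcal{D}}$. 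The first summand, normalized, is asymptotically $\mathcal{N}(0,k_n^2\zeta^*_{1,k_n}/n)$ by Corollary \ref{cor:cv}; for the second I would show $\sigma^2_{\mathcal{D}}/\zeta_{k_n,k_n}\to 1$ in the relevant sense and verify a conditional Lindeberg condition (using $\mathbb{E}h_{k_n}^2<\infty$ together with the growth rates) so that $\xi_n\Rightarrow\mathcal{N}(0,\zeta_{k_n,k_n})$ given $\mathcal{D}$. Because $\xi_n$ is driven purely by the fresh Monte Carlo randomness while $V^*_{n, k_n, \omega}$ depends only on $\mathcal{D}$, a characteristic-function computation conditioning on $\mathcal{D}$ shows the two are asymptotically independent, so their sum divided by $\sqrt{\tfrac{k_n^2}{n}\zeta^*_{1,k_n}+\tfrac{1}{B_n}\zeta_{k_n,k_n}}$ converges to $\mathcal{N}(0,1)$; the condition $\lim\zeta^*_{k_n,k_n}/(n\zeta^*_{1,k_n})\to 0$ carried over from Corollary \ref{cor:cv} keeps the two variance orders compatible.

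The hard part will be step two. The conditional central limit theorem for $\xi_n$ lives over a triangular array whose kernel and variance change with $n$, so the Lindeberg control must be uniform and must mesh with the growth conditions; and one must argue that $\sigma^2_{\mathcal{D}}$, which reflects sampling with replacement from the empirical distribution, tracks the i.i.d.-based $\zeta_{k_n,k_n}$ tightly enough that the gap is negligible against $\tfrac{k_n^2}{n}\zeta^*_{1,k_n}+\tfrac{1}{B_n}\zeta_{k_n,k_n}$ even after the restriction $k_n=o(n^{1/4})$ has been removed — this is precisely where the $\phi_{k_n}^*$ representation of Section \ref{sec:vasu}, rather than a crude $V$-versus-$U$ comparison, has to carry the argument. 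Step one, by contrast, is a routine second-moment estimate once the hypothesis is interpreted as a bound on the randomization-induced variance.
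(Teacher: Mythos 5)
Your proposal is correct and follows essentially the same route as the paper: your step one is the paper's own second-moment bound $\mathbb{E}\left(V_{n,k_n,\omega}-V^*_{n,k_n,\omega}\right)^2=\tfrac{1}{n^{k_n}}\mathbb{E}\left(h_{k_n}-\mathbb{E}_\omega h_{k_n}\right)^2$ combined with $\lim k_n^2\zeta^*_{1,k_n}>0$ and Slutsky (the paper, like you, reads the displayed hypothesis as control of this randomization variance), and your step two is the same conditioning-on-$\mathcal{D}$ treatment of the Monte Carlo noise that the paper dispatches by citing \cite{MR3491120}, whose multinomial formulation (as in Lemma \ref{lm:mul}) is equivalent to your conditionally i.i.d.\ draws plus the characteristic-function combination. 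The only difference is that you make explicit the conditional Lindeberg/variance-convergence details that the paper leaves implicit in that citation.
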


The analysis for randomized ensembles in \cite{peng2019asymptotic} directly treats the randomized kernel, rather than first establishing a result for the expectation over $\omega$. This is easier in the framework of $U$-statistics; in the case of $V$-statistics, the new kernel $\phi_{k_n}^*$ constructed as in Section \ref{sec:vasu} no longer has independent randomization parameters, since different $\phi_{k_n}^*$ might share the same kernel $h_{k_n}$.

\section{Empirical Studies}\label{sec:es}

Here, we conduct two suites of experiments. All simulations are implemented in Python. For building random forests, we apply \textit{RandomForestClassifier} and \textit{RandomForestRegressor} from \textit{scikit-learn} \cite{MR1091842}. Unless otherwise noted, default parameter values are used. 

\subsection{Predictive Performance}

In this section, we evaluate the predictive performance for different sampling strategies. In particular, we focus on the scenario of $U$-statistics (sample without replacement) and $V$-statistics (sample with replacement), and varying subsample size (proportion of $0.2, 0.4, 0.6, 0.8, 1.0$ of the size of training data). We address the following two questions empirically:
\begin{enumerate}
    \item Should we subsample with or without replacement in terms of prediction performance?
    \item What is the best subsample size?
\end{enumerate}

There are six datasets taken from UCI Machine learning Repository (see Appendix \ref{app:datasets} for details) and we also include a regression function (denoted by MARS) which was initially considered by \cite{MR1091842} for multivariate adaptive regression splines, and has since been used as a benchmark in many random forests publications.

Each model is built using 100 trees and to full depth until a leaf is pure or contain fewer than 2 data points. $20\%$ of samples are left as test set. For classification, $\sqrt{p}$ of features are considered when searching for best splits, and $\floor{p / 3}$ for regression. Table \ref{table:pred} summarizes our results. The first three datasets are regression tasks for which we report root-mean-squared error and the remaining four are classification with accuracy given by correct classification percentage. We repeat the process 20 times and denote standard error in parenthesis. Top performance entries are marked in bold separately for sampling with vs. without replacement and an asterisk indicates the best performance across all scenarios.

We make two observations here. First, for both sampling with replacement or without, there is a best subsample size for prediction, though the proportion varies across different datasets. Accuracy decreases as the subsample size moves away from the ideal proportion. It is worth noting that performance discrepancy, which is defined to be the maximum performance difference across five subsample size settings, is generally larger in the case of $U$-statistics than $V$-statistics. For example, in diabetes dataset, there is a 4.1357 RMSE difference in $U$-statistics scenario compared to 1.03 in $V$-statistics. Similarly for retinopathy dataset, the accuracy discrepancy is $2.79\%$ versus $1.26\%$. It may suggest that sampling with replacement is more robust to changes in subsample sizes. 

On the other hand, we did not see an obvious performance gap between two sampling techniques. The best result can be generated either by sampling with replacement or without depending on the specific data at hand. In practice, one will need to use cross validation to choose the best sampling strategy and subsample size, if prediction performance is the top concern.

\begin{landscape}

\begin{table}[]
\caption{Predictive performance on seven datasets under different sampling strategies}
\centering
\begin{tabular}{|c|c|c|c|c|c|c|c|c|c|c|c|c|}
\hline
\multirow{2}{*}{Dataset} & \multirow{2}{*}{n} & \multirow{2}{*}{p} & \multicolumn{5}{c|}{$U$-statistics (sample without replacement)}                                                                                                                                                                                                                                                                                      & \multicolumn{5}{c|}{$V$-statistics (sample with replacement)}                                                                                                                                                                                                                                                            \\ \cline{4-13} 
                         &                    &                    & 0.2                                                        & 0.4                                                        & 0.6                                                        & 0.8                                                        & 1                                                                  & 0.2                                                        & 0.4                                                        & 0.6                                                        & 0.8                                                        & 1                                                                  \\ \hhline{|=|=|=|=|=|=|=|=|=|=|=|=|=|}
boston                   & 506                & 13                 & \begin{tabular}[c]{@{}c@{}}3.9696\\ (0.1112)\end{tabular}  & \begin{tabular}[c]{@{}c@{}}3.5754\\ (0. 0788)\end{tabular} & \begin{tabular}[c]{@{}c@{}}3.3225\\ (0.1131)\end{tabular}  & \begin{tabular}[c]{@{}c@{}}3.2103\\ (0.1050)\end{tabular}  & \begin{tabular}[c]{@{}c@{}}\textbf{3.0588*}\\ (0.0787)\end{tabular} & \begin{tabular}[c]{@{}c@{}}4.0358\\ (0.0656)\end{tabular}  & \begin{tabular}[c]{@{}c@{}}3.6799\\ (0.0758)\end{tabular}  & \begin{tabular}[c]{@{}c@{}}3.4705\\ (0.0936)\end{tabular}  & \begin{tabular}[c]{@{}c@{}}3.3738\\ (0.1057)\end{tabular}  & \begin{tabular}[c]{@{}c@{}}\textbf{3.2944}\\ (0.0887)\end{tabular} \\ \hline
diabetes                 & 442                & 10                 & \begin{tabular}[c]{@{}c@{}}\textbf{54.6137}\\ (0.6936)\end{tabular} & \begin{tabular}[c]{@{}c@{}}54.6289\\ (0.5164)\end{tabular} & \begin{tabular}[c]{@{}c@{}}55.5140\\ (0.6097)\end{tabular} & \begin{tabular}[c]{@{}c@{}}56.9017\\ (0.5681)\end{tabular} & \begin{tabular}[c]{@{}c@{}}58.7494\\ (0.8187)\end{tabular}         & \begin{tabular}[c]{@{}c@{}}\textbf{54.3830*}\\ (0.7100)\end{tabular} & \begin{tabular}[c]{@{}c@{}}54.4282\\ (0.6347)\end{tabular} & \begin{tabular}[c]{@{}c@{}}54.5220\\ (0.7739)\end{tabular} & \begin{tabular}[c]{@{}c@{}}55.0782\\ (0.5642)\end{tabular} & \begin{tabular}[c]{@{}c@{}}55.4130\\ (0.8877)\end{tabular}         \\ \hline
MARS                     & 500                & 5                  & \begin{tabular}[c]{@{}c@{}}3.2106\\ (0.0648)\end{tabular}  & \begin{tabular}[c]{@{}c@{}}2.8850\\ (0.0980)\end{tabular}  & \begin{tabular}[c]{@{}c@{}}2.7211\\ (0.0845)\end{tabular}  & \begin{tabular}[c]{@{}c@{}}2.6329\\ (0.0482)\end{tabular}  & \begin{tabular}[c]{@{}c@{}}\textbf{2.5465*}\\ (0.0484)\end{tabular}          & \begin{tabular}[c]{@{}c@{}}3.2908\\ (0.1059)\end{tabular}  & \begin{tabular}[c]{@{}c@{}}2.9706\\ (0.0644)\end{tabular}  & \begin{tabular}[c]{@{}c@{}}2.8201\\ (0.0683)\end{tabular}  & \begin{tabular}[c]{@{}c@{}}2.7498\\ (0.0591)\end{tabular}  & \begin{tabular}[c]{@{}c@{}}\textbf{2.6965}\\ (0.0425)\end{tabular}          \\ \hhline{|=|=|=|=|=|=|=|=|=|=|=|=|=|}
iris                     & 150                & 4                  & \begin{tabular}[c]{@{}c@{}}\textbf{0.9667*}\\ (0.0000)\end{tabular}  & \begin{tabular}[c]{@{}c@{}}\textbf{0.9667*}\\ (0.0000)\end{tabular}  & \begin{tabular}[c]{@{}c@{}}0.9633\\ (0.0100)\end{tabular}  & \begin{tabular}[c]{@{}c@{}}0.9500\\ (0.0167)\end{tabular}  & \begin{tabular}[c]{@{}c@{}}0.9333\\ (0.0000)\end{tabular}          & \begin{tabular}[c]{@{}c@{}}0.9650\\ (0.0073)\end{tabular}  & \begin{tabular}[c]{@{}c@{}}\textbf{0.9667*}\\ (0.0000)\end{tabular}  & \begin{tabular}[c]{@{}c@{}}0.9617\\ (0.0119)\end{tabular}  & \begin{tabular}[c]{@{}c@{}}\textbf{0.9667*}\\ (0.0000)\end{tabular}  & \begin{tabular}[c]{@{}c@{}}0.9633\\ (0.0100)\end{tabular}          \\ \hline
digits                   & 1797               & 64                 & \begin{tabular}[c]{@{}c@{}}0.9507\\ (0.0053)\end{tabular}  & \begin{tabular}[c]{@{}c@{}}0.9660\\ (0.0034)\end{tabular}  & \begin{tabular}[c]{@{}c@{}}0.9688\\ (0.0043)\end{tabular}  & \begin{tabular}[c]{@{}c@{}}\textbf{0.9735*}\\ (0.0048)\end{tabular}  & \begin{tabular}[c]{@{}c@{}}0.9731\\ (0.0045)\end{tabular}          & \begin{tabular}[c]{@{}c@{}}0.9481\\ (0.0054)\end{tabular}  & \begin{tabular}[c]{@{}c@{}}0.9588\\ (0.0058)\end{tabular}  & \begin{tabular}[c]{@{}c@{}}0.9651\\ (0.0049)\end{tabular}  & \begin{tabular}[c]{@{}c@{}}0.9681\\ (0.0048)\end{tabular}  & \begin{tabular}[c]{@{}c@{}}\textbf{0.9689}\\ (0.0036)\end{tabular}          \\ \hline
retinopathy                     & 1151                & 19                 & \begin{tabular}[c]{@{}c@{}}\textbf{0.6935}\\ (0.0156)\end{tabular}  & \begin{tabular}[c]{@{}c@{}}0.6825\\ (0.0112)\end{tabular}  & \begin{tabular}[c]{@{}c@{}}0.6766\\ (0.0114)\end{tabular}  & \begin{tabular}[c]{@{}c@{}}0.6708\\ (0.0082)\end{tabular}  & \begin{tabular}[c]{@{}c@{}}0.6656\\ (0.0125)\end{tabular}          & \begin{tabular}[c]{@{}c@{}}0.6892\\ (0.0135)\end{tabular}  & \begin{tabular}[c]{@{}c@{}}\textbf{0.6946*}\\ (0.0126)\end{tabular}  & \begin{tabular}[c]{@{}c@{}}0.6868\\ (0.0125)\end{tabular}  & \begin{tabular}[c]{@{}c@{}}0.6846\\ (0.0109)\end{tabular}  & \begin{tabular}[c]{@{}c@{}}0.6820\\ (0.0117)\end{tabular}          \\ \hline
breast\_cancer           & 569                & 30                 & \begin{tabular}[c]{@{}c@{}}\textbf{0.9772*}\\ (0.0070)\end{tabular}  & \begin{tabular}[c]{@{}c@{}}0.9746\\ (0.0038)\end{tabular}  & \begin{tabular}[c]{@{}c@{}}0.9750\\ (0.0064)\end{tabular}  & \begin{tabular}[c]{@{}c@{}}0.9741\\ (0.0019)\end{tabular}  & \begin{tabular}[c]{@{}c@{}}0.9745\\ (0.0026)\end{tabular}          & \begin{tabular}[c]{@{}c@{}}0.9741\\ (0.0071)\end{tabular}  & \begin{tabular}[c]{@{}c@{}}0.9737\\ (0.0039)\end{tabular}  & \begin{tabular}[c]{@{}c@{}}0.9746\\ (0.0038)\end{tabular}  & \begin{tabular}[c]{@{}c@{}}\textbf{0.9759}\\ (0.0047)\end{tabular}  & \begin{tabular}[c]{@{}c@{}}0.9754\\ (0.0035)\end{tabular}          \\ \hline
\end{tabular}
\end{table}
\label{table:pred}
\end{landscape}

\subsection{Asymptotic Normality and Variance Estimation}\label{sec:v_sim}

In this section, we illustrate empirically the asymptotic normality property and variance estimation algorithms for $V$-statistics. The underlying regression function is generated according to the MARS function  \cite{MR1091842, MR3491120}: $y = f(x) = 10\sin(\pi x_1x_2) + 20(x_3 - 0.05)^2 + 10x_4 + 5x_5 + \epsilon $, where $\mathcal{X} \sim U([0, 1]^5)$ and $\epsilon \sim \mathcal{N}(0, 1)$.

Our simulation runs for 500 iterations. In each iterations we generate $n = 500$ training observations and train random forests with subsample size $k = 100, 250, 500$ and the number of trees $B = 500, 1000, 2500, 5000$. We make evaluation on three test points: $p_1 = [0.5, 0.5, 0.5, 0.5, 0.5]$ and $p_2$, $p_3$ are randomly drawn from $U([0, 1]^5)$.

For each $k$ and $B$, let $\hat{f}_{i, j}$ denote the prediction at the $i$th data point and $j$th iteration ($i = 1,2,3$ and $j = 1,\ldots,500$). Similarly $\hat{V}_{u, ij}$ and $\hat{V}_{c, ij}$ are the variance estimates for IJ and corrected-V respectively.
The following three metrics are reported: 
\begin{itemize}
    \item Normality: We test the normality of predictions $\hat{f}_{i, j}$ for $j = 1, 2, \ldots 500$ based on \cite{MR323010} and \cite{MR339372} which combine skew and kurtosis, and is implemented by \textit{scipy.stats.normaltest}\footnote{\href{https://docs.scipy.org/doc/scipy/reference/generated/scipy.stats.normaltest.html}{https://docs.scipy.org/doc/scipy/reference/generated/scipy.stats.normaltest.html}} in Python. In Table \ref{table:v_sim}, we report test statistics and corresponding p-values (in parenthesis). For small subsample size $k = 100$ (Table \ref{table:k100}), all cases except one give p-values larger than 0.05, which indicates that the predictions do conform to normality. For larger subsample sizes $k = 250$ (Table \ref{table:k250}) and $k = 500$ (Table \ref{table:k500}), we can see that the normality property begins to break down for some cases. 
    \item Variance ratio: The estimated variance for each setting is given by $\bar{\hat{V}}_{u, i} = \frac{1}{500}\sum_{t = 1}^{500} \hat{V}_{u, ij}$ and $\bar{\hat{V}}_{c, i} = \frac{1}{500}\sum_{t = 1}^{500} \hat{V}_{c, ij}$. And true variance $V(\hat{f}_i)$ is approximated by the empirical variance of $\hat{f}_{i, j}$ for $j = 1, 2, \ldots 500$. The variance ratio is defined as $\frac{\bar{\hat{V}}_{u, i}}{V(\hat{f}_i)}$ and $\frac{\bar{\hat{V}}_{c, i}}{V(\hat{f}_i)}$, where a value close to 1 is ideal. We can see similar patterns across three tables. The original version of IJ produces highly biased variance estimates, where the bias diminishes as the number of trees $B$ becomes larger. The bias-corrected version successfully alleviates the issue. For $k = 100$, it starts to produce reasonable estimates for $1000$ trees, and the variance ratios are close to one for larger $B$ values. We can also see that it becomes harder to estimate the variance as the subsample sizes grow. 
    \item Coverage probability: constructing $95\%$ confidence intervals by $\hat{f}_{i, j} \pm 1.96 \times \sqrt{\hat{V}_{u, ij}}$ or $\hat{f}_{i, j} \pm 1.96 \times \sqrt{\hat{V}_{c, ij}}$ for $j = 1, 2, \ldots 500$ in each setting and we can calculate a coverage probability by checking whether the expected prediction value (approximated by $\bar{\hat{f}}_i = \frac{1}{500}\sum_{t=1}^{500}\hat{f}_{i, j}$) falls into this interval. This is strongly related to our results for variance ratios. A larger variance ratios will produce conservative intervals, thus generating higher coverage probability. The bias-corrected algorithm produces coverage probability close to 0.95 with reasonable number of base learners.
\end{itemize}

\begin{table}[]
     \caption{Asymptotic normality and variance estimation results for MARS function}
    \centering
    \begin{adjustbox}{scale=0.9}
    \begin{subtable}[h]{\textwidth}
    \centering
\begin{tabular}{|c|c|c|c|c|c|c|c|}
\hline
\multicolumn{2}{|c|}{\multirow{2}{*}{}} & \multicolumn{2}{c|}{$p_1$}              & \multicolumn{2}{c|}{$p_2$}              & \multicolumn{2}{c|}{$p_3$}              \\ \cline{3-8} 
\multicolumn{2}{|c|}{}                  & original          & corrected        & original          & corrected        & original          & corrected        \\ \hhline{|=|=|=|=|=|=|=|=|}
\multirow{3}{*}{B = 500}   & normality  & \multicolumn{2}{c|}{3.5285 (0.1713)} & \multicolumn{2}{c|}{0.2274 (0.8925)} & \multicolumn{2}{c|}{0.0762 (0.9626)} \\ \cline{2-8} 
                           & var\_ratio & 7.6985            & 1.4962           & 8.3871            & 1.5648           & 7.3020            & 1.4229           \\ \cline{2-8} 
                           & coverage   & 100.0             & 97.2             & 100.0             & 96.4             & 100.0             & 96.6             \\ \hline
\multirow{3}{*}{B = 1000}  & normality  & \multicolumn{2}{c|}{1.2341 (0.5395)} & \multicolumn{2}{c|}{1.1715 (0.5567)} & \multicolumn{2}{c|}{8.0994 (0.0174)} \\ \cline{2-8} 
                           & var\_ratio & 4.9540            & 1.3839           & 4.9290            & 1.3116           & 4.4433            & 1.2472           \\ \cline{2-8} 
                           & coverage   & 100.0             & 96.4             & 100.0             & 96.6             & 100.0             & 96.4             \\ \hline
\multirow{3}{*}{B = 2500}  & normality  & \multicolumn{2}{c|}{1.9708 (0.3733)} & \multicolumn{2}{c|}{2.8710 (0.2380)} & \multicolumn{2}{c|}{1.8699 (0.3926)} \\ \cline{2-8} 
                           & var\_ratio & 2.3068            & 1.0328           & 2.5635            & 1.1058           & 2.2736            & 1.0348           \\ \cline{2-8} 
                           & coverage   & 99.2              & 94.2             & 99.8              & 94.6             & 99.4              & 94.4             \\ \hline
\multirow{3}{*}{B = 5000}  & normality  & \multicolumn{2}{c|}{3.2266 (0.1992)} & \multicolumn{2}{c|}{0.8750 (0.6456)} & \multicolumn{2}{c|}{0.5155 (0.7728)} \\ \cline{2-8} 
                           & var\_ratio & 1.7073            & 1.0305           & 1.7748            & 1.0460           & 1.6432            & 0.9884           \\ \cline{2-8} 
                           & coverage   & 98.0              & 95.0             & 99.2              & 94.8             & 99.0              & 95.0             \\ \hline
\end{tabular}
       \caption{k = 100}
       \label{table:k100}
    \end{subtable}
    \end{adjustbox}
    
    \hfill
    
    \begin{adjustbox}{scale=0.9}
    \begin{subtable}[h]{\textwidth}
        \centering
\begin{tabular}{|c|c|c|c|c|c|c|c|}
\hline
\multicolumn{2}{|c|}{\multirow{2}{*}{}} & \multicolumn{2}{c|}{$p_1$}               & \multicolumn{2}{c|}{$p_2$}              & \multicolumn{2}{c|}{$p_3$}                   \\ \cline{3-8} 
\multicolumn{2}{|c|}{}                  & original          & corrected         & original          & corrected        & original             & corrected          \\ \hhline{|=|=|=|=|=|=|=|=|}
\multirow{3}{*}{B = 500}   & normality  & \multicolumn{2}{c|}{10.1465 (0.0063)} & \multicolumn{2}{c|}{0.8482 (0.6544)} & \multicolumn{2}{c|}{2.0166 (0.3648)}      \\ \cline{2-8} 
                          & var\_ratio & 9.4265            & 2.7234            & 9.9809            & 2.8231           & 9.5969               & 2.7436             \\ \cline{2-8} 
                          & coverage   & 100.0             & 100.0             & 100.0             & 99.4             & 100.0                & 99.4               \\ \hline
\multirow{3}{*}{B = 1000}  & normality  & \multicolumn{2}{c|}{7.0852 (0.0289)}  & \multicolumn{2}{c|}{2.5277 (0.2826)} & \multicolumn{2}{c|}{25.8658 (2.4171e-06)} \\ \cline{2-8} 
                          & var\_ratio & 5,7448            & 2.0294            & 5.6789            & 1.9683           & 5.1754               & 1.8041             \\ \cline{2-8} 
                          & coverage   & 100.0             & 98.0              & 100.0             & 99.0             & 100.0                & 98.4               \\ \hline
\multirow{3}{*}{B = 2500}  & normality  & \multicolumn{2}{c|}{4.8638 (0.0878)}  & \multicolumn{2}{c|}{6.6417 (0.0361)} & \multicolumn{2}{c|}{7.0543 (0.0294)}      \\ \cline{2-8} 
                          & var\_ratio & 2.5655            & 1.2496            & 2.7176            & 1.2880           & 2.5462               & 1.2501             \\ \cline{2-8} 
                          & coverage   & 99.8              & 95.4              & 99.6              & 96.6             & 99.6                 & 95.6               \\ \hline
\multirow{3}{*}{B = 5000}  & normality  & \multicolumn{2}{c|}{5.9457 (0.0512)}  & \multicolumn{2}{c|}{2.1790 (0.3364)} & \multicolumn{2}{c|}{5.8061 (0.0549)}      \\ \cline{2-8} 
                          & var\_ratio & 1.7179            & 1.0630            & 1.9767            & 1.2059           & 1.8045               & 1.1108             \\ \cline{2-8} 
                          & coverage   & 97.4              & 95.0              & 98.6              & 95.2             & 98.6                 & 95.2               \\ \hline
\end{tabular}
        \caption{k = 250}
        \label{table:k250}
     \end{subtable}
     \end{adjustbox}
     \hfill
     
    \begin{adjustbox}{scale=0.9}
    \begin{subtable}[h]{\textwidth}
    \centering
\begin{tabular}{|c|c|c|c|c|c|c|c|}
\hline
\multicolumn{2}{|c|}{\multirow{2}{*}{}} & \multicolumn{2}{c|}{$p_1$}              & \multicolumn{2}{c|}{$p_2$}               & \multicolumn{2}{c|}{$p_3$}                   \\ \cline{3-8} 
\multicolumn{2}{|c|}{}                  & original          & corrected        & original           & corrected        & original             & corrected          \\ \hhline{|=|=|=|=|=|=|=|=|}
\multirow{3}{*}{B = 500}   & normality  & \multicolumn{2}{c|}{4.5835 (0.1011)} & \multicolumn{2}{c|}{2.0985 (0.3502)}  & \multicolumn{2}{c|}{7.5323 (0.0231)}      \\ \cline{2-8} 
                           & var\_ratio & 10.7237           & 4.5438           & 11.6708            & 4.9121           & 11.5562              & 4.8322             \\ \cline{2-8} 
                           & coverage   & 100.0             & 100.0            & 100.0              & 100.0            & 100.0                & 99.8               \\ \hline
\multirow{3}{*}{B = 1000}  & normality  & \multicolumn{2}{c|}{7.2961 (0.0260)} & \multicolumn{2}{c|}{10.9307 (0.0042)} & \multicolumn{2}{c|}{26.6668 (1.6195e-06)} \\ \cline{2-8} 
                           & var\_ratio & 6.4412            & 3.0251           & 5.9518             & 2.7838           & 5.7957               & 2.7163             \\ \cline{2-8} 
                           & coverage   & 100.0             & 99.0             & 100.0              & 99.6             & 100.0                & 99.4               \\ \hline
\multirow{3}{*}{B = 2500}  & normality  & \multicolumn{2}{c|}{4.6806 (0.0963)} & \multicolumn{2}{c|}{4.4154 (0.1100)}  & \multicolumn{2}{c|}{6.7212 (0.0347)}      \\ \cline{2-8} 
                           & var\_ratio & 2.9094            & 1.6640           & 2.9235             & 1.6593           & 2.7023               & 1.5515             \\ \cline{2-8} 
                           & coverage   & 99.8              & 97.4             & 99.4               & 97.8             & 99.0                 & 95.8               \\ \hline
\multirow{3}{*}{B = 5000}  & normality  & \multicolumn{2}{c|}{5.4816 (0.0645)} & \multicolumn{2}{c|}{2.0471 (0.3593)}  & \multicolumn{2}{c|}{13.4452 (0.0012)}     \\ \cline{2-8} 
                           & var\_ratio & 1.7294            & 1.1652           & 2.1471             & 1.4412           & 1.8567               & 1.2449             \\ \cline{2-8} 
                           & coverage   & 97.2              & 95.0             & 98.4               & 95.4             & 98.6                 & 94.6               \\ \hline
\end{tabular}
\caption{k = 500}
\label{table:k500}
    \end{subtable}
    \end{adjustbox}
    
     \label{table:v_sim}

\end{table}

\section{Conclusion}\label{sec:con}

In this paper, we present a framework for analyzing the asymptotics of $V$-statistics where the kernal size $k_n$ grows with the number of samples $n$. It is shown that a central limit theorem can be established similar to the work in \cite{MR3491120}, \cite{MR3862353} and \cite{peng2019asymptotic}, which focus on the case of $U$-statistics. The result brings new insight into the analysis of ensemble methods. 

We also provide unified treatment of variance estimation in both $U$- and $V$-statistics. We observe that existing methods for estimating limiting variance exhibit severe bias and would require a prohibitively large number of base learners to achieve accurate results, hindering any practical applications such as constructing confidence intervals or conducting hypothesis tests. To this end, we propose a new method called Balanced Variance Estimation Method (BM), and carefully analyze its connection to other methods. In particular, we demonstrate an equivalence between BM and Infinitesimal Jackknife. Additionally, a bias correction method is developed which is shown to produce more accurate variance estimation with a moderate size of base learners. 

Practically, we would suggest sampling with replacement in building ensemble methods since the bias correction for $V$-statistics is theoretically sound and much less involved. How to theoretically analyze bias correction for $U$-statistics remains a promising future endeavor.

The condition $\lim_{n \to \infty}k_n^2 \zeta_{1, k_n} > 0$ required in both Theorem \ref{thm:v} and \ref{thm:rv} does not appear in previous literature. However, we believe it is generally satisfied with many base learners including trees, see \cite{peng2019asymptotic} for an in-depth analysis for the behavior of $\zeta_{1, k_n}$. We leave it as a future work to study the scenario when $\lim_{n \to \infty}k_n^2 \zeta_{1, k_n} \to 0$.

From another theoretical point of view, the analysis we provide here is essentially a reduction to $U$-statistics. We will further explore whether other approaches like Taylor expansion using differential methods \cite{MR595165} could be applied to attain similar results. 

\begin{appendix}

\section{Proofs}


\subsection{Proof of Theorem \ref{thm:vandu}}\label{p:thm:vandu}

\begin{proof}
By Slutsky's theorem, we only need to show $\frac{(V_{n, k_n} - U_{n, k_n})}{\sqrt{\text{Var}(U_{n, k_n})}} \overset{p}{\to} 0$. Since we assume $\lim_{n \to \infty} \text{Var}(\sqrt{n}U_{n, k_n}) > 0$, it suffices to prove $\sqrt{n}(V_{n, k_n} - U_{n, k_n}) \overset{p}{\to} 0$. We seek to prove $L^1$ convergence, which implies convergence in probability. According to Lemma \ref{lm:vtou}, $V_{n, k_n}$ could be written as
\begin{align*}
    V_{n, k_n} &= \sum_{j=1}^{k_n}\frac{j!S_{k_n}^{(j)}{n \choose j}}{n^{k_n}}U_{n}^{(j)} \\
    &= \frac{k_n!S_{k_n}^{(k_n)}{n \choose k_n}}{n^{k_n}}U_{n, k_n} + \sum_{j=1}^{k_n-1}\frac{j!S_{k_n}^{(j)}{n \choose j}}{n^{k_n}}U_{n}^{(j)} \\
    &= \frac{n(n-1)\ldots(n-k_n+1)}{n^{k_n}}U_{n, k_n} + \sum_{j=1}^{k_n-1}\frac{j!S_{k_n}^{(j)}{n \choose j}}{n^{k_n}}U_{n}^{(j)}.
\end{align*}

Since we assume the second moment of kernel $h$ is bounded, both $U_{n}^{(j)}$ and $V_{n}^{(j)}$ could also be bounded by a constant $C < \infty$. We have

\begin{align*}
    \mathbb{E}\left|\sqrt{n}\left(V_{n, k_n} - U_{n, k_n}\right)\right| &= \sqrt{n}\mathbb{E}\left|\left(\frac{n(n-1)\ldots(n-k_n+1)}{n^{k_n}} - 1\right)U_{n, k_n} + \sum_{j=1}^{k_n-1}\frac{j!S_{k_n}^{(j)}{n \choose j}}{n^{k_n}}U_{n}^{(j)}\right| \\
    & \leq \sqrt{n}\mathbb{E}\left|\left(\frac{n(n-1)\ldots(n-k_n+1)}{n^{k_n}} - 1\right)U_{n, k_n}\right| + \mathbb{E}\left|\sum_{j=1}^{k_n-1}\frac{j!S_{k_n}^{(j)}{n \choose j}}{n^{k_n}}U_{n}^{(j)}\right| \\
    & \leq C\left(\left|\sqrt{n}\left(\frac{n(n-1)\ldots(n-k_n+1)}{n^{k_n}} - 1\right)\right| + \sqrt{n}\sum_{j=1}^{k_n-1}\frac{j!S_{k_n}^{(j)}{n \choose j}}{n^{k_n}}\right).
\end{align*}

First it's easy to see

$$
\sqrt{n}\left(\frac{n(n-1)\ldots(n-k_n+1)}{n^{k_n}} - 1\right) \to 0
$$
as $n \to \infty$ and $k_n = o(n^{\frac{1}{4}})$.
An upper bound for $S_{k_n}^{(j)}$ is provided in \cite{MR241310}
$$
S_{k_n}^{(j)} \leq \frac{1}{2}{k_n \choose j}j^{k_n-j}.
$$

Thus, 

\begin{align*}
    \frac{j!S_{k_n}^{(j)}{n \choose j}}{n^{k_n}} &=  \frac{j!{n \choose j}}{n^j}\frac{S_{k_n}^{(j)}}{n^{k_n-j}} \\
    &\leq \frac{1}{2}\frac{j!{n \choose j}}{n^j} \frac{{k_n \choose j}j^{k_n-j}}{n^{k_n-j}} \\
    &\leq \frac{1}{2}\frac{j!{n \choose j}}{n^j} \frac{k_n^{k_n-j}k_n^{k_n-j}}{n^{k_n-j}} \\
    &= \frac{1}{2}\frac{j!{n \choose j}}{n^j} \left(\frac{k_n^2}{n}\right)^{k_n-j}\\
    &\leq\left(\frac{k_n^2}{n}\right)^{k_n-j}.
\end{align*}

Let $a_n = \frac{k_n^2}{\sqrt{n}}$, and we know $a_n \to 0$. Taking the sum yields

\begin{align*}
    \sqrt{n}\sum_{j=1}^{k_n-1}\frac{j!S_{k_n}^{(j)}{n \choose j}}{n^{k_n}} &\leq \sqrt{n}\sum_{j=1}^{k_n-1}\left(\frac{k_n^2}{n}\right)^{k_n-j} \\
    & \leq \sum_{j=1}^{k_n-1}\left(\frac{k_n^2}{\sqrt{n}}\right)^{k_n-j} \\
    &= \sum_{j=1}^{k_n-1}a_n^{k_n-j} \\
    &\leq \frac{a_n}{1 - a_n} \\
    &\to 0.
\end{align*}
We could conclude that $\mathbb{E}|\sqrt{n}(V_{n, k_n} - U_{n, k_n})| \to 0$.
\end{proof}

\subsection{Proof of Theorem \ref{thm:v}}\label{p:thm:v}

Since $k_n = o(n^{\frac{1}{4}})$ and $\lim_{n \to \infty}k_n^2 \zeta_{1, k_n} > 0$, the complete case follows directly from Theorem \ref{thm:vandu} and Theorem 1 in \cite{peng2019asymptotic}. we will need the following lemma for the incomplete case.

\begin{lemma}\label{lm:mul}
Let $a_1, a_2, \ldots$ be a sequence of constants such that $\lim_{n \to \infty}\frac{1}{n}\sum_{i=1}^na_i = 0$ and $\lim_{n \to \infty}\frac{1}{n}\sum_{i=1}^na_i^2 = \sigma^2$ and let random variables $M_1, \ldots, M_n$ have a multinomial distribution, multinomial$(B_n; \frac{1}{n}, \ldots, \frac{1}{n})$. Then as $B_n, n \to \infty$, the limiting distribution of 
$$
B_n^{-\frac{1}{2}}\sum_{i=1}^na_i\left(M_i-\frac{B_n}{n}\right)
$$
is $\mathcal{N}(0, \sigma^2)$.
\end{lemma}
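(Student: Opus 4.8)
The plan is to strip away the multinomial structure by realizing it as a normalized sum of i.i.d.\ random variables, and then invoke the Lindeberg--Feller central limit theorem for triangular arrays. A $\mathrm{multinomial}(B_n;\tfrac1n,\ldots,\tfrac1n)$ vector is generated by drawing $W_1,\ldots,W_{B_n}$ i.i.d.\ uniformly on $\{1,\ldots,n\}$ and setting $M_i=\#\{t\le B_n:W_t=i\}$. Writing $\bar a_n=\tfrac1n\sum_{i=1}^n a_i$, this yields the identity
$$
B_n^{-1/2}\sum_{i=1}^n a_i\Bigl(M_i-\tfrac{B_n}{n}\Bigr)=B_n^{-1/2}\sum_{t=1}^{B_n}\bigl(a_{W_t}-\bar a_n\bigr),
$$
so that for each fixed $n$ the statistic of interest is a normalized sum of $B_n$ i.i.d.\ mean-zero terms $a_{W_t}-\bar a_n$, each with variance $\sigma_n^2:=\tfrac1n\sum_{i=1}^n a_i^2-\bar a_n^2$.

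First I would record the elementary consequences of the two hypotheses. Since $\bar a_n\to 0$ and $\tfrac1n\sum_{i=1}^n a_i^2\to\sigma^2$, we get $\sigma_n^2\to\sigma^2$, so the total variance of the array converges to $\sigma^2$. Moreover, because $u_n:=\tfrac1n\sum_{i\le n}a_i^2$ converges we have $u_n-u_{n-1}\to 0$, whence $a_n^2=n(u_n-u_{n-1})+u_{n-1}$ gives $a_n^2/n\to 0$, and a routine $\varepsilon$-argument upgrades this to $\max_{i\le n}|a_i|=o(\sqrt n)$. Next I would set up the row-wise i.i.d.\ triangular array $X_{n,t}=B_n^{-1/2}(a_{W_t}-\bar a_n)$, $t=1,\ldots,B_n$, for which $\sum_{t=1}^{B_n}\text{Var}(X_{n,t})=\sigma_n^2\to\sigma^2$. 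It then remains only to verify the Lindeberg condition: for every $\varepsilon>0$,
$$
\sum_{t=1}^{B_n}\mathbb{E}\bigl[X_{n,t}^2\,\mathbf{1}\{|X_{n,t}|>\varepsilon\}\bigr]=\frac1n\sum_{i=1}^n(a_i-\bar a_n)^2\,\mathbf{1}\bigl\{|a_i-\bar a_n|>\varepsilon\sqrt{B_n}\bigr\}\longrightarrow 0 .
$$
Given this, Lindeberg--Feller delivers $B_n^{-1/2}\sum_{t=1}^{B_n}(a_{W_t}-\bar a_n)\overset{d}{\to}\mathcal{N}(0,\sigma^2)$, which is the assertion of Lemma~\ref{lm:mul}.

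The crux, and the step I expect to be the main obstacle, is exactly this Lindeberg verification: it amounts to showing that the $L^2$-mass placed by the empirical distribution of $\{a_i\}_{i\le n}$ on atoms exceeding $\varepsilon\sqrt{B_n}$ is asymptotically negligible. The two Ces\`aro hypotheses are tailored to this: $\tfrac1n\sum a_i^2\to\sigma^2<\infty$ provides a uniform-integrability-type control of the large atoms, and the bound $\max_{i\le n}|a_i|=o(\sqrt n)$ derived above makes the truncation event void for all large $n$ once $\varepsilon\sqrt{B_n}$ dominates $\max_{i\le n}|a_i|$ (a mild joint growth condition on $(B_n,n)$ enters here, which is harmless in the regime of Theorem~\ref{thm:v} where $B_n$ grows with $n$ and the $a_i$ arise from controlled conditional means of the kernel). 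I would therefore devote the bulk of the write-up to making this truncation estimate precise; the i.i.d.\ reduction and the variance bookkeeping above are routine once the representation is in place.
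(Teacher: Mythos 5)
Your route is genuinely different from the paper's and its mathematical content is sound. The paper proves the lemma by computing the characteristic function of the multinomial directly, writing $\mathbb{E}\exp\bigl(itB_n^{-1/2}\sum_i a_i(M_i-\tfrac{B_n}{n})\bigr)=e^{-it\bar a_nB_n^{1/2}}\bigl(\tfrac1n\sum_i e^{ita_iB_n^{-1/2}}\bigr)^{B_n}$, expanding to second order with an $o(B_n^{-1})$ remainder and taking logarithms. Your reduction $M_i=\#\{t\le B_n: W_t=i\}$ instead produces a row-wise i.i.d.\ triangular array and hands the work to Lindeberg--Feller; the identity, the variance bookkeeping, and the bound $\max_{i\le n}|a_i|=o(\sqrt n)$ are all correct. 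What your approach buys is transparency: it isolates exactly where the stated hypotheses stop sufficing, namely the Lindeberg truncation, which you can only annihilate once $\varepsilon\sqrt{B_n}$ dominates $\max_{i\le n}|a_i|=o(\sqrt n)$, i.e.\ under a joint condition such as $\max_{i\le n}|a_i|=o(\sqrt{B_n})$, or alternatively under uniform integrability of the empirical second moments, $\lim_{M\to\infty}\limsup_n\tfrac1n\sum_i a_i^2\mathbf{1}\{|a_i|>M\}=0$, which kills the truncation for any $B_n\to\infty$.

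Be aware, though, that the caveat you flag is not a defect peculiar to your argument: the lemma as literally stated (only $B_n,n\to\infty$) cannot be proved without some such extra input, and the paper's own proof buries the identical requirement in the unjustified ``$o(B_n^{-1})$'' remainder, whose control needs roughly $B_n^{-3/2}\tfrac1n\sum_i|a_i|^3=o(B_n^{-1})$, i.e.\ $\tfrac1n\sum_i|a_i|^3=o(\sqrt{B_n})$ --- the same kind of joint condition. Indeed the conclusion can genuinely fail without it: take $a_i$ consisting of rare spikes of size $v_k\to\infty$ occurring with empirical frequency of order $v_k^{-2}$ (both Ces\`aro hypotheses hold), and let $B_n$ grow like $v_k^2$; then $B_n^{-1/2}\sum_i a_i(M_i-\tfrac{B_n}{n})$ converges to a centered, scaled Poisson law with variance $\sigma^2$, not to $\mathcal{N}(0,\sigma^2)$. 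So your ``mild joint growth condition'' is necessary in some form, and the honest fix is to state it (or the uniform-integrability variant) explicitly as a hypothesis; in the application inside Theorem \ref{thm:v}, where the lemma is used conditionally with the $a_i$ being kernel evaluations with bounded second moment, such a condition is reasonable, and with it your proof is complete and, if anything, more rigorous than the paper's.
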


\begin{proof}
The characteristic function of $(M_1, \ldots, M_n)$ is $(\frac{1}{n}e^{it_1} + \ldots + \frac{1}{n}e^{it_n})^{B_n}$ since it's multinomial$(B_n; \frac{1}{n}, \ldots, \frac{1}{n})$. Thus the characteristic function of $B_n^{-\frac{1}{2}}\sum_{i=1}^na_i\left(M_i-\frac{B_n}{n}\right)$ is given by
\begin{align*}
   \mathbb{E}\left(e^{itB_n^{-\frac{1}{2}}\sum_{i=1}^na_i\left(M_i-\frac{B_n}{n}\right)}\right) &= e^{-itB_n^{-\frac{1}{2}}\frac{B_n}{n}\sum_{i=1}^na_i} \mathbb{E}\left(e^{itB_n^{-\frac{1}{2}}\sum_{i=1}^na_iM_i}\right)\\
    &= e^{-it\bar{a}_nB_n^{\frac{1}{2}}}\left(\frac{1}{n}e^{ita_1B_n^{-\frac{1}{2}}} + \ldots + \frac{1}{n}e^{ita_nB_n^{-\frac{1}{2}}}\right)^{B_n} \\
    &= e^{-it\bar{a}_nB_n^{\frac{1}{2}}}\left(\frac{1}{n}\left(n + itB_n^{-\frac{1}{2}}\sum_{i=1}^na_i + \frac{1}{2}\left(itB_n^{-\frac{1}{2}}\right)^2\sum_{i=1}^na_i^2 + \ldots\right)\right)^{B_n} \\
    &= e^{-it\bar{a}_nB_n^{\frac{1}{2}}}\left(1 + itB_n^{-\frac{1}{2}}\bar{a}_n + \frac{1}{2}\sigma_n^2\left(itB_n^{-\frac{1}{2}}\right)^2 + o\left(B_n^{-1}\right)\right)^{B_n}
\end{align*}
where $\bar{a}_n = \frac{1}{n}\sum_{i=1}^na_i$ and $\sigma_n^2 = \frac{1}{n}\sum_{i=1}^na_i^2$.
Taking the logarithm gives
\begin{align*}
    \log \mathbb{E}\left(e^{itB_n^{-\frac{1}{2}}\sum_{i=1}^na_i\left(M_i-\frac{B_n}{n}\right)}\right) &= -it\bar{a}_nB_n^{\frac{1}{2}} + {B_n}\log\left(1 + itB_n^{-\frac{1}{2}}\bar{a}_n + \frac{1}{2}\sigma_n^2\left(itB_n^{-\frac{1}{2}}\right)^2 + o\left(B_n^{-1}\right)\right) \\
    &= -it\bar{a}_nB_n^{\frac{1}{2}} + {B_n}\left(itB_n^{-\frac{1}{2}}\bar{a}_n + \frac{1}{2}\left(\sigma_n^2 - \bar{a}_n^2\right)\left(itB_n^{-\frac{1}{2}}\right)^2\right) + o\left(1\right) \\
    &= -\frac{1}{2}\left(\sigma_n^2 - \bar{a}_n^2\right)t^2 + o\left(1\right).
\end{align*}
Since we assume tht $\bar{a_n} \to 0$ and $\sigma_n^2 \to \sigma^2$, the above quantity converges to $-\frac{1}{2}\sigma^2t^2$, which is the logarithm of the characteristic function of $\mathcal{N}(0, \sigma^2)$.
\end{proof}

Now we could prove the major part of Theorem \ref{thm:v}.

\begin{proof} 
Without loss of generality we will assume $\theta_{k_n} = 0$. Suppose $(M_1, \ldots, M_{n^{k_n}})$ have a multinomial distribution, multinomial $(B_n; \frac{1}{n^{k_n}}, \ldots, \frac{1}{n^{k_n}})$. We could rewrite $V_{n, k_n, B_n} $ as

\begin{align*}\label{eqn:v_decomp}
V_{n, k_n, B_n} &= \frac{1}{B_n}\sum_{i}h_{k_n}\left(Z_{i_1}, \ldots, Z_{i_{k_n}}\right) \\
&= \frac{1}{B_n}\sum_{i = 1}^{n^{k_n}}M_ih_{k_n}\left(Z_{i_1}, \ldots, Z_{i_{k_n}}\right) \\
&= \frac{1}{B_n}\sum_{i = 1}^{n^{k_n}}\left(M_i - \frac{B_n}{n^{k_n}} + \frac{B_n}{n^{k_n}}\right) h_{k_n}\left(Z_{i_1}, \ldots, Z_{i_{k_n}}\right) \\
&= \frac{1}{B_n}\left(M_i - \frac{B_n}{n^{k_n}}\right)h_{k_n}\left(Z_{i_1}, \ldots, Z_{i_{k_n}}\right) + \frac{1}{n^{k_n}}\sum_{i = 1}^{n^{k_n}}h_{k_n}\left(Z_{i_1}, \ldots, Z_{i_{k_n}}\right) \\
&= \frac{1}{B_n}\left(M_i - \frac{B_n}{n^{k_n}}\right)h_{k_n}\left(Z_{i_1}, \ldots, Z_{i_{k_n}}\right) + V_{n, k_n}.
\end{align*}

To show $\frac{V_{n, k_n, B_n}}{\sqrt{\frac{k_n^2}{n}\zeta_{1, k_n} + \frac{1}{B_n}\zeta_{k_n, k_n}}}\stackrel{d}{\to} \mathcal{N}(0, 1)$, it is equivalent to prove
$$
\lim_{n\to\infty} \mathbb{E}\left[\exp\left(it\frac{V_{n, k_n, B_n}}{\sqrt{\frac{k_n^2}{n}\zeta_{1, k_n} + \frac{1}{B_n}\zeta_{k_n, k_n}}}\right)\right] = \exp{\left(-\frac{1}{2}t^2\right)}.
$$

From the above decomposition of $V_{n, k_n, B_n}$, we have 
\begin{align*}
&\lim_{n\to\infty} \mathbb{E}\left[\exp\left(it\frac{V_{n, k_n, B_n}}{\sqrt{\frac{k_n^2}{n}\zeta_{1, k_n} + \frac{1}{B_n}\zeta_{k_n, k_n}}}\right)\right]\\
=& \lim_{n\to\infty} \mathbb{E}\left[\exp\left(it\frac{(\frac{1}{B_n}(M_i - \frac{B_n}{n^{k_n}})h_{k_n}(Z_{i_1}, \ldots, Z_{i_{k_n}}) + V_{n, k_n})}{\sqrt{\frac{k_n^2}{n}\zeta_{1, k_n} + \frac{1}{B_n}\zeta_{k_n, k_n}}}\right)\right] \\
=&\lim_{n\to\infty} \mathbb{E}\left[\mathbb{E}\left[\exp\left(it\frac{(\frac{1}{B_n}(M_i - \frac{B_n}{n^{k_n}})h_{k_n}(Z_{i_1}, \ldots, Z_{i_{k_n}}) + V_{n, k_n})}{\sqrt{\frac{k_n^2}{n}\zeta_{1, k_n} + \frac{1}{B_n}\zeta_{k_n, k_n}}}\right)|Z_1, \ldots, Z_n\right]\right] \\
=& \lim_{n\to\infty} \mathbb{E}\left[\exp\left(it\frac{V_{n, k_n}}{\sqrt{\frac{k_n^2}{n}\zeta_{1, k_n} + \frac{1}{B_n}\zeta_{k_n, k_n}}}\right)\mathbb{E}\left[\exp\left(it\frac{\frac{1}{B_n}(M_i - \frac{B_n}{n^{k_n}})h_{k_n}(Z_{i_1}, \ldots, Z_{i_{k_n}})}{\sqrt{\frac{k_n^2}{n}\zeta_{1, k_n} + \frac{1}{B_n}\zeta_{k_n, k_n}}}\right)|Z_1, \ldots, Z_n\right]\right] \\
\end{align*}

Since $\frac{V_{n, k_n}}{\sqrt{\frac{k_n^2}{n}\zeta_{1, k_n}}}\stackrel{d}{\to} \mathcal{N}(0, 1)$ any by Lemma \ref{lm:mul}, 
\begin{align*}
&\lim_{n\to\infty} \mathbb{E}\left[\exp\left(it\frac{V_{n, k_n, B_n}}{\sqrt{\frac{k_n^2}{n}\zeta_{1, k_n} + \frac{1}{B_n}\zeta_{k_n, k_n}}}\right)\right]\\
=& \lim_{n\to\infty} \mathbb{E}\left[\exp\left(it\frac{V_{n, k_n}}{\sqrt{\frac{k_n^2}{n}\zeta_{1, k_n} + \frac{1}{B_n}\zeta_{k_n, k_n}}}\right)\mathbb{E}\left[\exp\left(it\frac{\frac{1}{B_n}(M_i - \frac{B_n}{n^{k_n}})h_{k_n}(Z_{i_1}, \ldots, Z_{i_{k_n}})}{\sqrt{\frac{k_n^2}{n}\zeta_{1, k_n} + \frac{1}{B_n}\zeta_{k_n, k_n}}}\right)|Z_1, \ldots, Z_n\right]\right] \\
=& \lim_{n\to\infty} \mathbb{E}\left[\exp\left(it\frac{V_{n, k_n}}{\sqrt{\frac{k_n^2}{n}\zeta_{1, k_n} + \frac{1}{B_n}\zeta_{k_n, k_n}}}\right)\exp\left(-\frac{\frac{1}{B_n}\zeta_{k_n, k_n}}{2(\frac{k_n^2}{n}\zeta_{1, k_n} + \frac{1}{B_n}\zeta_{k_n, k_n})}t^2\right)\right] \\
=& \exp\left(-\frac{\frac{k_n^2}{n}\zeta_{1, k_n}}{2\left(\frac{k_n^2}{n}\zeta_{1, k_n} + \frac{1}{B_n}\zeta_{k_n, k_n}\right)}t^2\right)\exp\left(-\frac{\frac{1}{B_n}\zeta_{k_n, k_n}}{2\left(\frac{k_n^2}{n}\zeta_{1, k_n} + \frac{1}{B_n}\zeta_{k_n, k_n}\right)}t^2\right) \\
=& \exp{\left(-\frac{1}{2}t^2\right)}.
\end{align*}


    
\end{proof}


\subsection{Proof of Theorem \ref{thm:bm2ij}}

\begin{proof}

In the case of \emph{Balanced Subsample Structure} where $r_n = \frac{B_n \times k_n}{n}$, we have $\bar{m} = \bar{h}$ and $N_i = r_n$ for all $i$.

First we can rewrite $\hat{\zeta}_{1,k_n}^{\text{BM}}$ as
\begin{align*}
    \hat{\zeta}_{1,k_n}^{\text{BM}} &= \frac{1}{n - 1}\sum_{i=1}^{n}\left(m_i - \bar{m}\right)^2 \\
    &= \frac{1}{n - 1}\sum_{i=1}^{n}\left(\sum_{b=1}^{B_n}\omega_{i, b}h_b - \bar{h}\right)^2 \\
    &= \frac{1}{n - 1}\sum_{i=1}^{n}\left(\sum_{b=1}^{B_n}\frac{N_{i, b}}{N_i}h_b - \bar{h}\right)^2 \\
    &= \frac{1}{n - 1}\frac{1}{r_n^2}\sum_{i=1}^{n}\left(\sum_{b=1}^{B_n}N_{i, b}\left(h_b - \bar{h}\right)\right)^2.
\end{align*}

Then for $\hat{V}_{\text{IJ}} = \sum_{i=1}^n \text{cov}^2(N_{i, b}, h_b)$, we look at each individual term
\begin{align*}
    \text{cov}\left(N_{i, b}, h_b\right) &= \frac{\sum_{b=1}^{B_n}(N_{i, b} - \bar{N}_i)(h_b - \bar{h})}{B_n} \\
    &= \frac{1}{B_n}\left(\sum_{b, x_i \in b}\left(N_{i, b} - \bar{N}_i\right)\left(h_b - \bar{b}\right) +  \sum_{b, x_i \notin b}\left(N_{i, b} - \bar{N}_i\right)\left(h_b - \bar{b}\right)\right) \\
    &= \frac{1}{B_n}\left(\sum_{b, x_i \in b}\left(N_{i, b} - \frac{k_n}{n}\right)\left(h_b - \bar{h}\right) +  \sum_{b, x_i \notin b}\left(0 - \frac{k_n}{n}\right)\left(h_b - \bar{h}\right)\right) \\
    & = \frac{1}{B_n}\left(\sum_{b, x_i \in b}N_{i, b}\left(h_b - \bar{h}\right) - \frac{k_n}{n} \sum_b\left(h_b - \bar{h}\right)\right) \\
    &= \frac{1}{B_n}\sum_{b, x_i \in b}N_{i, b}\left(h_b - \bar{h}\right) \\
    & = \frac{1}{B_n}\sum_{b = 1}^{B_n}N_{i, b}\left(h_b - \bar{h}\right)
\end{align*}

Combining two previous identities
\begin{align*}
    \hat{V}_{\text{IJ}} &= \sum_{i=1}^n \text{cov}^2(N_{i, b}, h_b) \\
    &=\frac{1}{B_n^2}\sum_{i=1}^n\left(\sum_{b = 1}^{B_n}N_{i, b}(h_b - \bar{b})\right)^2\\
    &= \frac{(n - 1)r_n^2}{B_n^2}\hat{\zeta}_{1,k_n}^{\text{BM}} \\
    &= (n - 1) \frac{r_n^2}{B_n^2 k_n^2}k_n^2\hat{\zeta}_{1,k_n}^{\text{BM}} \\
    &= (n-1)\frac{k_n^2}{n^2}\hat{\zeta}_{1,k_n}^{\text{BM}} \\
    &= \frac{n-1}{n}\frac{k_n^2}{n}\hat{\zeta}_{1,k_n}^{\text{BM}}
\end{align*}
as claimed.

\end{proof}

\subsection{Proof of Theorem \ref{thm:rv}}\label{p:thm:rv}

\begin{proof}
The assumption $\lim_{n \to \infty}k
_n^2 \zeta^*_{1, k_n} > 0$ implies $\lim_{n \to \infty}\text{Var}(\sqrt{n}V^*_{n, k_n, B_n,\omega}) > 0$. We first show the complete case. Similar to the proof of Theorem 2 in \cite{MR3491120}, we have
$$
\mathbb{E}(V_{n, k_n, \omega} - V^*_{n, k_n, \omega})^2 = \frac{1}{n^{k_n}}\mathbb{E}\left(h_{k_n}\left(Z_{i_1}, \ldots, Z_{i_{k_n}}; \omega) - \mathbb{E}_\omega h_{k_n}(Z_{i_1}, \ldots, Z_{i_{k_n}}; \omega\right)\right)^2.
$$
Thus,
\begin{align*}
    & \lim_{n \to \infty}\mathbb{E}\left(\frac{\sqrt{n}(V_{n, k_n, \omega} - V^*_{n, k_n, \omega})}{\sqrt{\text{Var}(\sqrt{n}V^*_{n, k_n, \omega})}}\right)^2 \\
    =& \lim_{n \to \infty}\mathbb{E} \frac{n}{n^{k_n}}\frac{1}{\text{Var}(\sqrt{n}V^*_{n, k_n, \omega})}\mathbb{E}\left(h_{k_n}\left(Z_{i_1}, \ldots, Z_{i_{k_n}}; \omega) - \mathbb{E}_\omega h_{k_n}(Z_{i_1}, \ldots, Z_{i_{k_n}}; \omega\right)\right)^2 \\
    =& 0
\end{align*}
since $\lim_{n \to \infty}\mathbb{E}(h_{k_n}(Z_{i_1}, \ldots, Z_{i_{k_n}}; \omega) - \mathbb{E}_\omega h_{k_n}(Z_{i_1}, \ldots, Z_{i_{k_n}}; \omega)) \neq \infty$, $\lim_{n \to \infty} \frac{n}{n^{k_n}} \to 0$ and $\lim_{n \to \infty}\text{Var}(\sqrt{n}V^*_{n, k_n, \omega}) > 0$.

The incomplete case follows exactly as in \cite{MR3491120}.
\end{proof}

\section{Additional Simulation Results}\label{app:add}

Simulations in this section are based on a simple setting where $X \sim 20 \times \text{unif}(0, 1)$ and $Y = 2X + \mathcal{N}(0, 1)$. The number of training observations $n = 500$. The model is an ensemble of decision trees.

Figure \ref{fig:ve_250} and \ref{fig:ve_400} displays variance estimation by IM, BM and IJ for kernel size $k_n = 250$ and $400$. 

\begin{figure}[H]
\centering
\begin{subfigure}{\textwidth}
\includegraphics[width=\textwidth]{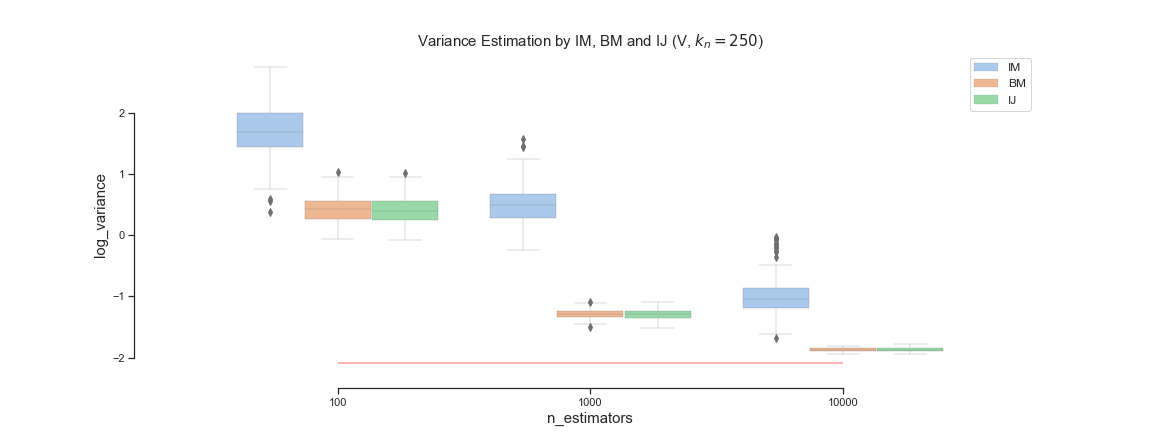}
\caption{Subsampling with replacement ($V$-statistics).}
\label{fig:ve_v_250}
\end{subfigure}
\begin{subfigure}{\textwidth}
\includegraphics[width=\textwidth]{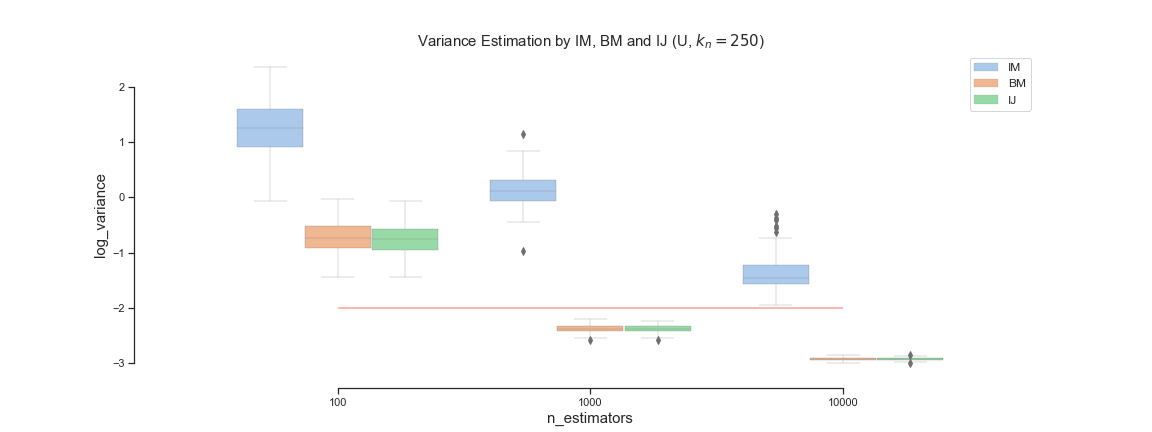}
\caption{Subsampling without replacement ($U$-statistics).}
\label{fig:ve_u_250}
\end{subfigure}

\caption{Variance estimation by three different methods: IM, BM and IJ. The kernel size $k_n = 250$. The variance shown is for prediction at test point $x = 10$. The red line denotes true (log) variance obtained by generating data, training the ensemble 100 times and calculating the empirical variance of predictions.}
\label{fig:ve_250}
\end{figure}

\begin{figure}[H]
\centering
\begin{subfigure}{\textwidth}
\includegraphics[width=\textwidth]{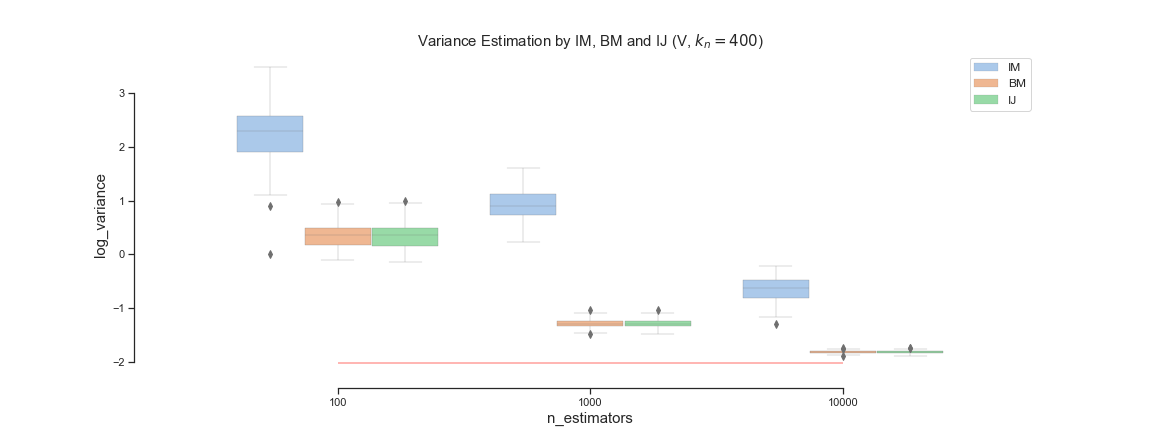}
\caption{Subsampling with replacement ($V$-statistics).}
\label{fig:ve_v_400}
\end{subfigure}
\begin{subfigure}{\textwidth}
\includegraphics[width=\textwidth]{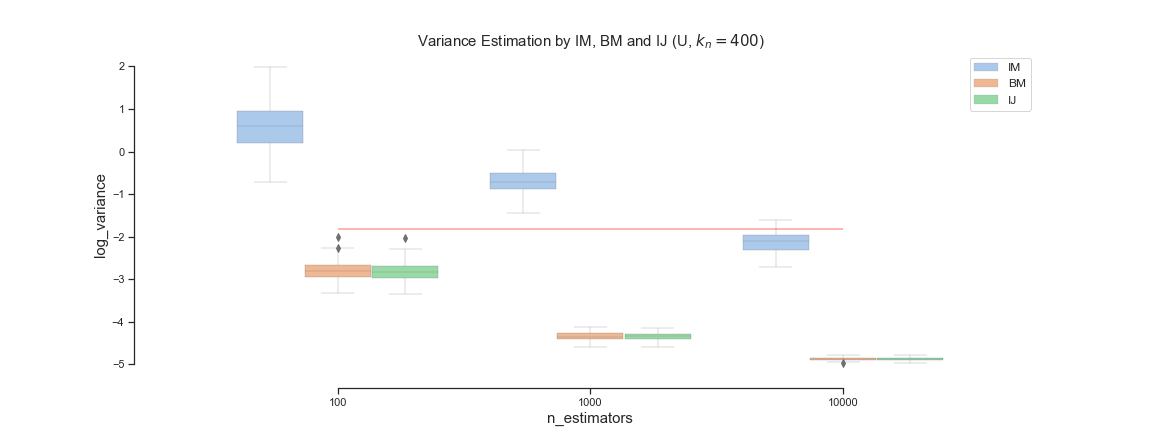}
\caption{Subsampling without replacement ($U$-statistics).}
\label{fig:ve_u_400}
\end{subfigure}

\caption{Variance estimation by three different methods: IM, BM and IJ. The kernel size $k_n = 400$. The variance shown is for prediction at test point $x = 10$. The red line denotes true (log) variance obtained by generating data, training the ensemble 100 times and calculating the empirical variance of predictions.}
\label{fig:ve_400}
\end{figure}

Figure \ref{fig:zeta1} and Figure \ref{fig:zetan} shows the estimated values for each variance components $\zeta_{1, k_n}$ and $\zeta_{k_n, k_n}$. Since IJ does not target at $\zeta_{1, k_n}$ directly, we rescaled the estimated by a factor $\frac{k_n^2}{n}$ according to Theorem \ref{thm:bm2ij}. The estimators for $\zeta_{k_n, k_n}$ for the three methods shown are essentially the same as they are all calculating the variance across all base learners' predictions.  

\begin{figure}[H]
\centering
\begin{subfigure}{\textwidth}
\includegraphics[width=\textwidth]{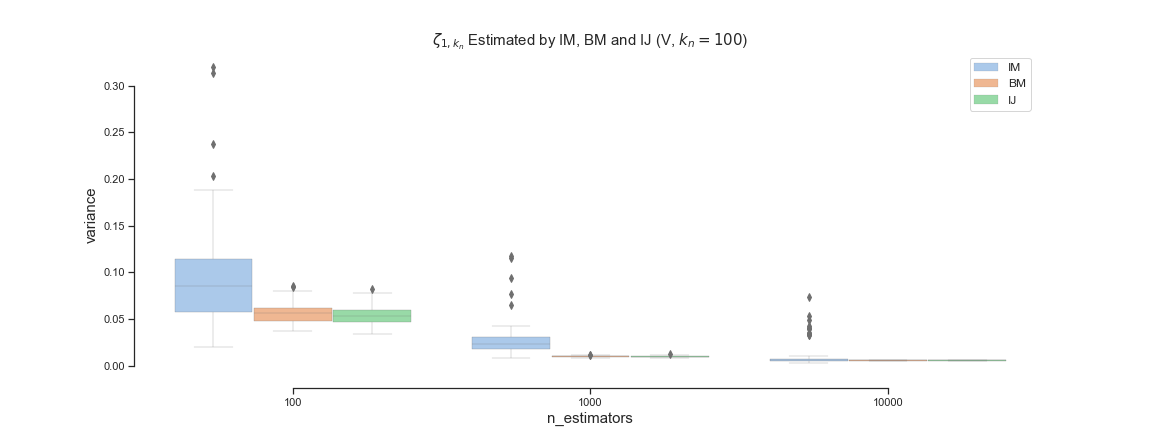}
\caption{Subsampling with replacement ($V$-statistics).}
\label{fig:ve_v_zeta1}
\end{subfigure}
\begin{subfigure}{\textwidth}
\includegraphics[width=\textwidth]{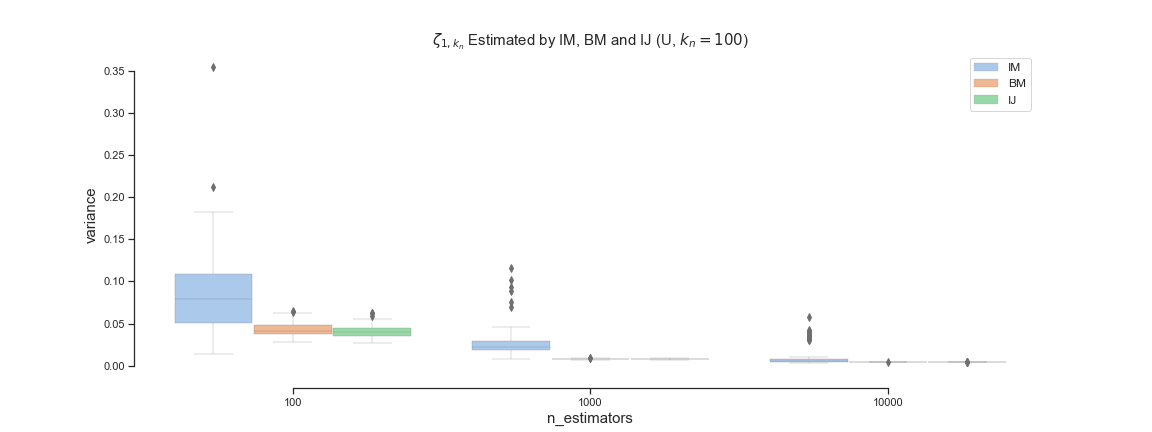}
\caption{Subsampling without replacement ($U$-statistics).}
\label{fig:ve_u_zeta1}
\end{subfigure}

\caption{$\zeta_{1,k_n}$ estimated by three different methods: IM, BM and IJ. The kernel size $k_n = 100$. The variance shown is for prediction at test point $x = 10$.} 
\label{fig:zeta1}
\end{figure}

\begin{figure}[H]
\centering
\begin{subfigure}{\textwidth}
\includegraphics[width=\textwidth]{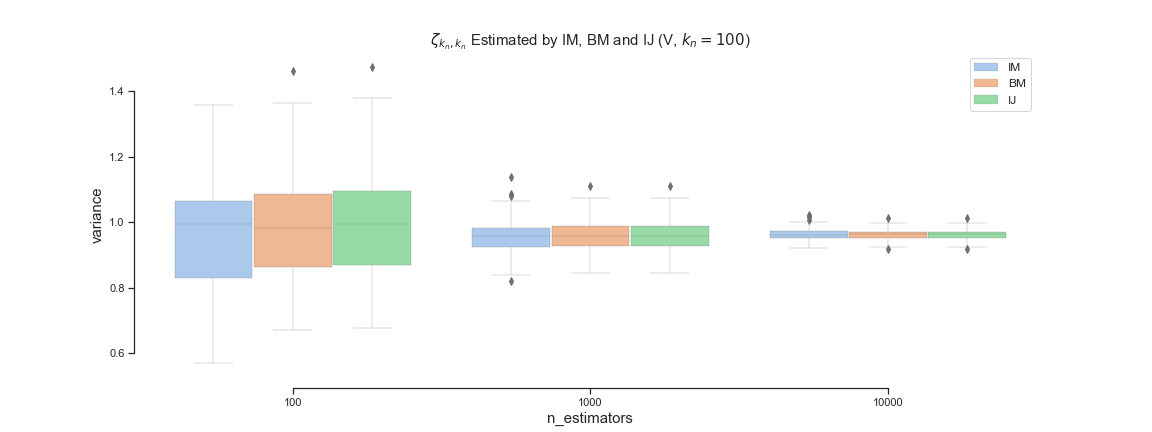}
\caption{Subsampling with replacement ($V$-statistics).}
\label{fig:ve_v_zetan}
\end{subfigure}
\begin{subfigure}{\textwidth}
\includegraphics[width=\textwidth]{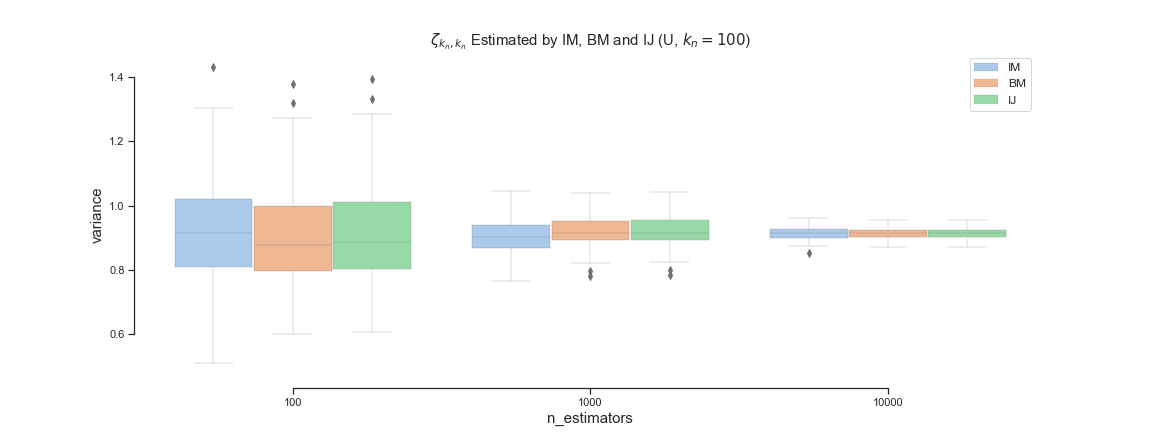}
\caption{Subsampling without replacement ($U$-statistics).}
\label{fig:ve_u_zetan}
\end{subfigure}

\caption{$\zeta_{k_n,k_n}$ estimated by three different methods: IM, BM and IJ. The kernel size $k_n = 100$. The variance shown is for prediction at test point $x = 10$.} 
\label{fig:zetan}
\end{figure}

\section{Derivations on Bias-corrected Estimator}\label{app:bc_dev}

Our goal is to provide an estimator of $\zeta_{1,k_n}$ based on expression given in (\ref{eqn:c_zeta1})
$$
    \zeta_{1,k_n} = \text{var} \left(\mathbb{E}\left(h_{k_n}\left(Z_1, \ldots, Z_{k_n}\right)|Z_1 = z_1\right)\right).
$$

To simplify notations, we introduce a more general mathematical representation. Consider a random variable $X$ and its conditional distribution given a random variable $Z$, denoted by $M = \mathbb{E}(X | Z)$. We want to estimate the variance of $\sigma_M^2 = \text{Var}(M)$.  Use $F_Z$ and $F_{X | Z}$ to denote the distribution for $Z$ and the conditional distribution $X$ given $Z$ respectively.

Consider the following sampling framework:
for $k = 1, \ldots, K$:
\begin{enumerate}
    \item Sample $Z_k$ randomly from $F_Z$.
    \item For $j = 1, \ldots, n_k$: Sample $X_{kj}$ randomly from $F_{X|Z = Z_k}$. 
\end{enumerate}
We'll use the collections of samples $X_{kj}$ $(k = 1, \ldots, K, j = 1, \ldots, n_k)$ to provide an estimator for $\sigma_M$. Define $C = \sum_{k = 1}^Kn_k$, $\sigma_{\epsilon}^2 = \mathbb{E}(\text{Var}(X|Z))$ and the following two sum of squares
$$
SS_{\tau} = \sum_{k = 1}^Kn_k(\bar{X}_k - \bar{\bar{X}})^2,
$$
$$
SS_{\epsilon} = \sum_{k=1}^K\sum_{j = 1}^{n_k}(X_{kj} - \bar{X}_k)^2
$$
where $\bar{\bar{X}} = \frac{1}{C}\sum_{k = 1}^Kn_k\bar{X}_k$, $\bar{X}_k = \frac{1}{n_k}\sum_{j = 1}^{n_k}X_{kj}$. Following the calculations in \cite{MR2844419}, we have
$$
\mathbb{E}(SS_{\tau}) = \left(C - \sum_{i = 1}^Kn_i^2/C\right)\sigma_M^2 + \left(K - 1\right)\sigma_{\epsilon}^2,
$$
$$
\mathbb{E}(SS_{\epsilon}) = (C - K)\sigma_{\epsilon}^2.
$$
Thus we can get the estimator for $\sigma_M^2$ as
$$
\hat{\sigma}_M^2 = \frac{SS_{\tau} - (K - 1)\hat{\sigma}_{\epsilon}^2}{C - \sum_{i = 1}^Kn_i^2 / C}
$$
where
$$
\hat{\sigma}_{\epsilon}^2 = \frac{SS_\epsilon}{C - K}.
$$
The unbiasedness of these estimators is shown by \cite{MR2298115}. By setting $Z = Z_1$ and $X = h_{k_n}\left(Z_1, \ldots, Z_{k_n}\right)$ gives the estimator presented in Section \ref{sec:bce}.

\section{An Alternative Version of Bias Correction}\label{app:simpler}

Our subsampling methods choose each data point with equal probability and we thus expect to obtain an approximately balanced subsample. For simplicity, our derivation assumes this structure holds exactly.  

Recall that we have $N_1 = N_2 = \ldots = N_i = r_n$, then 
$$
\hat{\sigma}_{\epsilon}^2 = \frac{SS_{\epsilon}}{C - n} = \frac{1}{n(r_n - 1)}\sum_{i=1}^n\sum_{b=1}^{B_n} N_{i, b}(h_b - m_i)^2,
$$
and 
$$
    \hat{\zeta}_{1,k_n} = \frac{1}{n-1}\sum_{i=1}^{n}(m_i - \bar{m})^2 - \frac{1}{r_n}\hat{\sigma}_{\epsilon}^2. 
$$

We could rewrite $\hat{\sigma}_{\epsilon}^2$ as

\begin{align*}
    \hat{\sigma}_{\epsilon}^2 &= \frac{1}{n(r_n - 1)}\sum_{i=1}^n\sum_{b=1}^{B_n} N_{i, b}(h_b - m_i)^2 \\
    &= \frac{1}{n(r_n - 1)}\sum_{i=1}^n\sum_{b=1}^{B_n} N_{i, b}(h_b - \bar{h} + \bar{h}- m_i)^2 \\
    &= \frac{1}{n(r_n - 1)} \sum_{i=1}^n\sum_{b=1}^{B_n}N_{i, b}(h_b - \bar{h})^2 + \frac{1}{n(r_n - 1)} \sum_{i=1}^n\sum_{b=1}^{B_n}N_{i, b}(\bar{h}- m_i)^2 \\
    &\quad + \frac{2}{n(r_n - 1)} \sum_{i=1}^n\sum_{b=1}^{B_n}N_{i, b}(h_b - \bar{h})(\bar{h}- m_i) \\
    &= \frac{1}{n(r_n - 1)} \sum_{b=1}^{B_n}(h_b - \bar{h})^2\sum_{i=1}^n N_{i, b} + \frac{1}{n(r_n - 1)} \sum_{i=1}^n (\bar{h}- m_i)^2  \sum_{b=1}^{B_n}N_{i, b} \\
    &\quad + \frac{2}{n(r_n - 1)} \sum_{i=1}^n (\bar{h}- m_i) \sum_{b=1}^{B_n}N_{i, b}(h_b - \bar{h}) \\
    &= \frac{1}{n(r_n - 1)} \sum_{b=1}^{B_n}(h_b - \bar{h})^2 k_n + \frac{1}{n(r_n - 1)} \sum_{i=1}^n (\bar{h}- m_i)^2  r_n \\
    &\quad + \frac{2}{n(r_n - 1)} \sum_{i=1}^n (\bar{h}- m_i) r_n(m_i - \bar{h}) \\
    &= \frac{k_n}{n(r_n - 1)} \sum_{b=1}^{B_n}(h_b - \bar{h})^2  - \frac{r_n}{n(r_n - 1)} \sum_{i=1}^n (\bar{h}- m_i)^2 \\
    &= \frac{k_n}{n(r_n - 1)} \sum_{b=1}^{B_n}(h_b - \bar{h})^2  - \frac{r_n}{n(r_n - 1)} \sum_{i=1}^n (m_i - \bar{m})^2. \\
\end{align*}

Plug this into the expression for $\hat{\zeta}_{1,k_n}$, we have

\begin{align*}
     \hat{\zeta}_{1,k_n} &= \frac{1}{n-1}\sum_{i=1}^{n}\left(m_i - \bar{m}\right)^2 - \frac{1}{r_n}\hat{\sigma}_{\epsilon}^2 \\
     &= \frac{1}{n-1}\sum_{i=1}^{n}\left(m_i - \bar{m}\right)^2 - \frac{1}{r_n}\left(\frac{k_n}{n(r_n - 1)} \sum_{b=1}^{B_n}\left(h_b - \bar{h}\right)^2  - \frac{r_n}{n(r_n - 1)} \sum_{i=1}^n \left(m_i - \bar{m}\right)^2\right) \\
     &= \left(\frac{1}{n - 1} - \frac{1}{n (r_n - 1)}\right)\sum_{i=1}^{n}\left(m_i - \bar{m}\right)^2 - \frac{k_n}{r_nn(r_n - 1)}\sum_{b=1}^{B_n}\left(h_b - \bar{h}\right)^2 \\
     &= \left(\frac{1}{n - 1} - \frac{1}{n (r_n - 1)}\right)\sum_{i=1}^{n}\left(m_i - \bar{m}\right)^2 - \frac{k_n}{r_nn(r_n - 1)}\left(B_n - 1\right)\hat{\zeta}_{k_n,k_n}^{\text{BM}} \\
     &\approx \frac{1}{n - 1}\sum_{i=1}^{n}\left(m_i - \bar{m}\right)^2 - \frac{1}{B_n}\frac{n}{k_n}\hat{\zeta}_{k_n,k_n}^{\text{BM}}.
\end{align*}

The approximation in the last line holds as long as $r_n$ grows with $n$, which is a reasonable assumption in most cases.

\section{Bias Correction for $U$-statistics}\label{app:u_cor}

Simulations in this section are based on a simple setting where $X \sim 20 \times \text{unif}(0, 1)$ and $Y = 2X + \mathcal{N}(0, 1)$. The number of training observations $n = 500$. The model is an ensemble of decision trees built under the framework of $U$-statistics: each tree is constructed using subsamples \emph{without} replacement.

Figure \ref{fig:u_wager} shows the result for $U$-statistics by employing the correction by \cite{MR3862353}.

\begin{figure}[H]
\centering
\begin{subfigure}{0.98\textwidth}
\includegraphics[width=\textwidth]{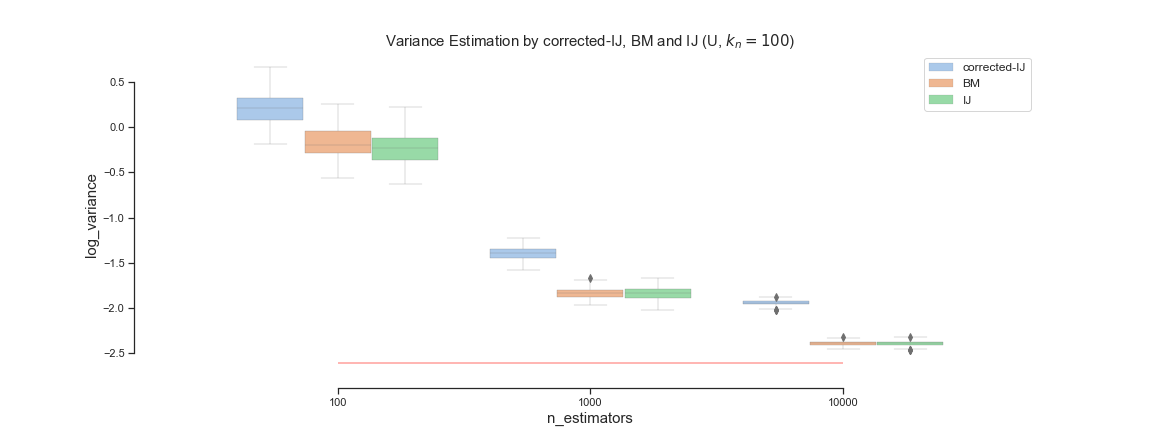}
\caption{$k_n = 100$.}
\label{fig:u_wager_100}
\end{subfigure}
\begin{subfigure}{\textwidth}
\includegraphics[width=\textwidth]{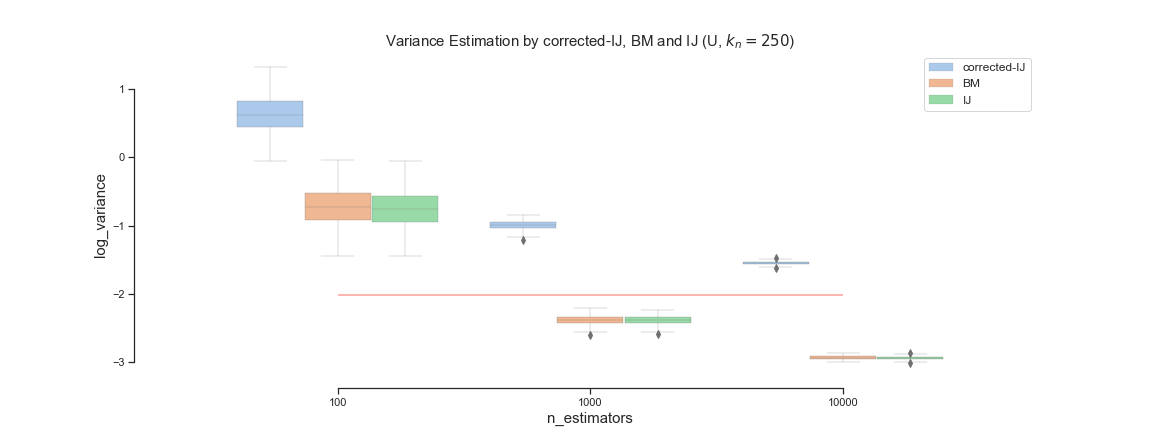}
\caption{$k_n = 250$.}
\label{fig:u_wager_250}
\end{subfigure}
\begin{subfigure}{\textwidth}
\includegraphics[width=\textwidth]{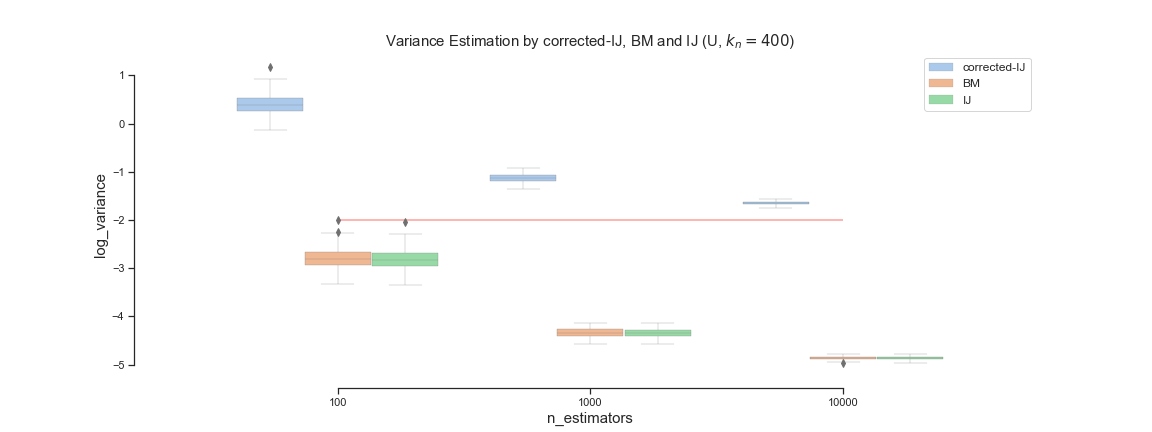}
\caption{$k_n = 400$.}
\label{fig:u_wager_400}
\end{subfigure}

\caption{Variance Estimation by three different methods: corrected-IJ, BM and IJ. The kernel size $k_n = 100, 250, 400$. The variance shown is for prediction at test point $x = 10$. The red line denotes true (log) variance obtained by generating data, training the ensemble 100 times and calculating the empirical variance of predictions.}
\label{fig:u_wager}
\end{figure}

Figure \ref{fig:ui_cor} shows the result of corrected-V developed in Section \ref{sec:bce} applied to $U$-statistics.

\begin{figure}[H]
    \centering
    \includegraphics[width=\textwidth]{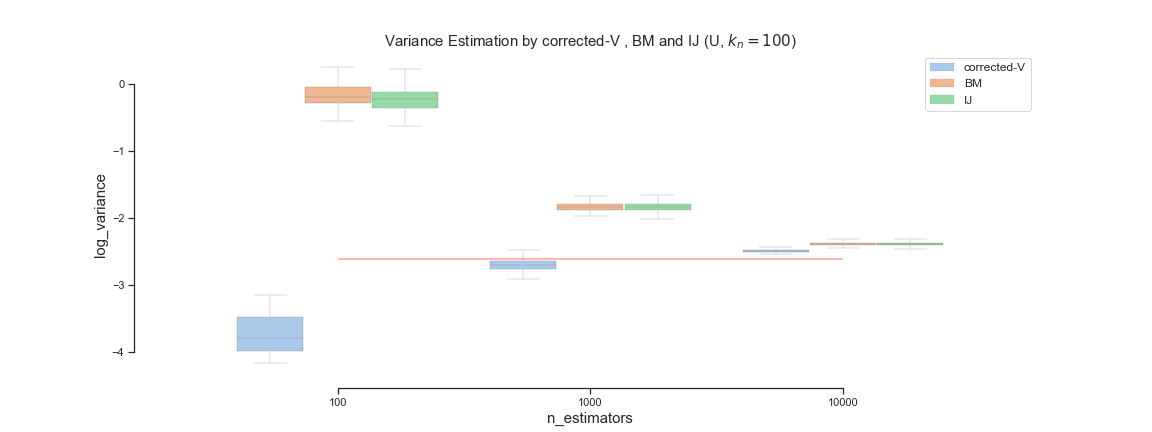}
    \caption{Variance estimation by three different methods: corrected-V, BM and IJ. The variance shown is for prediction at test point $x = 10$. The red line denotes true (log) variance obtained by generating data, training the ensemble 100 times and calculating the empirical variance of predictions.}
    \label{fig:ui_cor}
\end{figure}

For simplicity, we will use the simpler but approximate variance estimation described in Appendix \ref{app:simpler}
$$
\hat{\zeta}_{1,k_n} = \frac{1}{n - 1}\sum_{i=1}^{n}(m_i - \bar{m})^2 - \frac{1}{B_n}\frac{n}{k_n}\hat{\zeta}_{k_n,k_n}^{\text{BM}}.
$$
We find that if we scale the correction term by $\frac{n - k_n}{n}$, and include the correction term in \cite{MR3862353}, it works for $U$-statistics empirically. 
The estimator for $\zeta_{1,k_n}$ for $U$-statistics is
$$
\hat{\zeta}_{1,k_n}^U = \frac{n(n-1)}{(n - k_n)^2} \left( \frac{1}{n - 1}\sum_{i=1}^{n}(m_i - \bar{m})^2 - \frac{1}{B_n}\frac{n - k_n}{k_n}\hat{\zeta}_{k_n,k_n}^{\text{BM}}\right).
$$
The blue bars (denoted by corrected-U) in Figure \ref{fig:u_combined} shows the result. We can see that by combining both correction terms, the estimator yields stable and accurate variance estimation.

\begin{figure}[H]
\centering
\begin{subfigure}{\textwidth}
\includegraphics[width=\textwidth]{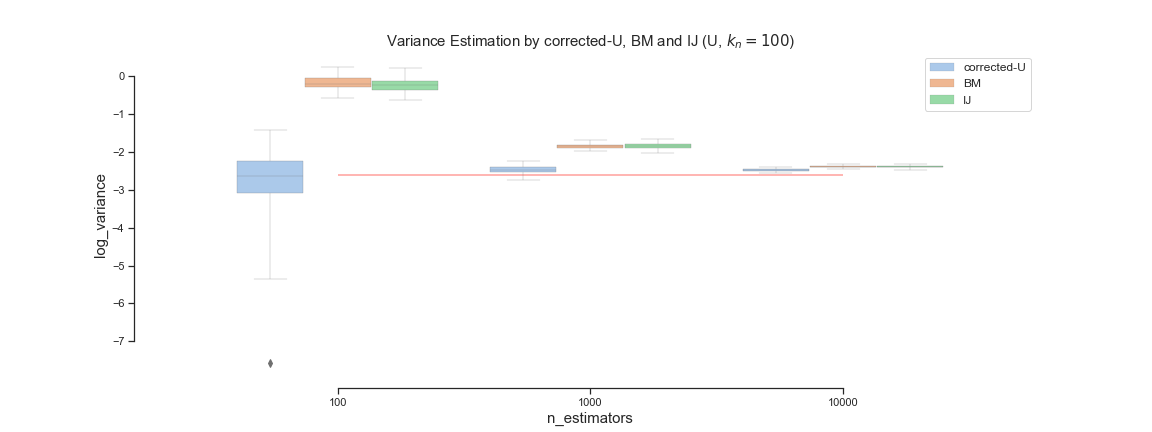}
\caption{$k_n = 100$.}
\label{fig:u_combined_100}
\end{subfigure}
\begin{subfigure}{\textwidth}
\includegraphics[width=\textwidth]{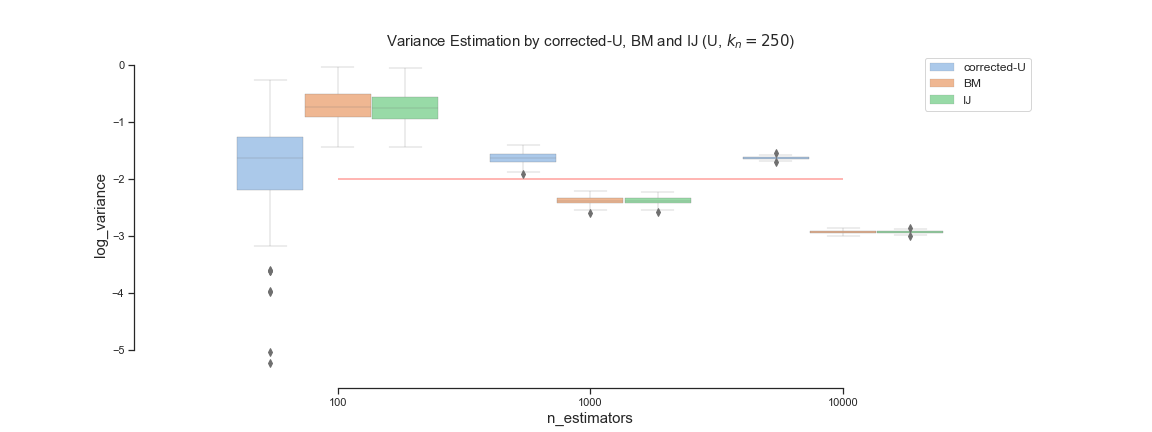}
\caption{$k_n = 250$.}
\label{fig:u_combined_250}
\end{subfigure}
\begin{subfigure}{\textwidth}
\includegraphics[width=\textwidth]{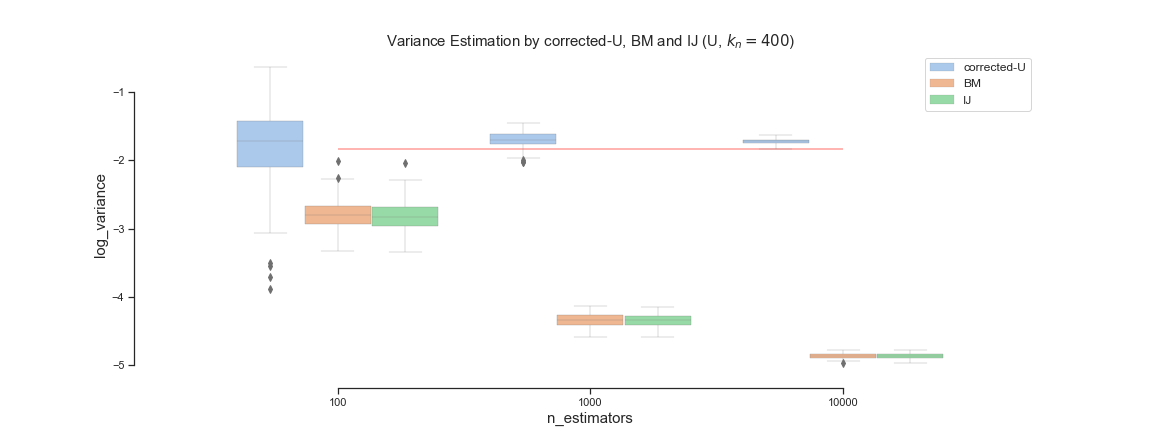}
\caption{$k_n = 400$.}
\label{fig:u_combined_400}
\end{subfigure}

\caption{Variance Estimation by three different methods: corrected-U, BM and IJ. The kernel size $k_n = 100, 250, 400$. The variance shown is for prediction at test point $x = 10$. The red line denotes true (log) variance obtained by generating data, training the ensemble 100 times and calculating the empirical variance of predictions.}
\label{fig:u_combined}
\end{figure}

\section{A Closer Look at Variance Components}

A major difference between our work and \cite{MR3862353} is that we also take into account the effect of Monte Carlo effect brought by the number of base learners $B_n$. Thus our variance has two components, where the first part $\frac{k_{n}^{2}}{n} \zeta_{1,k_n}$ corresponds to the complete case and the second part $\frac{1}{B_n} \zeta_{k_n,k_n}$ is the additional Monte Carlo variance introduced due to the incomplete case. 

One can imagine that for smaller $B_n$, the second part of Monte Carlo variance might be much larger than the first part, while as $B_n$ gets larger the effect diminishes and $\frac{k_{n}^{2}}{n} \zeta_{1,k_n}$ becomes the dominating one. We conduct an experiment to visualize this transition. As before, let $X \sim 20 \times \text{unif}(0, 1)$ and $Y = 2X + \mathcal{N}(0, 1)$. The model is an ensemble of decision trees built under the framework of $V$-statistics. We fix the number of training observations $n = 1000$ and kernel size $k_n = 10$ and build the ensembles with $B_n = 100, 200, 300, 400, 500 ,600, 700, 800, 900, 1000$. For each $B_n$, 100 models are built and we calculate empirical variance of the predictions at test point $x = 10$. This procedure is repeated 10 times and we report the average of empirical variance.

Figure \ref{fig:var_cp} shows four lines: the empirical variance; the two variance components $\frac{k_{n}^{2}}{n} \zeta_{1,k_n}$ and $\frac{1}{B_n} \zeta_{k_n,k_n}$; the total estimated variance which is simply the sum $\frac{k_{n}^{2}}{n} \zeta_{1,k_n} + \frac{1}{B_n} \zeta_{k_n,k_n}$. We estimate $\zeta_{1,k_n}$ and $\zeta_{k_n,k_n}$ using an ensemble of size $B_n = 1000$. The dotted black line aligns well with the black line, which indicates that our variance estimates give accurate results. For small $B_n = 100$, each observation is expected to only appear once in the ensemble (since $r_n = \frac{k_n \times B_n}{n} = 1$), and as a result base learners will be approximately independent. In this case, the variance of the ensemble prediction should mainly come from $\frac{1}{B_n} \zeta_{k_n,k_n}$. When $B_n$ grows, dependence between some base learners kicks in and the effect of $\frac{k_{n}^{2}}{n} \zeta_{1,k_n}$ gradually becomes the dominating part as $\frac{1}{B_n} \zeta_{k_n,k_n}$ decreases. This transition is depicted in Figure \ref{fig:var_cp}. Note that in practice the additional Monte Carlo variance introduced due to incomplete case is usually negligible as we would choose larger $k_n$ and $B_n$.

\begin{figure}[H]
    \centering
    \includegraphics[width=\textwidth]{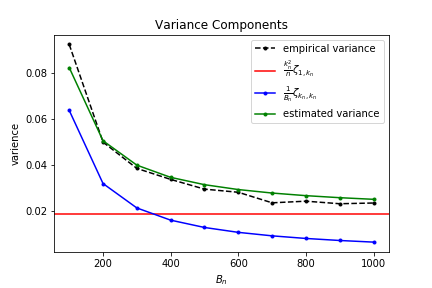}
    \caption{Variance components for different $B_n$. The number of training observations $n = 1000$ and kernel size $k_n = 10$. The variance shown is for prediction at test point $x = 10$. Four lines shown are: empirical variance, two variance components ($\frac{k_{n}^{2}}{n} \zeta_{1,k_n}$ and $\frac{1}{B_n} \zeta_{k_n,k_n}$) and their sum as estimated variance.}
    \label{fig:var_cp}
\end{figure}

\section{Datasets Information}\label{app:datasets}

Six of the seven datasets in Table \ref{table:pred} are taken from UCI Machine Learning Repository\footnote{\href{https://archive.ics.uci.edu/ml/index.php}{https://archive.ics.uci.edu/ml/index.php}}:

\begin{itemize}
    \item boston: \href{https://archive.ics.uci.edu/ml/machine-learning-databases/housing/}{https://archive.ics.uci.edu/ml/machine-learning-databases/housing/}. The dataset contains information collected by the U.S Census Service concerning housing in the area of Boston Mass. This is a regression task to predict median value of owner-occupied homes in \$1000's.
    \item diabetes: \href{https://archive.ics.uci.edu/ml/datasets/diabetes}{https://archive.ics.uci.edu/ml/datasets/diabetes}. The attributes are diabetes patient records and the target is an integer between 25 and 346. We simply cast it as a regression problem. 
    \item iris: \href{https://archive.ics.uci.edu/ml/datasets/Iris}{https://archive.ics.uci.edu/ml/datasets/Iris} This is a classification problem. The dataset contains 3 classes of 50 instances each, where each class refers to a type of iris plant.
    \item digits: \href{https://archive.ics.uci.edu/ml/datasets/optical+recognition+of+handwritten+digits}{https://archive.ics.uci.edu/ml/datasets/optical+recognition+of+handwritten+digits}. A classification task to predict integers from 0 to 9 with 64 attributes. 
    \item retinopathy: \href{https://archive.ics.uci.edu/ml/datasets/Diabetic+Retinopathy+Debrecen+Data+Set}{https://archive.ics.uci.edu/ml/datasets/Diabetic+Retinopathy+Debrecen+Data+Set}. This dataset contains features extracted from the Messidor image set to predict whether an image contains signs of diabetic retinopathy or not. 
    \item breast\_cancer: \href{https://archive.ics.uci.edu/ml/datasets/Breast+Cancer+Wisconsin+(Diagnostic)}{https://archive.ics.uci.edu/ml/datasets/Breast+Cancer+Wisconsin+(Diagnostic)}. This is a classification task to predict whether the diagnosis is malignant or benign based on features computed from a digitized image of a fine needle aspirate (FNA) of a breast mass.
\end{itemize}

\end{appendix}

\section*{Acknowledgements}
All three authors were partially supported by NSF grant DMS-1712554.

\bibliographystyle{imsart-number} 
\bibliography{bibliography}       

\begin{thebibliography}{36}

\bibitem{MR3909963}
\begin{barticle}[author]
\bauthor{\bsnm{Athey},~\bfnm{Susan}\binits{S.}},
  \bauthor{\bsnm{Tibshirani},~\bfnm{Julie}\binits{J.}} \AND
  \bauthor{\bsnm{Wager},~\bfnm{Stefan}\binits{S.}}
(\byear{2019}).
\btitle{Generalized random forests}.
\bjournal{Ann. Statist.}
\bvolume{47}
\bpages{1148--1178}.
\bdoi{10.1214/18-AOS1709}
\bmrnumber{3909963}
\end{barticle}
\endbibitem

\bibitem{MR2447310}
\begin{barticle}[author]
\bauthor{\bsnm{Biau},~\bfnm{G\'{e}rard}\binits{G.}},
  \bauthor{\bsnm{Devroye},~\bfnm{Luc}\binits{L.}} \AND
  \bauthor{\bsnm{Lugosi},~\bfnm{G\'{a}bor}\binits{G.}}
(\byear{2008}).
\btitle{Consistency of random forests and other averaging classifiers}.
\bjournal{J. Mach. Learn. Res.}
\bvolume{9}
\bpages{2015--2033}.
\bmrnumber{2447310}
\end{barticle}
\endbibitem

\bibitem{breiman1996bagging}
\begin{barticle}[author]
\bauthor{\bsnm{Breiman},~\bfnm{Leo}\binits{L.}}
(\byear{1996}).
\btitle{Bagging predictors}.
\bjournal{Machine learning}
\bvolume{24}
\bpages{123--140}.
\end{barticle}
\endbibitem

\bibitem{breiman2001random}
\begin{barticle}[author]
\bauthor{\bsnm{Breiman},~\bfnm{Leo}\binits{L.}}
(\byear{2001}).
\btitle{Random forests}.
\bjournal{Machine learning}
\bvolume{45}
\bpages{5--32}.
\end{barticle}
\endbibitem

\bibitem{cui2017consistency}
\begin{barticle}[author]
\bauthor{\bsnm{Cui},~\bfnm{Yifan}\binits{Y.}},
  \bauthor{\bsnm{Zhu},~\bfnm{Ruoqing}\binits{R.}},
  \bauthor{\bsnm{Zhou},~\bfnm{Mai}\binits{M.}} \AND
  \bauthor{\bsnm{Kosorok},~\bfnm{Michael}\binits{M.}}
(\byear{2017}).
\btitle{Consistency of survival tree and forest models: splitting bias and
  correction}.
\bjournal{arXiv preprint arXiv:1707.09631}.
\end{barticle}
\endbibitem

\bibitem{MR339372}
\begin{barticle}[author]
\bauthor{\bsnm{D'Agostino},~\bfnm{Ralph}\binits{R.}} \AND
  \bauthor{\bsnm{Pearson},~\bfnm{E.~S.}\binits{E.~S.}}
(\byear{1973}).
\btitle{Tests for departure from normality. {E}mpirical results for the
  distributions of {$b\sb{2}$} and {$\surd b\sb{1}$}}.
\bjournal{Biometrika}
\bvolume{60}
\bpages{613--622}.
\bdoi{10.1093/biomet/60.3.613}
\bmrnumber{339372}
\end{barticle}
\endbibitem

\bibitem{MR323010}
\begin{barticle}[author]
\bauthor{\bsnm{D'Agostino},~\bfnm{Ralph~B.}\binits{R.~B.}}
(\byear{1971}).
\btitle{An omnibus test of normality for moderate and large size samples}.
\bjournal{Biometrika}
\bvolume{58}
\bpages{341--348}.
\bdoi{10.1093/biomet/58.2.341}
\bmrnumber{323010}
\end{barticle}
\endbibitem

\bibitem{MR659849}
\begin{bbook}[author]
\bauthor{\bsnm{Efron},~\bfnm{Bradley}\binits{B.}}
(\byear{1982}).
\btitle{The jackknife, the bootstrap and other resampling plans}.
\bseries{CBMS-NSF Regional Conference Series in Applied Mathematics}
\bvolume{38}.
\bpublisher{Society for Industrial and Applied Mathematics (SIAM),
  Philadelphia, Pa.}
\bmrnumber{659849}
\end{bbook}
\endbibitem

\bibitem{MR1157715}
\begin{barticle}[author]
\bauthor{\bsnm{Efron},~\bfnm{Bradley}\binits{B.}}
(\byear{1992}).
\btitle{Jackknife-after-bootstrap standard errors and influence functions}.
\bjournal{J. Roy. Statist. Soc. Ser. B}
\bvolume{54}
\bpages{83--127}.
\bmrnumber{1157715}
\end{barticle}
\endbibitem

\bibitem{MR3265671}
\begin{barticle}[author]
\bauthor{\bsnm{Efron},~\bfnm{Bradley}\binits{B.}}
(\byear{2014}).
\btitle{Estimation and accuracy after model selection}.
\bjournal{J. Amer. Statist. Assoc.}
\bvolume{109}
\bpages{991--1007}.
\bdoi{10.1080/01621459.2013.823775}
\bmrnumber{3265671}
\end{barticle}
\endbibitem

\bibitem{MR1003967}
\begin{barticle}[author]
\bauthor{\bsnm{Frees},~\bfnm{Edward~W.}\binits{E.~W.}}
(\byear{1989}).
\btitle{Infinite order {$U$}-statistics}.
\bjournal{Scand. J. Statist.}
\bvolume{16}
\bpages{29--45}.
\bmrnumber{1003967}
\end{barticle}
\endbibitem

\bibitem{MR1091842}
\begin{barticle}[author]
\bauthor{\bsnm{Friedman},~\bfnm{Jerome~H.}\binits{J.~H.}}
(\byear{1991}).
\btitle{Multivariate adaptive regression splines}.
\bjournal{Ann. Statist.}
\bvolume{19}
\bpages{1--141}.
\bnote{With discussion and a rejoinder by the author}.
\bdoi{10.1214/aos/1176347963}
\bmrnumber{1091842}
\end{barticle}
\endbibitem

\bibitem{MR3600230}
\begin{barticle}[author]
\bauthor{\bsnm{Goda},~\bfnm{Takashi}\binits{T.}}
(\byear{2017}).
\btitle{Computing the variance of a conditional expectation via non-nested
  {M}onte {C}arlo}.
\bjournal{Oper. Res. Lett.}
\bvolume{45}
\bpages{63--67}.
\bdoi{10.1016/j.orl.2016.12.002}
\bmrnumber{3600230}
\end{barticle}
\endbibitem

\bibitem{MR222988}
\begin{barticle}[author]
\bauthor{\bsnm{H\'{a}jek},~\bfnm{Jaroslav}\binits{J.}}
(\byear{1968}).
\btitle{Asymptotic normality of simple linear rank statistics under
  alternatives}.
\bjournal{Ann. Math. Statist.}
\bvolume{39}
\bpages{325--346}.
\bdoi{10.1214/aoms/1177698394}
\bmrnumber{222988}
\end{barticle}
\endbibitem

\bibitem{MR15746}
\begin{barticle}[author]
\bauthor{\bsnm{Halmos},~\bfnm{Paul~R.}\binits{P.~R.}}
(\byear{1946}).
\btitle{The theory of unbiased estimation}.
\bjournal{Ann. Math. Statistics}
\bvolume{17}
\bpages{34--43}.
\bdoi{10.1214/aoms/1177731020}
\bmrnumber{15746}
\end{barticle}
\endbibitem

\bibitem{MR26294}
\begin{barticle}[author]
\bauthor{\bsnm{Hoeffding},~\bfnm{Wassily}\binits{W.}}
(\byear{1948}).
\btitle{A class of statistics with asymptotically normal distribution}.
\bjournal{Ann. Math. Statistics}
\bvolume{19}
\bpages{293--325}.
\bdoi{10.1214/aoms/1177730196}
\bmrnumber{26294}
\end{barticle}
\endbibitem

\bibitem{MR2651045}
\begin{barticle}[author]
\bauthor{\bsnm{Ishwaran},~\bfnm{Hemant}\binits{H.}} \AND
  \bauthor{\bsnm{Kogalur},~\bfnm{Udaya~B.}\binits{U.~B.}}
(\byear{2010}).
\btitle{Consistency of random survival forests}.
\bjournal{Statist. Probab. Lett.}
\bvolume{80}
\bpages{1056--1064}.
\bdoi{10.1016/j.spl.2010.02.020}
\bmrnumber{2651045}
\end{barticle}
\endbibitem

\bibitem{jaeckel1972infinitesimal}
\begin{bbook}[author]
\bauthor{\bsnm{Jaeckel},~\bfnm{Louis~A}\binits{L.~A.}}
(\byear{1972}).
\btitle{The infinitesimal jackknife}.
\bpublisher{Bell Telephone Laboratories}.
\end{bbook}
\endbibitem

\bibitem{lee1990u}
\begin{barticle}[author]
\bauthor{\bsnm{Lee},~\bfnm{Justin}\binits{J.}}
(\byear{1990}).
\btitle{U-statistics: Theory and Practice}.
\end{barticle}
\endbibitem

\bibitem{MR3491120}
\begin{barticle}[author]
\bauthor{\bsnm{Mentch},~\bfnm{Lucas}\binits{L.}} \AND
  \bauthor{\bsnm{Hooker},~\bfnm{Giles}\binits{G.}}
(\byear{2016}).
\btitle{Quantifying uncertainty in random forests via confidence intervals and
  hypothesis tests}.
\bjournal{J. Mach. Learn. Res.}
\bvolume{17}
\bpages{Paper No. 26, 41}.
\bmrnumber{3491120}
\end{barticle}
\endbibitem

\bibitem{peng2019asymptotic}
\begin{barticle}[author]
\bauthor{\bsnm{Peng},~\bfnm{Wei}\binits{W.}},
  \bauthor{\bsnm{Coleman},~\bfnm{Tim}\binits{T.}} \AND
  \bauthor{\bsnm{Mentch},~\bfnm{Lucas}\binits{L.}}
(\byear{2019}).
\btitle{Asymptotic Distributions and Rates of Convergence for Random Forests
  and other Resampled Ensemble Learners}.
\bjournal{arXiv preprint arXiv:1905.10651}.
\end{barticle}
\endbibitem

\bibitem{MR241310}
\begin{barticle}[author]
\bauthor{\bsnm{Rennie},~\bfnm{B.~C.}\binits{B.~C.}} \AND
  \bauthor{\bsnm{Dobson},~\bfnm{A.~J.}\binits{A.~J.}}
(\byear{1969}).
\btitle{On {S}tirling numbers of the second kind}.
\bjournal{J. Combinatorial Theory}
\bvolume{7}
\bpages{116--121}.
\bmrnumber{241310}
\end{barticle}
\endbibitem

\bibitem{MR3357876}
\begin{barticle}[author]
\bauthor{\bsnm{Scornet},~\bfnm{Erwan}\binits{E.}},
  \bauthor{\bsnm{Biau},~\bfnm{G\'{e}rard}\binits{G.}} \AND
  \bauthor{\bsnm{Vert},~\bfnm{Jean-Philippe}\binits{J.-P.}}
(\byear{2015}).
\btitle{Consistency of random forests}.
\bjournal{Ann. Statist.}
\bvolume{43}
\bpages{1716--1741}.
\bdoi{10.1214/15-AOS1321}
\bmrnumber{3357876}
\end{barticle}
\endbibitem

\bibitem{MR2298115}
\begin{bbook}[author]
\bauthor{\bsnm{Searle},~\bfnm{Shayle~R.}\binits{S.~R.}},
  \bauthor{\bsnm{Casella},~\bfnm{George}\binits{G.}} \AND
  \bauthor{\bsnm{McCulloch},~\bfnm{Charles~E.}\binits{C.~E.}}
(\byear{2006}).
\btitle{Variance components}.
\bseries{Wiley Series in Probability and Statistics}.
\bpublisher{Wiley-Interscience [John Wiley \& Sons], Hoboken, NJ}
\bnote{Reprint of the 1992 original, Wiley-Interscience Paperback Series}.
\bmrnumber{2298115}
\end{bbook}
\endbibitem

\bibitem{sen1992introduction}
\begin{bincollection}[author]
\bauthor{\bsnm{Sen},~\bfnm{PK}\binits{P.}}
(\byear{1992}).
\btitle{Introduction to Hoeffding (1948) A class of statistics with
  asymptotically normal distribution}.
In \bbooktitle{Breakthroughs in statistics}
\bpages{299--307}.
\bpublisher{Springer}.
\end{bincollection}
\endbibitem

\bibitem{MR595165}
\begin{bbook}[author]
\bauthor{\bsnm{Serfling},~\bfnm{Robert~J.}\binits{R.~J.}}
(\byear{1980}).
\btitle{Approximation theorems of mathematical statistics}.
\bpublisher{John Wiley \& Sons, Inc., New York}
\bnote{Wiley Series in Probability and Mathematical Statistics}.
\bmrnumber{595165}
\end{bbook}
\endbibitem

\bibitem{MR2654590}
\begin{barticle}[author]
\bauthor{\bsnm{Sexton},~\bfnm{Joseph}\binits{J.}} \AND
  \bauthor{\bsnm{Laake},~\bfnm{Petter}\binits{P.}}
(\byear{2009}).
\btitle{Standard errors for bagged and random forest estimators}.
\bjournal{Comput. Statist. Data Anal.}
\bvolume{53}
\bpages{801--811}.
\bdoi{10.1016/j.csda.2008.08.007}
\bmrnumber{2654590}
\end{barticle}
\endbibitem

\bibitem{MR1294807}
\begin{barticle}[author]
\bauthor{\bsnm{Shieh},~\bfnm{Grace~S.}\binits{G.~S.}}
(\byear{1994}).
\btitle{Infinite order {$V$}-statistics}.
\bjournal{Statist. Probab. Lett.}
\bvolume{20}
\bpages{75--80}.
\bdoi{10.1016/0167-7152(94)90237-2}
\bmrnumber{1294807}
\end{barticle}
\endbibitem

\bibitem{MR2743886}
\begin{bincollection}[author]
\bauthor{\bsnm{Staum},~\bfnm{Jeremy}\binits{J.}}
(\byear{2009}).
\btitle{Monte {C}arlo computation in finance}.
In \bbooktitle{Monte {C}arlo and quasi-{M}onte {C}arlo methods 2008}
\bpages{19--42}.
\bpublisher{Springer, Berlin}.
\bdoi{10.1007/978-3-642-04107-5_2}
\bmrnumber{2743886}
\end{bincollection}
\endbibitem

\bibitem{MR2844419}
\begin{barticle}[author]
\bauthor{\bsnm{Sun},~\bfnm{Yunpeng}\binits{Y.}},
  \bauthor{\bsnm{Apley},~\bfnm{Daniel~W.}\binits{D.~W.}} \AND
  \bauthor{\bsnm{Staum},~\bfnm{Jeremy}\binits{J.}}
(\byear{2011}).
\btitle{Efficient nested simulation for estimating the variance of a
  conditional expectation}.
\bjournal{Oper. Res.}
\bvolume{59}
\bpages{998--1007}.
\bdoi{10.1287/opre.1110.0932}
\bmrnumber{2844419}
\end{barticle}
\endbibitem

\bibitem{MR22330}
\begin{barticle}[author]
\bauthor{\bparticle{v.} \bsnm{Mises},~\bfnm{R.}\binits{R.}}
(\byear{1947}).
\btitle{On the asymptotic distribution of differentiable statistical
  functions}.
\bjournal{Ann. Math. Statistics}
\bvolume{18}
\bpages{309--348}.
\bdoi{10.1214/aoms/1177730385}
\bmrnumber{22330}
\end{barticle}
\endbibitem

\bibitem{MR3862353}
\begin{barticle}[author]
\bauthor{\bsnm{Wager},~\bfnm{Stefan}\binits{S.}} \AND
  \bauthor{\bsnm{Athey},~\bfnm{Susan}\binits{S.}}
(\byear{2018}).
\btitle{Estimation and inference of heterogeneous treatment effects using
  random forests}.
\bjournal{J. Amer. Statist. Assoc.}
\bvolume{113}
\bpages{1228--1242}.
\bdoi{10.1080/01621459.2017.1319839}
\bmrnumber{3862353}
\end{barticle}
\endbibitem

\bibitem{MR3225243}
\begin{barticle}[author]
\bauthor{\bsnm{Wager},~\bfnm{Stefan}\binits{S.}},
  \bauthor{\bsnm{Hastie},~\bfnm{Trevor}\binits{T.}} \AND
  \bauthor{\bsnm{Efron},~\bfnm{Bradley}\binits{B.}}
(\byear{2014}).
\btitle{Confidence intervals for random forests: the jackknife and the
  infinitesimal jackknife}.
\bjournal{J. Mach. Learn. Res.}
\bvolume{15}
\bpages{1625--1651}.
\bmrnumber{3225243}
\end{barticle}
\endbibitem

\bibitem{zhou2018approximation}
\begin{barticle}[author]
\bauthor{\bsnm{Zhou},~\bfnm{Yichen}\binits{Y.}},
  \bauthor{\bsnm{Zhou},~\bfnm{Zhengze}\binits{Z.}} \AND
  \bauthor{\bsnm{Hooker},~\bfnm{Giles}\binits{G.}}
(\byear{2018}).
\btitle{Approximation trees: Statistical stability in model distillation}.
\bjournal{arXiv preprint arXiv:1808.07573}.
\end{barticle}
\endbibitem

\bibitem{MR3184068}
\begin{bbook}[author]
\bauthor{\bsnm{Zhou},~\bfnm{Zhi-Hua}\binits{Z.-H.}}
(\byear{2012}).
\btitle{Ensemble methods}.
\bseries{Chapman \& Hall/CRC Machine Learning \& Pattern Recognition Series}.
\bpublisher{CRC Press, Boca Raton, FL}
\bnote{Foundations and algorithms}.
\bmrnumber{3184068}
\end{bbook}
\endbibitem

\bibitem{zouaoui2003accounting}
\begin{barticle}[author]
\bauthor{\bsnm{Zouaoui},~\bfnm{Faker}\binits{F.}} \AND
  \bauthor{\bsnm{Wilson},~\bfnm{James~R}\binits{J.~R.}}
(\byear{2003}).
\btitle{Accounting for parameter uncertainty in simulation input modeling}.
\bjournal{Iie Transactions}
\bvolume{35}
\bpages{781--792}.
\end{barticle}
\endbibitem

\end{thebibliography}





\end{document}